\documentclass[nohyperref]{article}
\usepackage{mathrsfs,amsmath,amsfonts,amssymb,comment}
\usepackage{mathtools,xcolor,enumerate}
\usepackage{dsfont}
\usepackage{bm}
\usepackage{ifthen}
\usepackage{tabularx,multirow,array,booktabs}
\usepackage{colortbl}
\usepackage[pdftex]{graphicx}
\graphicspath{{figures/}}%
\usepackage{float}
\usepackage{subcaption} 
\captionsetup{subrefformat=parens} 
\newcommand{\R}{\ifmmode\mathbb{R}\else$\mathbb{R}$\fi}
\newcommand{\N}{\ifmmode\mathbb{N}\else$\mathbb{N}$\fi}
\newcommand{\Z}{\ifmmode\mathbb{Z}\else$\mathbb{Z}$\fi}
\newcommand{\Q}{\ifmmode\mathbb{Q}\else$\mathbb{Q}$\fi}
\newcommand{\A}{\ifmmode\mathbb{A}\else$\mathbb{A}$\fi}



\newcommand{\bmx}{{\bm{x}}}
\newcommand{\bmz}{{\bm{z}}}

\newcommand{\bmw}{{\bm{w}}}
\newcommand{\bme}{{\bm{e}}}
\newcommand{\bma}{{\bm{a}}}

\newcommand{\bmh}{{\bm{h}}}
\newcommand{\bmb}{{\bm{b}}}
\newcommand{\bmxi}{{\bm{\xi}}}
\newcommand{\bmpsi}{{\bm{\psi}}}

\newcommand{\bmbeta}{{\bm{\beta}}}
\newcommand{\bmu}{{\bm{u}}}
\newcommand{\bmv}{{\bm{v}}}
\newcommand{\bmy}{{\bm{y}}}
\newcommand{\bmg}{{\bm{g}}}
\newcommand{\bmG}{{\bm{G}}}

\newcommand{\bmF}{{\bm{F}}}
\newcommand{\bmtheta}{{\bm{\theta}}}
\newcommand{\bmTheta}{{\bm{\Theta}}}

\newcommand{\bmA}{\bm{A}}
\newcommand{\bmphi}{{\bm{\phi}}}

\newcommand{\bmPhi}{{\bm{\Phi}}}
\newcommand{\bmzero}{{\bm{0}}}

\newcommand{\calO}{{\mathcal{O}}}
\newcommand{\calC}{{\mathcal{C}}}

\newcommand{\calS}{{\mathcal{S}}}
\newcommand{\calH}{{\mathcal{H}}}

\newcommand{\calA}{{\mathcal{A}}}

\newcommand{\calL}{{\mathcal{L}}}
\newcommand{\calT}{{\mathcal{T}}}


\newcommand{\hatN}{{\widehat{N}}}
\newcommand{\hatL}{{\widehat{L}}}

\newcommand{\hatd}{{\widehat{d}}}

\newcommand{\tildephi}{{\widetilde{\phi}}}

\newcommand{\tilded}{{\widetilde{d}}}

\newcommand{\tildex}{{\widetilde{x}}}
\newcommand{\tildey}{{\widetilde{y}}}

\newcommand{\tildeA}{{\widetilde{A}}}
\newcommand{\tildef}{{\widetilde{f}}}

\newcommand{\tildedelta}{{\widetilde{\delta}}}


\newcommand{\caltildeC}{{\mathcal{\widetilde{C}}}}
\newcommand{\bmtildeh}{{\bm{\widetilde{h}}}}
\newcommand{\bmtildeg}{{\bm{\widetilde{g}}}}
\newcommand{\bmhatg}{{\bm{\widehat{g}}}}
\newcommand{\bmcalG}{{\bm{\mathcal{G}}}}
\newcommand{\bmhatG}{{\bm{\widehat{G}}}}

\let\tildecalL\caltildeL
\newcommand{\hatcalL}{{\mathcal{\widehat{L}}}}
\newcommand{\bmcalL}{{\bm{\mathcal{L}}}}
\newcommand{\bmtildecalL}{{\bm{\mathcal{\widetilde{L}}}}}
\newcommand{\bmhatcalL}{{\bm{\mathcal{\widehat{L}}}}}

\def\one{{\ensuremath{\mathds{1}}}}
\newcommand{\middleValue}{{\tn{mid}}}

\newcommand{\nn}[5][]{\ensuremath{	{\hspace{0.6pt}\mathcal{N}\hspace{-1.9pt}\mathcal{N}\hspace{-0.725pt}}
	#1\{#2,\hspace{1.7pt} #3;\hspace{2.97pt} \R^{#4}\hspace{-1.0298pt}\to\hspace{-0.98pt}\R^{#5}#1\}
}}

\newcommand{\nnOneD}[5][]{\ensuremath{
		{\hspace{0.6pt}\mathcal{N}\hspace{-1.9pt}\mathcal{N}\hspace{-0.725pt}}
		#1\{#2,\hspace{1.7pt} #3;\hspace{2.97pt} {#4}\hspace{-1.0298pt}\to\hspace{-0.98pt}{#5}#1\}
}}

\newcommand{\bin}{\tn{bin}\hspace{1.5pt}}
\newcommand{\mystep}[2]{\par \vspace{0.25cm}\noindent\textbf{\hspace{8pt}Step }$#1\colon$ #2 \vspace{0.18cm} \par }

\usepackage{tikz}
\newcommand{\myto}[2][1]{\mathop{
		\vcenter{\hbox{\scalebox{1}[#1]{\tikz{\draw[->,line width=0.72pt] (0,0.5) to (0.69*#2,0.5);}}}}
}}





\let\epsilon\varepsilon
\let\eps\varepsilon

\let\tn\textnormal


\let\cdots\customcdots
\let\dotsc\cdots
\let\myforall\forall
\def\forall{{\myforall\ }}
\let\myexists\exists
\def\exists{{\myexists\ }}
\long\def\black#1{{\color{black}#1}}

\definecolor{mygray}{RGB}{230,230,230}






\newif\ifarXiv
\arXivtrue



\usepackage[accepted]{icml2023}

\usepackage{xurl}
\usepackage[bookmarks,colorlinks,breaklinks=true]{hyperref}
\definecolor{mydarkblue}{rgb}{0,0.08,0.45}
\hypersetup{ %
	pdftitle={On Enhancing Expressive Power via Compositions of Single Fixed-Size ReLU Network},
	pdfauthor={Shijun Zhang},
	pdfsubject={Proceedings of the International Conference on Machine Learning 2023},
	pdfkeywords={function composition, parameter sharing, deep neural network, dynamical system, function approximation},
	pdfborder=0 0 0,
	pdfpagemode=UseNone,
	colorlinks=true,
	linkcolor=mydarkblue,
	citecolor=mydarkblue,
	filecolor=mydarkblue,
	urlcolor=mydarkblue,
}

\makeatletter\ifarXiv
\def\ICML@appearing{\vspace*{-22.5pt}}
\fi\makeatother

\usepackage{amsmath}
\usepackage{amssymb}
\usepackage{mathtools}
\usepackage{amsthm}

\usepackage[capitalize,noabbrev]{cleveref}

\theoremstyle{plain}
\newtheorem{theorem}{Theorem}[section]
\newtheorem{proposition}[theorem]{Proposition}
\newtheorem{lemma}[theorem]{Lemma}
\newtheorem{corollary}[theorem]{Corollary}
\theoremstyle{definition}

\theoremstyle{remark}

\usepackage[textsize=tiny]{todonotes}

\icmltitlerunning{On Enhancing Expressive Power via Compositions of Single Fixed-Size ReLU Network}

\begin{document}
\makeatletter
\def\infigenv#1{\ifthenelse{\equal{\@currenvir}{figure}}{#1}{}}
\def\intabenv#1{\ifthenelse{\equal{\@currenvir}{table}}{#1}{}}
\makeatother

\let\mycaption\caption
\def\caption#1{
	\infigenv{\vskip -6.6pt} 
	\mycaption{#1}
	\intabenv{\vskip  5pt} 
	}

\captionsetup[subfigure]{aboveskip=2.1pt}


\let\myfigure\figure
\def\figure{\vskip -1pt\myfigure}
\let\myendfigure\endfigure
\def\endfigure{\myendfigure \vskip -1pt}

\let\mytable\table
\def\table{\vskip -2pt\mytable}
\let\myendtable\endtable
\def\endtable{\myendtable \vskip 4pt}


\twocolumn[
\icmltitle{On Enhancing Expressive Power via\\ Compositions of Single Fixed-Size ReLU Network}



\icmlsetsymbol{equal}{*}

\begin{icmlauthorlist}
\icmlauthor{Shijun Zhang}{duke}
\icmlauthor{Jianfeng Lu}{duke}
\icmlauthor{Hongkai Zhao}{duke}
\end{icmlauthorlist}

\icmlaffiliation{duke}{Department of Mathematics, Duke University, USA}

\icmlcorrespondingauthor{Shijun Zhang}{shijun.zhang@duke.edu}

\icmlkeywords{function composition, parameter sharing, deep neural network, dynamical system, function approximation}

\vskip 0.3in
]



\printAffiliationsAndNotice{}

\begin{abstract}
This paper explores the expressive power of deep neural networks through the framework of function compositions. We demonstrate that the repeated compositions of a single fixed-size ReLU network exhibit surprising expressive power, despite the limited expressive capabilities of the individual network itself. Specifically, we prove by construction that $\mathcal{L}_2\circ \bm{g}^{\circ r}\circ \bm{\mathcal{L}}_1$ can approximate $1$-Lipschitz continuous functions on $[0,1]^d$ with an error $\mathcal{O}(r^{-1/d})$, where $\bm{g}$ is realized by a fixed-size ReLU network, $\bm{\mathcal{L}}_1$ and $\mathcal{L}_2$ are two affine linear maps matching the dimensions, and $\bm{g}^{\circ r}$ denotes the $r$-times composition of $\bm{g}$. Furthermore, we extend such a result to generic continuous functions on $[0,1]^d$ with the approximation error characterized by the modulus of continuity. Our results reveal that a continuous-depth network generated via a dynamical system has immense approximation power even if its dynamics function is time-independent and realized by a fixed-size ReLU network.
\end{abstract}

\section{Introduction}
\label{sec:intro}

In recent years, 
there has been a notable increase in the exploration of the expressive power of deep neural networks,
driven by their 
impressive
success in various learning tasks. 
The increasing size of deep neural network models poses significant challenges in terms of training and computational requirements. 
Consequently,
numerous techniques have emerged to compress and expedite these models, 
with the goal of alleviating the associated computational complexity.
These techniques predominantly center around parameter-sharing schemes, which efficiently reduce the number of parameters, leading to reductions in memory usage and computation costs.


This paper explores the expressive power of deep neural networks, approaching it from the standpoint of function compositions. We focus on a novel network architecture constructed through the repeated compositions of a single fixed-size network, enabling parameter sharing. 
To illustrate our ideas and concepts, we specifically utilize the rectified linear unit (ReLU) activation function.
Our investigation reveals that
the repeated compositions of a single fixed-size ReLU network demonstrate surprising expressive power, even though the individual network itself has
limited expressive capabilities.
These findings provide new insights into the potential of parameter-sharing schemes in neural networks, showcasing their ability to reduce computational complexity while preserving a high level of expressive power.

For ease of notation, we employ $\nn{N}{L}{d_1}{d_2}$ to represent the set of functions $\bmphi:\R^{d_1}\to\R^{d_2}$ that can be
realized by ReLU networks of width $N\in \N^+$ and depth $L\in \N^+$. In our context, the width of a network means the maximum number of neurons in a hidden layer
and the depth refers to the number of hidden layers.
Let $\bm{g}^{\circ r}$ denote the $r$-times composition of $\bm{g}$, e.g., $\bmg^{\circ 3}=\bmg\circ \bmg\circ \bmg$. In the degenerate case, $\bmg^{\circ 0}$ 
represents
the identity map.
We use $C([0,1]^d)$ to denote the set of continuous functions on $[0,1]^d$ and define the modulus of continuity of a continuous function $f\in C([0,1]^d)$ via
{\fontsize{9.5}{12}\selectfont
\begin{equation*}
	\omega_f(t)\coloneqq \sup\big\{|f(\bmx)-f(\bmy)|: \|\bmx-\bmy\|_2\le t,\ \bmx,\bmy\in [0,1]^d\big\}
\end{equation*}}for any $t\ge0$.
Under these settings, we can construct 
$\mathcal{L}_2\circ \bm{g}^{\circ r}\circ \bm{\mathcal{L}}_1$ to approximate a continuous function $f\in C([0,1]^d)$ with an error $\mathcal{O}\big(\omega_f(r^{-1/d})\big)$, where $\bm{\mathcal{L}}_1$ and $\mathcal{L}_2$ are two affine linear maps and $\bm{g}$ is realized by a fixed-size ReLU network, as shown in the theorem below.

\begin{theorem}
	\label{thm:main:Lp}
	Given  any $f\in C([0,1]^d)$, $r\in \N^+$, and $p\in [1,\infty)$,
	there exist $\bmg\in \nn{69d+48}{5}{5d+5}{5d+5}$
	and two affine linear maps
	$\bmcalL_1:\R^d\to\R^{5d+5}$ and $\calL_2:\R^{5d+5}\to\R$
	such that 
	\begin{equation*}
		\big\|\calL_2\circ \bmg^{\circ (3r+1)}\circ \bmcalL_1-f\big\|_{L^p([0,1]^d)}\le 6\sqrt{d}\,\omega_f(r^{-1/d}).
	\end{equation*}
\end{theorem}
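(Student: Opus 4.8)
The plan is to realize $\calL_2\circ\bmg^{\circ(3r+1)}\circ\bmcalL_1$ as an iterated dynamical system on $\R^{5d+5}$ that simulates a simple table look-up for a piecewise-constant surrogate of $f$. First I would reduce to approximating a piecewise-constant function: partition $[0,1]^d$ into $K^d\le r$ congruent subcubes of side length of order $r^{-1/d}$ (with $K=\lfloor r^{1/d}\rfloor$), fix a representative point in each, and let $\hat f$ be constant on each subcube with the value of $f$ there. Since any two points of a subcube lie within distance $\le 2\sqrt d\,r^{-1/d}$, monotonicity together with $\omega_f(\lambda t)\le\lceil\lambda\rceil\,\omega_f(t)$ gives $\|\hat f-f\|_{L^\infty}\lesssim\sqrt d\,\omega_f(r^{-1/d})$, which is the source of the $\sqrt d$ factor in the target bound. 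After normalizing so that $\hat f$ takes values in $[0,1]$ — absorbing the shift and scale into $\calL_2$ and into the bias of $\bmcalL_1$ — it remains to reproduce $\hat f$ up to a small $L^p$ error.

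Next I would design the state and the single map $\bmg$. The $5d+5$ coordinates act as registers holding the frozen input $\bmx$, a running multi-index $\bm i\in\{0,\dotsc,K-1\}^d$ used as an odometer, an output accumulator, a counter, and a scalar $\theta$ that encodes the entire table of cube-values. The essential idea is that the $\approx r$ arbitrary values of $\hat f$ are packed into the single real constant $\theta$ — via a base-$b$ expansion with a fixed number of digits per value, chosen so the quantization error is $\lesssim\omega_f(r^{-1/d})$ — and $\theta$ is injected into the system through the bias of $\bmcalL_1$. One application of $\bmg$ then performs three elementary sub-tasks, which is exactly why $3r+1$ compositions appear: read off the current value as the leading digits of $\theta$ and shift $\theta$ to expose the next value; test whether $\bmx$ lies in the subcube indexed by $\bm i$ (a conjunction of $d$ one-dimensional interval tests, realized as a product of ReLU bumps); and add the current value to the accumulator when the test fires while incrementing the odometer $\bm i$. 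Three successive compositions handle the three sub-tasks over each of the $\le r$ scanning stages, plus one extra composition for initialization and read-out, accounting for $3r+1$; keeping every such gadget within width $69d+48$ and depth $5$ is the bookkeeping that fixes these constants. The map $\bmcalL_1$ lifts $\bmx$ and initializes $\bm i$, the counter, the accumulator and $\theta$, and after the full scan the accumulator holds the value of the unique matching cube, which $\calL_2$ rescales back to the range of $f$.

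The main obstacle — and the reason the statement is phrased in $L^p$ rather than $L^\infty$ — is that the digit-reading and odometer-increment steps intrinsically require the floor (step) function, which is discontinuous and hence not exactly representable by continuous ReLU maps. I would resolve this precisely as the $L^p$ setting permits: replace each floor by its continuous piecewise-linear surrogate, which agrees with the true value except on a thin neighborhood of its finitely many breakpoints. By choosing the encoding base $b$ so that successive digit-blocks are well separated, and by shrinking these bad neighborhoods, one forces the set of inputs on which any sub-task is computed incorrectly to have arbitrarily small Lebesgue measure; since the accumulator stays bounded, the $L^p$ contribution of this set is negligible against $\omega_f(r^{-1/d})$. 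The remaining work is to check that, off this small set, the iteration reproduces $\hat f$ exactly, and then to collect the two error sources — the piecewise-constant error $\lesssim\sqrt d\,\omega_f(r^{-1/d})$ and the quantization error $\lesssim\omega_f(r^{-1/d})$ — into the stated constant $6\sqrt d$. Designing the extraction gadget so that it is robust (correct on a set of nearly full measure) while still fitting inside the prescribed fixed width and depth is the delicate part; the grid reduction and the error accounting are routine once that gadget is in hand.
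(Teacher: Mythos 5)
Your overall architecture (grid of $K^d\le r$ cubes with $K=\lfloor r^{1/d}\rfloor$, a piecewise-constant surrogate, packing the table of cube values into a single real number injected through the bias, and pushing the unavoidable discontinuities into a set of small measure to get an $L^p$ bound) matches the paper's strategy in spirit, and your error accounting for the $6\sqrt d\,\omega_f(r^{-1/d})$ constant is essentially right. However, there is a genuine gap in the central gadget: the step ``read off the current value as the leading digits of $\theta$ and shift $\theta$ to expose the next value'' cannot be performed by one application (or any bounded number of applications) of a fixed-size ReLU network. To make the quantization error $\lesssim\omega_f(r^{-1/d})$ you need roughly $\omega_f(\sqrt d)/\omega_f(r^{-1/d})$ quantization levels (for Lipschitz $f$ this is of order $r^{1/d}$), so each digit block can take unboundedly many distinct values as $r$ grows. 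Extracting such a block and computing the shifted remainder is a piecewise-linear map with at least that many pieces, while a network of fixed width and depth realizes a CPwL function with a bounded number of pieces. Amortizing the extraction one base-$2$ digit at a time would cost $\Theta(\log r)$ compositions per cube, blowing the budget to $\Theta(r\log r)$ rather than $3r+1$.

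The paper avoids this by never extracting a full value per step. It quantizes $y_k$ to $a_k=\lfloor y_k/\varepsilon\rfloor$ with $\varepsilon=\omega_f(\sqrt d/K)$, observes that consecutive differences satisfy $a_k-a_{k-1}\in\{-1,0,1\}$, writes them as $c_k-d_k$ with $c_k,d_k\in\{0,1\}$, and then uses the bit-extraction technique so that each composition peels off exactly \emph{one} binary digit and adds a $\{0,1\}$ increment to a running accumulator; the many-valued quantized value emerges only as the cumulative sum. If you want to repair your proof you must replace ``store the values'' by ``store the increments of the (quantized) values'' in $\theta$, which is precisely Proposition~\ref{prop:point:fitting} and Lemma~\ref{lem:bit:extraction}. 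A secondary, smaller difference: rather than scanning all $r$ cubes and testing membership of $\bmx$ in each (your odometer/linear-search design), the paper computes the index $\bmbeta$ of the cube containing $\bmx$ directly by $d$ parallel iterated staircase networks (Proposition~\ref{prop:floor:approx}) and then does a one-dimensional table lookup on the linearized index; the two phases are then glued into a single repeated block by a clock-and-selector construction (Proposition~\ref{prop:two:blocks:three:affine}), which your ``one extra composition for initialization and read-out'' glosses over but which is needed to keep a single time-independent $\bmg$ of the stated width and depth.
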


It should be noted that in Theorem~\ref{thm:main:Lp}, the affine linear maps $\bmcalL_1$ and $\calL_2$ are used to ensure matching dimensions and can be replaced by various other functions that achieve the desired input and output dimensions. 
In our research, we choose a straightforward approach by considering them as affine linear maps.
In Theorem~\ref{thm:main:Lp}, we propose a novel network architecture constructed via repeated compositions of a single sub-network, which will be referred to as 
repeated-composition networks (RCNets).
The hypothesis space of the RCNet corresponding to $\bmg$ is defined as
\begin{equation*}
\begin{split}
         \calH(\bmg)\coloneqq \Big\{\calL_2\circ \bmg^{\circ r}\circ\bmcalL_1:\, r\in \N,\,
         \bmcalL_1 \tn{ and } \calL_2 \tn{ are affine}\Big\}.
\end{split}
\end{equation*}
Then we have an immediate corollary as follows.
\begin{corollary}\label{coro:dense:in:Lp}
    Given any  $p\in [1,\infty)$, suppose $\calH(\bmg)$ is defined as mentioned above and set $\bmcalG=\nn{69d+48}{5}{5d+5}{5d+5}$. Then $\calH=\cup_{\bmg\in\bmcalG}\calH(\bmg)$  is dense in $L^p([0,1]^d)$ in terms of the $L^p$-norm.
\end{corollary}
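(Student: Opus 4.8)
The plan is to derive the corollary directly from Theorem~\ref{thm:main:Lp} by combining it with the classical density of $C([0,1]^d)$ in $L^p([0,1]^d)$ and a two-step triangle-inequality argument. I would begin by fixing an arbitrary target $F\in L^p([0,1]^d)$ and an arbitrary tolerance $\eps>0$; the goal is to exhibit a single element of $\calH=\cup_{\bmg\in\bmcalG}\calH(\bmg)$ lying within $L^p$-distance $\eps$ of $F$. Since $p\in[1,\infty)$, continuous functions are dense in $L^p([0,1]^d)$, so I can first select $f\in C([0,1]^d)$ with $\|F-f\|_{L^p([0,1]^d)}<\eps/2$. This reduces the task to approximating the continuous function $f$ by an element of $\calH$, which is precisely the regime handled by Theorem~\ref{thm:main:Lp}.

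Next I would exploit that $f$, being continuous on the compact cube $[0,1]^d$, is uniformly continuous, and hence $\omega_f(t)\to 0$ as $t\to 0^{+}$. Consequently I can choose $r\in\N^+$ large enough that $6\sqrt{d}\,\omega_f(r^{-1/d})<\eps/2$. Applying Theorem~\ref{thm:main:Lp} with this $f$, this $r$, and the given $p$ produces $\bmg\in\bmcalG=\nn{69d+48}{5}{5d+5}{5d+5}$ together with affine maps $\bmcalL_1:\R^d\to\R^{5d+5}$ and $\calL_2:\R^{5d+5}\to\R$ satisfying $\big\|\calL_2\circ\bmg^{\circ(3r+1)}\circ\bmcalL_1-f\big\|_{L^p([0,1]^d)}\le 6\sqrt{d}\,\omega_f(r^{-1/d})<\eps/2$. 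Because $3r+1\in\N$, the function $\calL_2\circ\bmg^{\circ(3r+1)}\circ\bmcalL_1$ is by definition a member of $\calH(\bmg)$, and therefore of $\calH$.

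Combining the two estimates by the triangle inequality then yields $\big\|F-\calL_2\circ\bmg^{\circ(3r+1)}\circ\bmcalL_1\big\|_{L^p([0,1]^d)}<\eps$, and since $F$ and $\eps$ were arbitrary, $\calH$ is dense in $L^p([0,1]^d)$. I expect no genuine obstacle here: the argument is essentially bookkeeping once Theorem~\ref{thm:main:Lp} is in hand. The only two points deserving care are, first, recognizing that the theorem's composition exponent $3r+1$ is just some natural number and hence lands inside $\calH(\bmg)$, whose defining index ranges over all of $\N$; and second, invoking \emph{uniform} continuity rather than mere pointwise continuity so as to guarantee $\omega_f(r^{-1/d})\to 0$ as $r\to\infty$.
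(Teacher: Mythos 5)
Your proposal is correct and is exactly the paper's argument, just spelled out in more detail: the paper likewise combines Theorem~\ref{thm:main:Lp} (letting $r\to\infty$ so that $\omega_f(r^{-1/d})\to 0$ gives density of $\calH$ in $C([0,1]^d)$) with the classical density of $C([0,1]^d)$ in $L^p([0,1]^d)$ and the triangle inequality. No gaps; the two points you flag (the exponent $3r+1$ lying in $\N$, and uniform continuity ensuring $\omega_f(t)\to 0$) are handled implicitly in the paper's one-line proof.
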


The proof of Corollary~\ref{coro:dense:in:Lp} is straightforward.
Theorem~\ref{thm:main:Lp} implies $\calH$ is dense in $C([0,1]^d)$ in terms of the $L^p$-norm for any $p\in [1,\infty)$. 
Recall that $C([0,1]^d)$ is  dense in the Lebesgue spaces $L^p([0,1]^d)$ for any $p\in [1,\infty)$. Therefore, we can conclude that $\calH$ is dense in $L^p([0,1]^d)$ in terms of the $L^p$-norm for any $p\in [1,\infty)$.
Furthermore, it should be noted that the set $\bmcalG$ in Corollary~\ref{coro:dense:in:Lp} is generated by a fixed-size ReLU network. As a result, $\bmcalG$ is a set of continuous piecewise linear functions with (at most) a fixed number of pieces.

It is important to note that the approximation error in Theorem~\ref{thm:main:Lp} is quantified by the $L^p$-norm for any $p\in [1,\infty)$. However, it is possible to extend this result to the $L^\infty$-norm as well, although the associated constants will be significantly larger.

\begin{theorem}
	\label{thm:main:Linfty}
	Given  any $f\in C([0,1]^d)$ and $r\in \N^+$,
	there exist $\bmg\in \nn{4^{d+5}d}{3+2d}{\tilded}{\tilded}$
	and two affine linear maps
	$\bmcalL_1:\R^d\to\R^{\tilded}$ and $\calL_2:\R^{\tilded}\to\R$
	such that 
	\begin{equation*}		\big|\calL_2\circ \bmg^{\circ (3r+2d-1)}\circ \bmcalL_1(\bmx)-f(\bmx)\big|\le 6\sqrt{d}\,\omega_f(r^{-1/d})
	\end{equation*}
	for any $\bmx\in [0,1]^d$, where $\tilded=3^d(5d+4)-1$.
\end{theorem}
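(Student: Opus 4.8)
The plan is to bootstrap the $L^p$ estimate of Theorem~\ref{thm:main:Lp} into a uniform estimate by eliminating the only obstruction to pointwise control, namely the ``trifling region'' of thin slabs adjacent to the grid hyperplanes on which the piecewise-constant surrogate jumps. Recall that the construction behind Theorem~\ref{thm:main:Lp} partitions $[0,1]^d$ into a uniform grid of side $\tfrac1K$ with $K\asymp r^{1/d}$, maps $\bmx$ through the iterated network to the index of its cell, and reads off a value of $f$ at that cell's representative; the resulting approximant has error at most $6\sqrt d\,\omega_f(r^{-1/d})$ everywhere except on a set of small measure near the cell boundaries, where it may be badly wrong. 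In the $L^p$ norm this bad set is harmless, but for the $L^\infty$ bound we must make every point good.

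First I would run, in parallel, $3^d$ shifted copies of this approximation: for each shift vector $\bm s\in\{0,1,2\}^d$ I translate the grid by $\tfrac{\bm s}{3K}$ and build the corresponding cell-indexing map. Carrying these $3^d$ channels simultaneously, together with a bounded number of bookkeeping coordinates, is exactly what inflates the working dimension to $\tilded=3^d(5d+4)-1$. The key geometric fact is that, after choosing the slab half-width small enough (below $\tfrac{1}{6K}$), for each point $\bmx$ and each coordinate $i$ at most one of the three offsets $s_i\in\{0,1,2\}$ places $x_i$ inside a slab perpendicular to the $i$-th axis. Consequently, among any three copies that differ only in the $i$-th shift, at most one is corrupted by a coordinate-$i$ boundary, so the median of those three values is always a ``good'' value.

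Next I would remove the badness one coordinate at a time by an iterated median-of-three: a first round merges the $3^d$ channels into $3^{d-1}$ by taking, for each fixed choice of the remaining shifts, the median over $s_1$; a second round merges over $s_2$; and so on, until after $d$ rounds a single output remains that is good with respect to every coordinate simultaneously. Since $\tn{med}(a,b,c)=a+b+c-\max(a,b,c)-\min(a,b,c)$ and $\max,\min$ of pairs are expressible with ReLU in bounded depth, each round costs a fixed number of layers; together the $d$ rounds account for the additional depth, raising it to $3+2d$, and for the extra composition steps that lift the exponent from $3r+1$ to $3r+2d-1$. Because at every stage two of the three inputs are within $6\sqrt d\,\omega_f(r^{-1/d})$ of $f(\bmx)$ and the median of three reals always lies between those two good values, the merged value keeps the same error bound; letting $\bmcalL_1$ lay out the $3^d$ shifted copies of the input and $\calL_2$ read off the final merged scalar then yields the claimed uniform estimate.

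The whole argument must be realized by iterating one fixed network $\bmg$, and this is where the real work lies. I must design $\bmg$ so that a single iterate advances the common cell-indexing computation on all $3^d$ channels during the first $3r+1$ applications and then, steered by a counter coordinate, switches to executing one median round per subsequent application, all while faithfully passing through the channels it is not currently touching. The main obstacle is precisely this synchronization together with the correctness of the nested median: I must verify by induction that goodness propagates, i.e.\ after the round that merges coordinate $i$ the surviving values are within $6\sqrt d\,\omega_f(r^{-1/d})$ of $f(\bmx)$ whenever $\bmx$ avoids the remaining slabs, so that the final scalar is simultaneously outside every slab; and I must check that the median and identity sub-blocks pack into a single width-$4^{d+5}d$, depth-$(3+2d)$ template whose repeated self-composition performs first the parallel approximation and then the coordinatewise merge without the two phases interfering.
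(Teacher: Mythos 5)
Your proposal is correct and follows essentially the same route as the paper: the paper also removes the trifling region by taking, coordinate by coordinate, the mid-value of three shifted copies of the base approximant (Theorem~\ref{thm:gap:handling}, with argument shifts $\pm\delta\bme_i$ rather than your grid shifts, which is an equivalent variant), and packs the resulting $3^d$ parallel channels and $d$ merge rounds into one iterated fixed-size network via the counter-steered phase-switching of Proposition~\ref{prop:two:blocks:three:affine}, yielding exactly the dimension $3^d(5d+4)-1$, depth $3+2d$, and exponent $3r+2d-1$ you predict. The only slip is bookkeeping: the merge starts from the $3r-1$ compositions of the gap version (Theorem~\ref{thm:main:gap}), not the $3r+1$ of Theorem~\ref{thm:main:Lp}, with each of the $d$ mid-value rounds adding two compositions.
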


The main ideas for proving Theorems~\ref{thm:main:Lp} and \ref{thm:main:Linfty} are provided in  Section~\ref{sec:ideas} and the detailed proofs of these two theorems can be found in Section~\ref{sec:proof:main} of the  appendix.

In general, it is challenging to simplify the approximation error in Theorem~\ref{thm:main:Lp} (or \ref{thm:main:Linfty})
due to the complexity of $\omega_f(\cdot)$. 
However, in the case of special target function spaces like H\"older continuous function space, one can simplify the approximation error to make its dependence on $r$ explicit. If  $f$ is an H{\"o}lder continuous function on $[0,1]^d$ of order $\alpha\in(0,1]$ with an H\"older constant $\lambda>0$, we have 
\begin{equation*}
	|f(\bmx)-f(\bmy)|\leq \lambda \|\bmx-\bmy\|_2^\alpha\quad \tn{for any $\bmx,\bmy\in[0,1]^d$,}
\end{equation*}
implying $\omega_f(t)\le \lambda t^\alpha$ for any $t\ge 0$. Thus, the approximation error in Theorem~\ref{thm:main:Lp} (or \ref{thm:main:Linfty})
can be simplified to $6\lambda\sqrt{d}\,r^{-\alpha/d}$. 
In the special case of $\alpha=1$, where $f$ is a Lipschitz continuous function with a Lipschitz constant $\lambda>0$, the approximation error can be further simplified to $6\lambda\sqrt{d}\,r^{-1/d}$.

A constant-width ReLU network of depth $\calO(r)$ can be represented as $\calL_2\circ \bmg_r\circ\cdots\circ\bmg_2 \circ \bmg_1\circ \bmcalL_1$, where $\bmcalL_1$ and $\calL_2$ are affine linear maps and each $\bmg_i$ is a fixed-size ReLU network. 
It has been shown in \cite{shijun:2,yarotsky18a,shijun:thesis}  that the optimal approximation error is $\calO(r^{-2/d})$ when using $\calL_2\circ \bmg_r\circ \cdots \circ \bmg_2\circ \bmg_1\circ \bmcalL_1$ to approximate $1$-Lipschitz continuous functions on $[0,1]^d$. In contrast, our RCNet architecture $\calL_2\circ \bmg^{\circ r}\circ \bmcalL_1$ can approximate $1$-Lipschitz continuous functions on $[0,1]^d$ with an error $\calO(r^{-1/d})$, where $\bmg$ is a fixed-size ReLU network. That means, at a price of a slightly worse approximation error, our RCNet architecture $\calL_2\circ \bmg^{\circ r}\circ \bmcalL_1$ essentially shares most of the parameters in  $\calL_2\circ \bmg_r\circ \cdots \circ\bmg_2\circ \bmg_1\circ \bmcalL_1$ and reduce trainable parameters to a constant. 
Furthermore, our RCNet architecture $\mathcal{L}_2 \circ \bmg^{\circ r} \circ \bmcalL_1$ is anticipated to exhibit improved gradient behavior compared to $\mathcal{L}_2 \circ \bmg_r \circ \cdots \circ \bmg_2 \circ \bmg_1 \circ \bmcalL_1$ as the gradient with respect to the parameters in $\bmg$ is less likely to vanish for larger values of $r$.

Next, we point out some relations between our approximation results and dynamical systems. Our results reveal that a continuous-depth network generated via a dynamical system has enormous approximation power even if the dynamics is time-invariant
and realized by a fixed-size ReLU network. 
 Let us now delve into further details regarding this matter.
A dynamical system is generally described by an ordinary differential equation
(ODE)
\begin{equation}
    \label{eq:dynamical:system:def}
    \tfrac{d }{d t}\bmz(t)= \bmF\big(\bmz(t),t,\bmtheta\big),\quad t\in [0,T],\quad  \bmz(0)=\bmz_0,
\end{equation}
where $\bmF:\R^{n+1}\times \bmTheta\to\R^n$ is the dynamics function of this dynamical system, parameterized with $\bmtheta\in\bmTheta$, where $\bmTheta$ is the parameter space.  

For any $\bmy=\bmz_0\in\R^n$, $\bmz(T)$ can be regarded as a function of $\bmy$ and we denote this function by $\bmPhi(\cdot,\bmtheta):\R^n\to\R^n$.
Such a map is known as the flow map (or Poincar\'e map) of the dynamical system \eqref{eq:dynamical:system:def}. Then we can use $\calL_2\circ \bmPhi(\cdot , \bmtheta)\circ \bmcalL_1$ to approximate a given target function $f:\R^d\to\R$, where $\bmcalL_1$ and $\calL_2$ are two affine linear maps matching the dimensions. 

Choose a large $S\in\N^+$ and set $\delta=T/S$.
It follows from ODE~\eqref{eq:dynamical:system:def} that
\begin{equation*}
    \bmz\big(\delta  (s+1)\big)=\bmz(\delta   s) + \int_{ \delta s}^{\delta(s+1)} \bmF\big(\bmz(t),t,\bmtheta\big)dt
\end{equation*}
for $s=0,1,\dotsc,S-1$.
We denote $\bmz_{s}$ as the numerical solution and use it to approximate the true solution $\bmz(\delta  s)$ for $s=0,1,\dotsc,S$.
By using the forward Euler method to discretize ODE~\eqref{eq:dynamical:system:def}, we have
\begin{equation*}
    \bmz_{s+1}
    =
\bmz_s+\delta\bmF\big(\bmz_s,\delta  s,\bmtheta)
\end{equation*}
for $s=0,1,\dotsc,S-1$. Such an iteration step can be regarded as a residual network \cite{7780459}. Thus, a dynamical system can be viewed as a continuous-time version of a residual network. The network generated via a dynamical system is generally called continuous-depth network. The function $\calL_2\circ \bmPhi(\cdot , \bmtheta)\circ \bmcalL_1$ mentioned above is indeed generated by a continuous-depth network.
As we know, $\bmz_{S}$ can approximate $\bmz(\delta  S)=\bmz(T)$ arbitrarily well for sufficiently large $S$ with some proper conditions on the dynamics function $\bmF$.

Suppose $\bmF(\bmy,t,\bmtheta)$ given in ODE~\eqref{eq:dynamical:system:def} is independent of $t$ for any  $(\bmy,\bmtheta)\in\R^n\times \bmTheta$.
Define $\bmg_\bmtheta:\R^n\to\R^n$ via 
\begin{equation*}
    \bmg_\bmtheta(\bmy)\coloneqq \bmy+ \delta \bmF(\bmy,0,\bmtheta).
\end{equation*}
 Then, we have
\begin{equation*}
    \begin{split}
            \bmz_{s+1}&=
\bmz_s+\delta\bmF\big(\bmz_s,\delta  s,\bmtheta)\\
&=\bmz_s+\delta\bmF\big(\bmz_s,0,\bmtheta)=\bmg_\bmtheta(\bmz_s)
    \end{split}
\end{equation*}
for $s=0,1,\dotsc,S-1$, implying $\bmz_S=\bmg_\bmtheta^{\circ S}(\bmz_0)$.
It follows that, for any $\bmy=\bmz_0\in\R^n$ and $\bmtheta\in\bmTheta$, we have
\begin{equation*}
    \begin{split}
        \bmg_\bmtheta^{\circ S}(\bmy)
        &=\bmg_\bmtheta^{\circ S}(\bmz_0)
        =\bmz_S\approx 
        \bmz(\delta  S) \\&=\bmz(T)=\bmPhi(\bmz_0,\bmtheta)=\bmPhi(\bmy,\bmtheta).
    \end{split}
\end{equation*}

Our results imply that $\calL_2\circ \bmg_\bmtheta^{\circ S}\circ \bmcalL_1$ has immense approximation power even if $\bmg_\bmtheta$ is realized by a fixed-size ReLU network, where $\bmcalL_1$  and $\calL_2$ are affine maps matching the dimensions.
Define $\bmF:\R^{n+1}\times \bmTheta\to\R^n$ via
    \begin{equation}
    \label{eq:dynamics:func:def}\bmF(\bmy,t,\bmtheta)\coloneqq\big(\bmg_\bmtheta(\bmy)-\bmy\big)/\delta,
\end{equation}
where $\bmg_\bmtheta$ is realized by a fixed-size ReLU network.
Then, the function $\calL_2\circ \bmPhi(\cdot,\bmtheta)\circ \bmcalL_1$, modelled by a continuous-depth network, can approximate $\calL_2\circ \bmg_\bmtheta^{\circ S}\circ \bmcalL_1$ well and hence also has immense approximation power. The definition of the dynamics function $\bmF$ in Equation~\eqref{eq:dynamics:func:def} implies that $\bmF(\bmy,t,\bmtheta)$ is independent of $t$ for any $(\bmy,\bmtheta)\in \R^n\times \bmTheta$ and $\bmF$ can also be realized by a fixed-size ReLU network. In short, we have shown that
a continuous-depth network can also have immense approximation power even if its dynamics function is time-independent and realized by a fixed-size ReLU network.
One may refer to Section~\ref{sec:dynamical:systems} for a further discussion on dynamical systems.

The remaining sections of this paper are structured as follows.
In Section~\ref{sec:related:work}, we discuss the connections between our results and existing work.
Section~\ref{sec:ideas} outlines the main ideas behind the proofs of Theorems~\ref{thm:main:Lp} and \ref{thm:main:Linfty}.
Next, in Section~\ref{sec:experiment}, we provide two simple experiments to numerically validate our theoretical results.
Finally,  Section~\ref{sec:conclusion} concludes this paper with a brief discussion.

\section{Related Work}
\label{sec:related:work}

%
In this section, we will provide a comprehensive overview of previous research that is pertinent to our results. We commence by emphasizing the correlation between deep learning and dynamical systems. Subsequently, we delve into the subject of parameter-sharing schemes in neural networks. Finally, we compare our results with existing research from the standpoint of function approximation.
  
%
%
\subsection{Deep Learning via Dynamical Systems}
\label{sec:dynamical:systems}

A dynamical system is a mathematical framework that describes the evolution of a system over time. Its origins can be traced back to Newtonian mechanics. For a comprehensive overview of the history of dynamical systems, one may refer to \cite{Holmes:2007}.
In general, a dynamical system consists of two fundamental components. First, we have the state variable(s), which represent the variables that fully describe the state of the system. These variables capture the relevant properties or quantities of interest in the system. The second component is the time evolution rule, which specifies how the future states of the system evolve from the current state. It provides the mathematical equations or rules that govern the dynamics of the system over time.
By studying the behavior and properties of dynamical systems, we gain insights into how systems change and develop over time. This framework has found applications in various fields, including physics, biology, economics, and computer science. In the context of deep learning, the connection to dynamical systems highlights the temporal aspect of learning and the potential for capturing complex dynamics in neural networks.

In recent years, there has been a growing body of research establishing connections between dynamical systems and deep learning. One such work  \cite{WeinanE2017dynamicalsystems} introduces a novel concept that interprets the discretization of a continuous dynamical system as a continuous-depth residual network. This approach utilizes continuous dynamical systems to model high-dimensional nonlinear functions commonly encountered in machine learning tasks.
Another notable contribution by the authors in \cite{NEURIPS2018_69386f6b} parameterizes the derivative of the hidden state using a neural network, introducing continuous-depth residual networks. This work highlights several advantages of continuous-depth models, including constant memory cost.
The study in \cite{2019arXiv191210382L} establishes general sufficient conditions for the universal approximation property of continuous-depth residual networks, further connecting the dynamical systems approach to deep learning. 
A similar result is demonstrated in \cite{2022arXiv220808707L}, where the focus shifts to specific invariant functions instead of generic continuous functions.
Additionally, the universal approximation property of deep fully convolutional networks is explored from the perspective of dynamical systems in \cite{2022arXiv221114047L}. The authors demonstrate that deep residual fully convolutional networks, along with their continuous-depth counterparts of constant channel width, can achieve the universal approximation of specific symmetric functions.
These studies serve as exemplary demonstrations of the endeavors made to establish connections between dynamical systems and deep learning. They explore the
potential benefits and theoretical foundations of continuous-depth models in various contexts.


%

\subsection{Parameter Sharing in Neural Networks}
\label{sec:parameter:sharing}



In recent years, deep neural network models have demonstrated 
notable
accomplishments across various domains. Nonetheless, the growing size of deep neural network models frequently introduces complexities in terms of computation and memory usage. To tackle these challenges, several techniques for model compression and acceleration have been developed, many of which involve the concept of parameter sharing.
Parameter-sharing schemes are utilized in neural networks to minimize the total number of parameters, resulting in reduced memory and computational requirements. Our network architecture, which involves the repeated compositions of a single fixed-size network, can be viewed as a particular instance of a parameter-sharing scheme in neural networks.

To the best of our knowledge, parameter-sharing schemes in neural networks can be broadly categorized into three basic cases. The first case involves sharing parameters within the same layer, as seen in convolutional neural networks (CNNs), where kernels (filters) are shared across all image positions. The second case entails sharing parameters among different layers of neural networks, as in recurrent neural networks (RNNs). Our network architecture follows this second scheme by sharing parameters through repeated compositions of a single fixed-size network. We demonstrate that this approach can yield immense approximation capabilities by repeating a fixed number of parameters.
In addition to these two parameter-sharing schemes, there is also the practice of sharing parameters across different neural networks or models, which is often employed in multi-task learning scenarios. For further insights into parameter sharing in neural networks, interested readers can refer to the references \cite{savarese2018learning,2020arXiv200902386W,2006.10598,NEURIPS2020_42cd63cb,9859706,9879069}.

\subsection{Discussion from an Approximation Perspective}

The approximation power of neural networks has been extensively studied, with numerous publications focusing on constructing various neural networks to approximate a wide range of target functions. Some notable examples include \cite{Cybenko1989ApproximationBS,HORNIK1989359,barron1993,yarotsky18a,yarotsky2017,doi:10.1137/18M118709X,ZHOU2019,10.3389/fams.2018.00014,2019arXiv190501208G,2019arXiv190207896G,suzuki2018adaptivity,Ryumei,Wenjing,Bao2019ApproximationAO,2019arXiv191210382L,MO,shijun:1,shijun:2,shijun:3,shijun:thesis,shijun:intrinsic:parameters,shijun:arbitrary:error:with:fixed:size}.
In the early stages of this field, the focus was on exploring the universal approximation power of one-hidden-layer networks. The universal approximation theorem \cite{Cybenko1989ApproximationBS,HORNIK1991251,HORNIK1989359} demonstrated that a sufficiently large neural network can approximate a certain type of target function arbitrarily well, without explicitly estimating the approximation error in terms of the network size.
Subsequent work, such as \cite{barron1993,barron2018approximation}, analyzed the approximation error of one-hidden-layer networks of width $n$ and showed an asymptotic approximation error of $\calO(n^{-1/2})$ in the $L^2$-norm 
for target functions with certain smoothness properties.


Recent research has placed significant emphasis on the approximation of deep neural networks. 
Notably, the findings presented in \cite{shijun:2, yarotsky18a, shijun:thesis} indicate that ReLU networks with $n$ parameters can achieve an optimal approximation error of $\calO(n^{-2/d})$ when approximating $1$-Lipschitz continuous functions on $[0,1]^d$.
However, it is crucial to recognize that this optimal approximation rate suffers from the curse of dimensionality. 
To overcome the limitations imposed by the curse of dimensionality and achieve better approximation errors, various approaches have been proposed and explored. These approaches aim to enhance the quality of approximation or even directly address the challenges arising from the curse of dimensionality.
One approach is to consider smaller function spaces, such as smooth functions \cite{shijun:3,yarotsky:2019:06}, band-limited functions \cite{bandlimit}, and Barron spaces \cite{barron2018approximation,barron1993,2019arXiv190608039E}. By restricting the class of functions being approximated, it is possible to achieve better approximation errors with neural networks.
Another approach is to design new network architectures that can improve the approximation capabilities. Examples of such architectures include Floor-ReLU networks \cite{shijun:4}, Floor-Exponential-Step networks \cite{shijun:5}, (Sin, ReLU, $2^x$)-activated networks \cite{jiao2021deep}, and three-dimensional networks \cite{shijun:net:arc:beyond:width:depth}.
By exploring different function spaces and designing novel network architectures, researchers have been able to push the limits of approximation accuracy in neural networks, providing more flexibility and better performance for various tasks.
It is important to note that the literature on the approximation analysis of deep neural networks is vast, and the publications mentioned here represent only a subset of the existing research. 
Many other approaches, techniques, and architectures have been proposed to address the challenge of improving the approximation error for specific function classes.

In this paper, we propose a specific neural network architecture generated by repeated compositions of a single fixed-size network. Theorems~\ref{thm:main:Lp} and \ref{thm:main:Linfty} demonstrate that repeating a small ReLU network block can enhance the approximation power of our network architecture. We will conduct experiments in Section~\ref{sec:experiment} to numerically verify our theoretical results and evaluate the approximation capabilities of our network architecture.
\section{Ideas for Proving Theorems~\ref{thm:main:Lp} and \ref{thm:main:Linfty}}
\label{sec:ideas}

Let us outline the main ideas behind the proofs of Theorems~\ref{thm:main:Lp} and \ref{thm:main:Linfty}.
During the proofs, our main approach involves constructing a piecewise constant function to approximate the desired continuous function.
However, the continuity of ReLU networks poses a challenge in uniformly approximating piecewise constant functions.
To bridge this gap, we first  design ReLU networks
to realize piecewise constant functions outside a sufficiently small region to approximate the target function well. Then, we will introduce a theorem to deal specifically with the approximation inside this small region for achieving uniform approximation.

Based on the aforementioned ideas, let us delve into the specific details.
We divide $[0,1]^d$  into a union of ``important'' cubes $\{Q_\bmbeta\}_{\bmbeta\in \{0,1,\dotsc,K-1\}^d}$ and a small region $\Omega$, where $K$ is a proper integer determined later. Each $Q_\bmbeta$ is associated with a representative $\bm{x}_\bmbeta\in Q_\bmbeta$ for each $\bmbeta\in \{0,1,\dotsc,K-1\}^d$. 
See Figure~\ref{fig:idea:main} for an illustration of $\bmx_\bmbeta$, $\Omega$, and $Q_\bmbeta$.
Then, the construction of the desired network approximating the target function can be divided into three steps as follows.
\begin{enumerate}
	\item First, we design a sub-network to realize a vector-valued function $\bmPhi_1$ mapping the whole cube $Q_\bmbeta$ to its index $\bmbeta$ for each $\bmbeta$. That is, $\bmPhi_1(\bmx)=\bmbeta$ for any $\bmx\in  Q_\bmbeta$ and $\bmbeta\in \{0,1,\dotsc,K-1\}^d$.
 
	\item Next, we design a sub-network to realize a function $\phi_2$ mapping $\bmbeta$ approximately to $f(\bmx_\bmbeta)$ for each $\bmbeta$. That is, $\phi_2(\bmbeta)\approx f(\bmx_\bmbeta)$ for any $\bmbeta\in \{0,1,\dotsc,K-1\}^d$.
 
	\item Finally, by defining $\phi\coloneqq \phi_2\circ \bmPhi_1$, we have $\phi(\bmx)=\phi_2\circ\bmPhi_1(\bmx)=\phi_2(\bmbeta)\approx f(\bmx_\bmbeta)\approx f(\bmx)$ for any $\bmx\in Q_\bmbeta$ and each $\bmbeta\in\{0,1,\dotsc,K-1\}^d$.
 Additionally, we must also
 address the approximation occurring within $\Omega$ and demonstrate that 
 $\phi=\phi_2\circ \bmPhi_1$ can be represented in the desired form $\phi=\calL_2\circ \bmg^{\circ r}\circ \bmcalL_1$, where $\bmcalL_1$ and $\calL_2$ are affine linear maps and $\bmg$ is realized by a fixed-size ReLU network.
\end{enumerate}

\begin{figure}[htbp!] 
	\centering
 \vskip 3.1pt
	\includegraphics[width=0.995\linewidth]{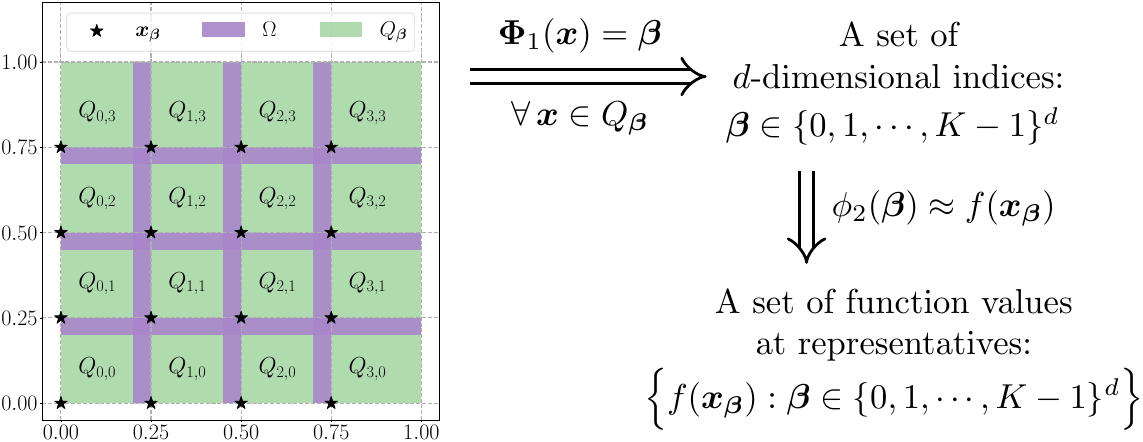}
 \vskip -0.1pt
	\caption{An illustration of the ideas for constructing the desired function $\phi=\phi_2\circ\bmPhi_1$. 
		Note that $\phi\approx f$ outside $\Omega$
		since $\phi(\bmx)=\phi_2\circ\bmPhi_1(\bmx)=\phi_2(\bmbeta)\approx f(\bmx_\bmbeta)\approx f(\bmx)$ for any $\bmx\in Q_\bmbeta$ and each $\bmbeta\in\{0,1,\dotsc,K-1\}^d$.}
	\label{fig:idea:main}
\end{figure}

See Figure~\ref{fig:idea:main} for an illustration of these three steps. More details on these three steps can be found below.

\vspace{4pt}
\textbf{Step 1}: Constructing $\bmPhi_1$.

As mentioned previously, the aim of $\bmPhi_1$ is to map $\bmx\in Q_\bmbeta$ to $\bmbeta$ for each $\bmbeta\in \{0,1,\dotsc,K-1\}^d$. Note that 
$\bmPhi_1$ can be defined/constructed via \[\bmPhi_1(\bmx)=\big(\phi_1(x_1),\,\dotsc,\,\phi_1(x_d)\big)\]
for any $\bmx=(x_1,\dotsc,x_d)\in\R^d$, where $\phi_1:\R\to\R$  is a  step function outside a small region and hence can be  realized by a ReLU network. 
It is generally challenging to design a ReLU network with a limited budget and the required architecture to realize such a  function  $\phi_1$. Thus, we establish a proposition, Proposition~\ref{prop:floor:approx} below, to do this step and place its proof in Section~\ref{sec:proof:prop:floor:approx} of the appendix.

\begin{proposition}
	\label{prop:floor:approx}
	Given any $\delta\in (0,1)$ and $n,m\in \N^+$ with $n\le m$,
	there exist $\bmg \in
	\nn[\big]{9}{1}{5}{5}$
	and two affine linear maps $\bmcalL_1:\R\to\R^5$ and $\calL_2:\R^5\to \R$ such that
	\begin{equation*}
		\calL_2\circ\bmg^{\circ (m-1)}\circ \bmcalL_1(x)=k
	\end{equation*}
if $x\in \big[k,\,k+1-\delta\cdot\one_{\{k\le n-2\}}\big]$ for $k=0,1,\dotsc,n-1$.
\end{proposition}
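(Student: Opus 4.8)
The plan is to realize the scalar map $x\mapsto k$ on the prescribed intervals as the input-output map of an iterated, fixed, low-dimensional system: a single block $\bmg\in\nn{9}{1}{5}{5}$ advances an internal state in $\R^5$ by ``one level'' per composition, the affine map $\bmcalL_1$ initializes the state from the scalar input $x$, and the affine map $\calL_2$ reads off an accumulated count at the end. I would keep in the state a transported value $p$ (initialized so that it decreases by one integer unit per step) together with an accumulator $a$, and pad the remaining coordinates to carry nonnegative quantities and auxiliary pre-activations.

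The basic ingredient is a soft unit step $s\colon\R\to\R$ with $s\equiv0$ on $(-\infty,1-\delta]$, $s\equiv1$ on $[1,\infty)$, and linear in between, realized by two ReLU units as
\[
s(t)=\sigma\!\Big(\tfrac{t-(1-\delta)}{\delta}\Big)-\sigma\!\Big(\tfrac{t-1}{\delta}\Big),\qquad \sigma(\cdot)=\max(0,\cdot).
\]
The point of the hypothesis $\delta\in(0,1)$ is that the entire non-integer behaviour of $s$ is confined to an interval of length $\delta$, which I align with the excluded gaps $(k+1-\delta,\,k+1)$; consequently $s$ takes only the values $0$ and $1$ on each clean interval $[k,k+1-\delta]$. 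Letting $\bmg$ add $s(p)$ to the accumulator while decrementing $p$ by one, the composition $\bmg^{\circ t}$ installs a unit jump at the integers $1,2,\dots,t$ (measured in the original variable), so that after sufficiently many compositions the accumulator equals $\sum_{j=1}^{n-1}\one_{\{x\ge j\}}=k$ exactly on $[k,k+1-\delta]$. Since $p\ge0$ and $a\ge0$ throughout, each can be transported through a single ReLU, and a direct neuron count shows the update fits within width $9$ and the $\R^5\to\R^5$ format, the spare coordinates leaving room for the bookkeeping below.

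Two things must then be checked. First, because the number of available compositions $m-1\ge n-1$ may exceed the number of genuine jumps, the construction must be stable: extra compositions should act as the identity on the accumulator. Second --- and this is where I expect the real difficulty --- pointwise exactness must hold on the top interval $[n-1,n]$ up to and including the endpoint $x=n$. The naive accumulation fails here: the reachable values $x\in(n-\delta,n]$ land inside the transition of the soft step on the $(n-1)$-th decrement, producing a spurious partial jump at $x=n$ and an output strictly larger than $n-1$. The fix is to saturate the accumulator, replacing the update by $a\mapsto\min(a+s(p),\,n-1)$, which kills the spurious jump and at the same time furnishes the stability required in the first point. Implementing this clipping is the one non-routine step, since the argument $a+s(p)-(n-1)$ is itself an affine combination of ReLU units and therefore cannot be rectified a second time inside the same depth-$1$ block; I would resolve it by storing the pre-clipping quantity in a spare coordinate and completing the rectification $\sigma\big(a+s(p)-(n-1)\big)$ in the following composition (a delayed-rectification device, for which the $\R^5$ state has ample room). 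It then remains to verify that on every reachable state the clipping is inactive except exactly where it must cap the top interval, and that this one-step bookkeeping does not break the count $m-1\ge n-1$ (the worst case $x=n$, $m=n$ being the tight one). The proof finishes by writing down the explicit weights of $\bmg$, $\bmcalL_1$, and $\calL_2$ and confirming the displayed equality on each $[k,\,k+1-\delta\cdot\one_{\{k\le n-2\}}]$.
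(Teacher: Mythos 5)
Your overall template --- accumulate, over the $m-1$ compositions, one per-step contribution read off from a transported copy of the input, and read out the accumulator affinely --- is essentially the paper's (there the per-step summand is a bump $h_k(x)=h(kx,k^2,k)$ equal to $k$ on $[k,k+1-\delta]$ and $0$ off $[k-\delta,k+1]$, carried by the state $(kx,k^2,k,x,\sum_{i<k}h_i(x))$, rather than your telescoping unit steps $s(x-t+1)$; either summand works). The genuine gap is exactly at the point you flag: the gapless top interval $[n-1,n]$. Your fix is to saturate the accumulator, $a\mapsto\min\big(a+s(p),\,n-1\big)$, and since $s(p)$ is already an affine combination of ReLU outputs you postpone the outer rectification to the next composition. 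But the last composition has no successor and $\calL_2$ is affine, so the final contribution can never be clipped. This is not a vacuous worry: take $m=n+1$ and $x\in(n-\delta,n]$. The $(m-1)$-th composition installs the jump at $j=n$, contributing $s(x-n+1)\in(0,1]$ on top of an accumulator already equal to $n-1$, and the network outputs $n-1+s(x-n+1)\neq n-1$. (Your identification of $m=n$ as the tight case is off: for $m=n$ the jump at $j=n$ is never installed and no clipping is ever needed, while for $m\ge n+2$ the delayed rectification can be completed at step $n+1\le m-1$ and the last step contributes $0$; it is precisely $m=n+1$ that defeats the device.) Clipping the \emph{incoming} accumulator at each step instead of the outgoing one hits the same wall, since the overshoot is created by the very last addition.

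The paper sidesteps saturation altogether. It first proves the statement when every interval, including the top one, carries a gap of width $\tildedelta$ at its right end (Lemma~\ref{lem:floor:approx}), and then precomposes with the increasing affine map $\calL_0(x)=\tfrac{n-\delta-\tildedelta}{n}x+\delta$ with $\tildedelta=\tfrac{(1-\delta)\delta}{n}$, which maps each $\big[k,\,k+1-\delta\cdot\one_{\{k\le n-2\}}\big]$ --- in particular the full interval $[n-1,n]$ --- into $[k,\,k+1-\tildedelta]$. Since $\calL_0$ is affine it is absorbed into $\bmcalL_1$, no input ever lands in a transition region of the step (or bump) function, and no second layer of rectification is required anywhere. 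If you graft this affine pre-compression onto your unit-step construction (and shift the transported value by a constant so it stays nonnegative through the single ReLU layer), your argument goes through and the width-$9$ budget is comfortable; without it, the case $m=n+1$, $x\in(n-\delta,n]$ is a concrete counterexample to the construction as described.
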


\vspace{4pt}
\textbf{Step 2}: Constructing $\phi_2$.


The objective of $\phi_2$ is to map $\bmbeta$ approximately to $f(\bmx_\bmbeta)$ for each $\bmbeta\in\{0,1,\dotsc,K-1\}^d$.
It is important to note that, during
in the construction of $\phi_2$, we only need to care about the values of $\phi_2$ sampled inside the set $\{0,1,\dotsc,K-1\}^d$, which is a key point to ease the design of a ReLU network realizing $\phi_2$. 
Indeed, if we can define a proper affine linear map $\calL:\R^d\to\R$, then we only need to construct $\tildephi_2:\R\to\R$ to  map $\calL(\bmbeta)$  approximatly to $f(\bmx_\bmbeta)$ since $\phi_2=\tildephi_2\circ \calL$ can map $\bmbeta$ approximately to $f(\bmx_\bmbeta)$.
It is still challenging to construct a ReLU network with a limited budget and the required architecture to realize $\tildephi_2$.
Thus, we establish Proposition~\ref{prop:point:fitting} below to simplify the construction of  $\tildephi_2$. The proof of Proposition~\ref{prop:point:fitting} is complicated and hence is placed in Section~\ref{sec:proof:prop:point:fitting} of the appendix.


\begin{proposition}
	\label{prop:point:fitting}
	Given any $\varepsilon>0$,\hspace{5pt}  $n,m\in \N^+$ with $n\le m$,
	and  
	${y}_k\ge 0$ for $k=0,1,\dotsc,n-1$  with 
	\[|y_{k}-y_{k-1}|\le \varepsilon\quad \tn{for $k=1,2,\dotsc,n-1,$}\] 
	there exist
	$\bmg\in\nn{16}{2}{6}{6}$ and two affine linear maps $\bmcalL_1:\R\to\R^6$ and $\calL_2:\R^6\to\R$
	such that
	\begin{equation*}
		\big| \calL_2\circ \bmg^{\circ (m-1)}\circ \bmcalL_1(k)-{y}_k\big|\le \eps
		\quad \tn{for $k=0,1,\dotsc,n-1$.}
	\end{equation*}
\end{proposition}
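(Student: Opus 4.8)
The plan is to realize $\calL_2\circ\bmg^{\circ(m-1)}\circ\bmcalL_1$ as a small discrete dynamical system that processes one piece of encoded data per iteration, reading the input $k$ as an initial ``countdown'' value. First I would reduce to a finite-alphabet problem: since $y_k\ge 0$ and $|y_k-y_{k-1}|\le\varepsilon$, rounding each $y_k$ to the nearest multiple of $\varepsilon$ produces values $q_k$ with $|q_k-y_k|\le\varepsilon/2$ and consecutive increments $q_k-q_{k-1}\in\{-2,-1,0,1,2\}\cdot\varepsilon$, a five-symbol alphabet. Writing $q_k=q_0+\sum_{j=1}^k(q_j-q_{j-1})$, it suffices to output $q_k$ exactly, since the total error is then at most $\varepsilon/2\le\varepsilon$. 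I would encode the whole increment sequence $(q_1-q_0,\dots,q_{n-1}-q_{n-2})$ as the base-$B$ digits of a single real number $D_\star\in[0,1)$ for a fixed base $B$, chosen slightly larger than the alphabet size so some digit values are never used. The affine map $\bmcalL_1$ then loads the initial state: its linear part places $k$ in a ``counter'' coordinate, and its constant part loads $q_0$ into an ``accumulator'' coordinate and $D_\star$ into a ``data register'' coordinate; the affine map $\calL_2$ merely reads off the accumulator at the end.

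The heart of the construction is the single shared map $\bmg\in\nn{16}{2}{6}{6}$ implementing one iteration. Carrying a state $(c,a,D,\dots)\in\R^6$, one application of $\bmg$ should (i) decrement the counter in clamped fashion, $c\mapsto\max(c-1,0)$; (ii) peel the leading base-$B$ digit $s$ off the data register, replacing $D$ by the fractional part of $BD$ (a fixed sawtooth with $B$ teeth) while extracting $s=\lfloor BD\rfloor$; and (iii) add to the accumulator $a$ the increment $\varepsilon\,\sigma$, where $\sigma\in\{-2,-1,0,1,2\}$ is the symbol encoded by $s$, but only while the counter is still positive, the gate being the ramp $\min(\max(c,0),1)$, which equals $1$ for integer $c\ge1$ and $0$ for $c=0$. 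Because the evolution of $D$ does not depend on $c$, its trajectory $D_\star=D_0,D_1,\dots,D_{n-2}$ is the same for every input $k$; running the iteration from $\bmcalL_1(k)$, the gate is active on exactly the first $k$ steps, so the accumulator reaches $q_0+\sum_{j=1}^k(q_j-q_{j-1})=q_k$ and stays frozen for the remaining $m-1-k\ge0$ steps. This is where $n\le m$ enters: $m-1$ iterations suffice to process all $n-1$ encoded symbols for every admissible $k$.

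The key reason a fixed, width-$16$ depth-$2$ network suffices is that, although digit extraction and the $0/1$ gate are genuinely discontinuous, they are only ever evaluated at finitely many prescribed points: the counter visits integers, and the register visits the finite set $\{D_t\}$. A continuous piecewise-linear map of bounded size can therefore reproduce these operations exactly at the relevant points, provided the encoding keeps each $D_t$ safely in the interior of a single tooth of the sawtooth; choosing $B$ with unused extreme digits inserts the guard gaps that prevent any $D_t$ from landing on a breakpoint. The decoded increments are exact integer multiples of $\varepsilon$, so the accumulation is exact and the only error is the initial rounding, giving the claimed bound $\varepsilon$. The main obstacle, and the reason the full proof is long, is the simultaneous packing: one must realize the clamped decrement, the $B$-tooth sawtooth with its digit readout, the ramp gate, and the gated addition by the \emph{same} map within width $16$ and depth $2$, and then verify that across all $\le m-1$ steps and all inputs $k$ the six coordinates stay within their intended ranges (accumulator bounded, $D_t\in[0,1)$ away from breakpoints, counter nonnegative), so that the single network $\bmg$ behaves correctly for every $k$ at once.
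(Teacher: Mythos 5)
Your overall strategy is the same one the paper uses: quantize the $y_k$, encode the sequence of increments in the digits of a single real constant that $\bmcalL_1$ loads into the state, and let each application of $\bmg$ peel off one digit and add the corresponding increment to an accumulator, gated by a countdown counter initialized at $k$ (this is the bit-extraction technique of Bartlett et al.\ that the paper invokes). Where you diverge is the encoding. The paper sets $a_k=\lfloor y_k/\varepsilon\rfloor$, so the increments $b_k=a_k-a_{k-1}$ lie in $\{-1,0,1\}$, writes $b_k=c_k-d_k$ with $c_k,d_k\in\{0,1\}$, and then applies a purely \emph{binary} extraction lemma (Lemma~\ref{lem:bit:extraction}) twice in parallel: each binary extractor is a width-$8$, depth-$2$ map on $\R^3$, and stacking two of them gives exactly $\nn{16}{2}{6}{6}$. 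You instead round to the nearest multiple of $\varepsilon$, obtain a five-symbol increment alphabet, and propose a single base-$B$ extractor with a multiplicative gate. Both are legitimate, but the paper's binary splitting is doing real quantitative work: for $\theta\in\{0,1\}$ the gated accumulation is a single ReLU, $\theta\cdot g=\sigma(\theta+g-1)$, and the leading binary digit is read off with one two-ReLU tooth function $\calT$.

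The genuine gap in your writeup is the step you yourself flag as ``the main obstacle'': the claim that your one-step map fits in $\nn{16}{2}{6}{6}$ is asserted, not established, and for your encoding it is not routine. Extracting a digit $s\in\{0,1,\dotsc,4\}$ requires a five-level staircase (four steep teeth, roughly eight ReLU units), the sawtooth update of the register needs that staircase plus a copy of $BD$, the clamped decrement and the ramp gate need further units, and---crucially---adding $\varepsilon(s-2)\cdot g$ to the accumulator is the product of a \emph{signed, non-binary} bounded quantity with a $\{0,1\}$ gate, which costs an additional two-ReLU gadget of the form $\sigma(x+Mg-M)-\sigma(-x+Mg-M)$ in the second hidden layer, on top of passing the counter, register, and accumulator through both layers. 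Whether all of this coexists within width $16$ and depth $2$ requires an explicit unit count and a verification that every coordinate stays in the range where each piecewise-linear gadget is exact; without that, the proposition's stated architecture is not justified. The paper sidesteps this entirely by reducing to two independent binary streams, for which the width-$8$ block is constructed explicitly and the parallel composition trivially doubles the width. To close your argument you would either need to carry out that packing in full detail, or adopt the paper's reduction to $\{0,1\}$-valued increments.
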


\vspace{4pt}
\textbf{Step 3}: Representing $\phi=\phi_2\circ\bmPhi_1$ properly.

With $\bmPhi_1$ and $\phi_2$ constructed in the first two steps, we can define $\phi\coloneqq \phi_2\circ \bmPhi_1$ and we have
\begin{equation*}
	\phi(\bmx)=\phi_2\circ\bmPhi_1(\bmx)=\phi_2(\bmbeta)\approx f(\bmx_\bmbeta)\approx f(\bmx)
\end{equation*}
for any $\bmx\in Q_\bmbeta$ and each $\bmbeta\in \{0,1,\dotsc,K-1\}^d$.
That means $\phi$ can approximate $f$ well outside $\Omega$. 
By making $\phi$ bounded and $\Omega$ sufficiently small, we can easily control the $L^p$-norm approximation error to prove Theorem~\ref{thm:main:Lp} for any $p\in [1,\infty)$.
To prove Theorem~\ref{thm:main:Linfty}, we require $\phi$ to pointwise approximate $f$ well. To this end, we use the idea of Lemma~$3.11$ in \cite{shijun:thesis} (or Lemma~$3.4$ in \cite{shijun:3}) to control the approximation error inside a small region. 

Apart from a good approximation error, we also need to show 
that $\phi$ can be represented as the desired form 
$\calL_2\circ \bmg^{\circ r}\circ \bmcalL_1$, where $\bmcalL_1$ and $\calL_2$ are two affine linear maps and  $\bmg$ is realized by a fixed-size ReLU network. Note that $\bmPhi_1$ and $\phi_2$ are constructed based on Propositions~\ref{prop:floor:approx} and \ref{prop:point:fitting}, respectively. Thus, 
both $\bmPhi_1$ and $\phi_2$ are expected to have the following form:
\begin{equation*}
	 \bmcalL_2\circ  \bmg^{\circ r}\circ \bmcalL_1,
\end{equation*}
where $\bmcalL_1,\bmcalL_2$ are affine linear maps and $\bmg$ is realized by fixed-size ReLU networks. Then,
$\phi=\phi_2\circ \bmPhi_1$ are expected to have the following form:
\begin{equation}
	\label{eq:phi:full:form}
	\phi= \tildecalL_3\circ \bmg_2^{\circ r_2}\circ \bmtildecalL_2\circ  \bmg_1^{\circ r_1}\circ \bmtildecalL_1,
\end{equation}
where $\bmtildecalL_1,\bmtildecalL_2,\tildecalL_3$ are affine linear maps and $\bmg_1,\bmg_2$ are realized by fixed-size ReLU networks. It is not trivial to convert  the form in Equation~\eqref{eq:phi:full:form} to the desired form.
A proposition is established to facilitate such a conversion. 

\begin{proposition}
	\label{prop:two:blocks:three:affine}
	Let $\bmtildecalL_1:\R^{d_0}\to \R^{d_1}$, $\bmtildecalL_2:\R^{d_1}\to \R^{d_2}$, and $\bmtildecalL_3:\R^{d_2}\to\R^{d_3}$ be three affine linear maps.  Suppose 
 \begin{equation*}
     \bmg_i\in \nn{N_i}{L_i}{d_i}{d_i}
 \end{equation*}
 and $r_i\in \N^+$ for $i=1,2$.
	For any $A>0$ and $d\in\N^+$ with $d\ge\max\{d_1,d_2\}$, there exist
$
	    \bmg	\in		\nn{N_1+N_2+6d+2}{\max\{L_1+2,L_2+1\}}{d+2}{d+2}
	$	and two affine linear maps $\bmcalL_1:\R^{d_0}\to \R^{d+2}$ and $\bmcalL_2: \R^{d+2} \to \R^{d_3}$
	such that
	\begin{equation*}
		\bmtildecalL_3\circ \bmg_2^{\circ r_2}\circ \bmtildecalL_2\circ  \bmg_1^{\circ r_1}\circ \bmtildecalL_1 (\bmx)
		=\bmcalL_2\circ \bmg^{\circ (r_1+r_2+1)}\circ \bmcalL_1(\bmx)
	\end{equation*} 
	for any $\bmx \in [-A,A]^{d_0}$.
\end{proposition}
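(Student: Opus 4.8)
The plan is to realize $\bmg$ as a single \emph{mode-switching} map driven by an iteration counter, so that iterating $\bmg$ a total of $r_1+r_2+1=r_1+1+r_2$ times reproduces in order the $r_1$ applications of $\bmg_1$, then one transition step applying the affine map $\bmtildecalL_2$, then the $r_2$ applications of $\bmg_2$. Concretely, I would work with an augmented state in $\R^{d+2}$, whose first (at most) $d$ coordinates hold the ``live'' vector $\bmy$ embedded in its first $d_1$ or $d_2$ coordinates, and whose remaining two coordinates carry bookkeeping information, chief among them an iteration counter $t$. The map $\bmcalL_1$ embeds $\bmtildecalL_1(\bmx)$ into the first $d_1$ coordinates and sets $t=0$; the map $\bmcalL_2$ reads off the first $d_2$ coordinates of the final state and applies $\bmtildecalL_3$. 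I would then design $\bmg$ so that, on a state with counter value $t$, it outputs $\bmg_1(\bmy)$ when $t\le r_1-1$, outputs $\bmtildecalL_2(\bmy)$ when $t=r_1$, and outputs $\bmg_2(\bmy)$ when $t\ge r_1+1$, always advancing $t$ to $t+1$. Starting from $t=0$, the counter visits $0,1,\dotsc,r_1+r_2$ over the $r_1+r_2+1$ applications, so the three modes fire in exactly the right order.

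The key device is exact conditional selection with ReLU on a bounded domain. First I would invoke compactness: since $\bmx$ ranges over $[-A,A]^{d_0}$ and $\bmtildecalL_1,\bmtildecalL_2,\bmg_1,\bmg_2$ are continuous, the finitely many vectors $\bmg_1^{\circ j}(\bmtildecalL_1(\bmx))$, $\bmtildecalL_2(\cdots)$, and $\bmg_2^{\circ j}(\cdots)$, together with all the discarded ``wrong-mode'' branch evaluations, lie in a fixed cube $[-M,M]^{d}$ for some $M=M(A,\bmtildecalL_i,\bmg_i,r_1,r_2)$. On such bounded inputs, multiplication by a binary gate is ReLU-exact: for $|y|\le M$ and $\gamma\in\{0,1\}$,
\[
\sigma(y+2M\gamma-2M)-\sigma(-y+2M\gamma-2M)=\gamma\,y,\qquad \sigma(x)=\max\{x,0\}.
\]
Because the counter only takes integer values in $\{0,1,\dotsc,r_1+r_2\}$, the three selector signals $\gamma_1=\one_{\{t\le r_1-1\}}$, $\gamma_{\mathrm{tr}}=\one_{\{t=r_1\}}$, and $\gamma_2=\one_{\{t\ge r_1+1\}}$ can be realized as fixed continuous piecewise-linear functions of $t$ (each assembled from a few shifted ReLUs) that attain the correct $0/1$ values at exactly those integers. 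Gating the three candidate outputs by $\gamma_1,\gamma_{\mathrm{tr}},\gamma_2$ and summing then produces the desired mode-switched update.

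Next I would carry out the budget accounting. The two subnetworks run in parallel inside $\bmg$: computing $\bmg_1(\bmy)$ costs width $N_1$ and depth $L_1$, computing $\bmg_2(\bmy)$ costs width $N_2$ and depth $L_2$, and the transition output $\bmtildecalL_2(\bmy)$ is affine and essentially free. Padding the shallower branch with identity layers so that both finish together gives depth $\max\{L_1,L_2\}+\calO(1)$, which a careful layer count sharpens to $\max\{L_1+2,L_2+1\}$. The overhead of $6d+2$ neurons is precisely the gating: each of the three candidate outputs has at most $d$ coordinates and costs two ReLU neurons per coordinate to gate ($6d$ in total), plus two neurons to maintain the counter and form its threshold signals. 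Composing $\bmg$ with itself $r_1+r_2+1$ times and sandwiching by $\bmcalL_1$ and $\bmcalL_2$ then yields the claimed identity on $[-A,A]^{d_0}$.

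The main obstacle is the exactness requirement. Since a ReLU network computes one continuous piecewise-linear function, $\bmg$ cannot genuinely branch, so the entire argument hinges on the bounded-input multiplication gadget and hence on first pinning down a single uniform bound $M$ valid across all iterations. Care is needed to ensure $M$ also dominates the wrong-mode evaluations (e.g.\ feeding a $\bmg_1$-phase vector into the $\bmg_2$ subnetwork), so that the gates zero them out without any intermediate value escaping $[-M,M]$ and breaking the gadget; and to check that the selectors evaluate correctly at every integer counter value actually visited, especially at the boundary values $t=r_1$ and $t=r_1+1$ where the mode changes.
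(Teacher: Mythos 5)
Your proposal is correct and rests on the same core mechanism as the paper's proof: augment the state with an integer counter living in the extra coordinates, evaluate the candidate blocks in parallel inside a single $\bmg$, select among them exactly via a ReLU gadget that is only valid on a compact domain (hence the a priori bound $M$ over all visited states \emph{and} all wrong-mode branch evaluations), and absorb the middle affine map $\bmtildecalL_2$ as the one extra composition step that produces the $+1$ in $r_1+r_2+1$. The organizational difference is that the paper never builds a three-way gate: it proves a reusable two-block lemma (a binary selector $\bmpsi(\bmu,\bmv,t)$ driven by the sign of a counter that decreases by $2$ each step, applied to a stacked map $(\bmg_1(\bmx),\bmg_2(\bmx),t-2)$) and then applies that lemma \emph{twice} --- first merging $\bmg_1$ with a one-hidden-layer ReLU realization of $\bmtildecalL_2$, then merging the result with $\bmg_2$ --- which is exactly where the asymmetric depth bound $\max\{L_1+2,L_2+1\}$ and the two auxiliary coordinates (one counter per application of the lemma) come from. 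Your flat three-mode construction is equally valid and, if anything, slightly more economical in depth (the selector signals depend only on $t$ and can be precomputed in parallel with the branches, giving depth $\max\{L_1,L_2\}+1$, which is within the stated budget since the width/depth bounds are upper bounds); the only places requiring the care you already flag are (i) making the three $0/1$ selector signals exact at every visited integer, including the boundary values $t=r_1$ and $t=r_1+1$, and (ii) carrying $\bmtildecalL_2(\bmy)$ and the counter through the padded hidden layers without exceeding the width budget, both of which are routine.
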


The proof of Proposition~\ref{prop:two:blocks:three:affine} is technical and hence is deferred to Section~\ref{sec:proof:prop:two:blocks:three:affine} of the appendix.

\section{Numerical Experiments}
\label{sec:experiment}

The primary objective of this section is to numerically validate the theoretical results presented in Theorems~\ref{thm:main:Lp} and \ref{thm:main:Linfty}. To achieve this goal, we have designed two distinct experiments.
The first experiment, described in Section~\ref{sec:experiment:func:approx}, focuses on a function approximation task. Our aim is to demonstrate that increasing $r$ in our RCNet architecture, denoted as $\calL_2\circ \bmg^{\circ r}\circ \bmcalL_1$, improves the error of function approximation. In this architecture, $\bmcalL_1$ and $\calL_2$ represent affine linear maps, while $\bmg$ represents a ReLU network block.
The second experiment, outlined in Section~\ref{sec:experiment:classification}, focuses on a classification task. We intend to illustrate that increasing the value of $r$ in our RCNet architecture 
$\bmcalL_2\circ \bmg^{\circ r}\circ \bmcalL_1$
leads to enhanced classification performance. By evaluating the accuracy of the classification results, we can empirically validate the advantages of incorporating multiple compositions of the fixed-size ReLU network block.
Through these experiments, our goal is to provide numerical evidence that supports the theoretical claims and showcases the potential of our RCNet architecture in terms of improving approximation power. The results obtained from these experiments will contribute to a comprehensive understanding of the practical implications and advantages of our network design.

Next, let us briefly discuss the expected impact of increasing $r$ in our RCNet architecture $\calL_2\circ \bmg^{\circ r}\circ \bmcalL_1$ on the experiment results. In this discussion, we will focus on the ReLU activation function and consider the three main sources of errors in the test results: approximation error, generalization error, and optimization error.
In our experiments where $r$ is small, we assume that the optimization error is well-controlled due to the utilization of a good optimization algorithm. Thus, we can primarily focus on the effects of increasing $r$ on the approximation and generalization errors.
Increasing $r$ has the potential to reduce the approximation error, as demonstrated by our theoretical results. However, it may also lead to an increase in the generalization error. Therefore, the performance improvement associated with larger values of $r$ depends on whether the approximation error or the generalization error dominates.
If the approximation error is the leading term, then increasing $r$ can be beneficial. To emphasize the approximation error, we design our first experiment to involve a sufficiently complex target function. By choosing a challenging binary classification problem for our second experiment, we create conditions where the approximation error is relatively large. In both experiments, a sufficient number of samples are generated to control the generalization error, ensuring that it does not overshadow the effects of the approximation error.
By carefully setting up these experiments and controlling the different sources of errors, we can gain insights into the impact of increasing $r$ in our RCNet architecture. 

\subsection{Function Approximation}
\label{sec:experiment:func:approx}

To evaluate and compare the approximation capabilities of our RCNet architecture $\calL_2\circ\bmg_n^{\circ r}\circ\bmcalL_1$ across various values of $r$ and $n$, we will utilize it for a function approximation task. This architecture comprises two affine linear maps, $\bmcalL_1$ and $\calL_2$, along with a ReLU network block $\bmg_n$.
To facilitate a comprehensive comparison, we have specifically chosen a target function $f$ that exhibits a high degree of complexity.
The function $f:[0,1]^2\to\R$ is defined as 
\begin{equation*}
	f(\bmx)= \sum_{i=1}^2\sum_{j=1}^2 a_{i,j} \sin(b_i x_i+c_{i,j} x_i x_j) \cos (b_j x_j + d_{i,j} x_i^2)
\end{equation*}
for any $\bmx=(x_1,x_2)\in[0,1]^2$, where  the coefficient matrices are given by
\begin{equation*}
	(a_{i,j})=	\begin{bmatrix*}
		0.3 & 0.2 \\ 0.2 & 0.3
	\end{bmatrix*}, \quad 
	(b_{i})=	\begin{bmatrix*}
		2\pi \\ 4\pi
	\end{bmatrix*},
 \end{equation*}
\begin{equation*}
	(c_{i,j})=	\begin{bmatrix*}
		2\pi & 4\pi \\ 8\pi & 4\pi
	\end{bmatrix*}, \quad \tn{and} \quad 
	(d_{i,j})=	\begin{bmatrix*}
		4\pi & 6\pi \\ 8\pi & 6\pi
	\end{bmatrix*}.
\end{equation*}

To visually represent the target function $f$, we have included illustrations in Figure~\ref{fig:func:approx:f}.
By choosing this intricate function as our target, we can effectively assess the approximation capability of our network architecture across various values of $r$ and $n$.

\begin{figure}[htbp!]
    %
    %
    \begin{subfigure}[c]{0.47\linewidth}
    \centering            \includegraphics[height=0.9\textwidth]{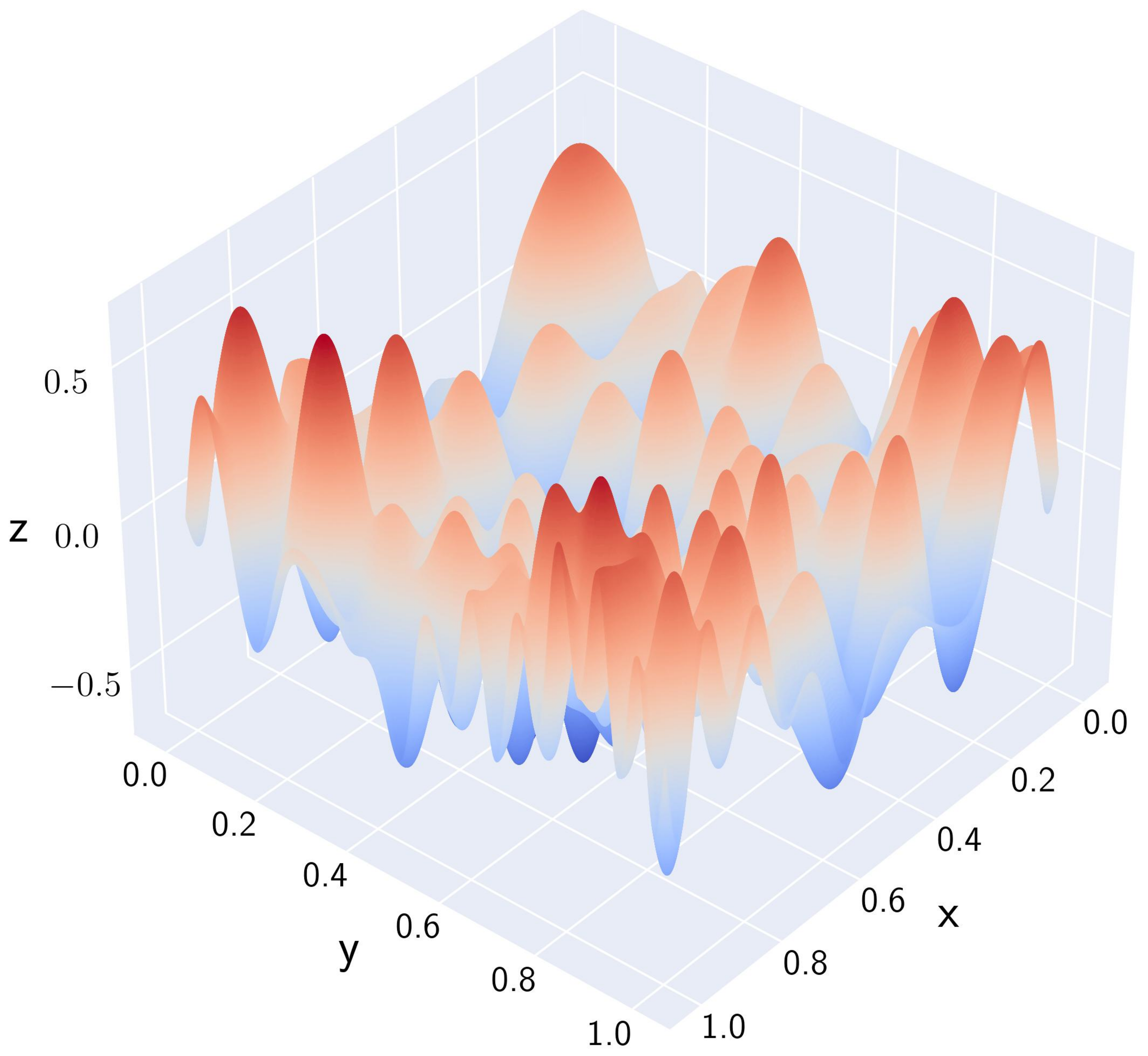}
    \end{subfigure}
    \hfill
    \begin{subfigure}[c]{0.47\linewidth}
    \centering            \includegraphics[height=0.84\textwidth]{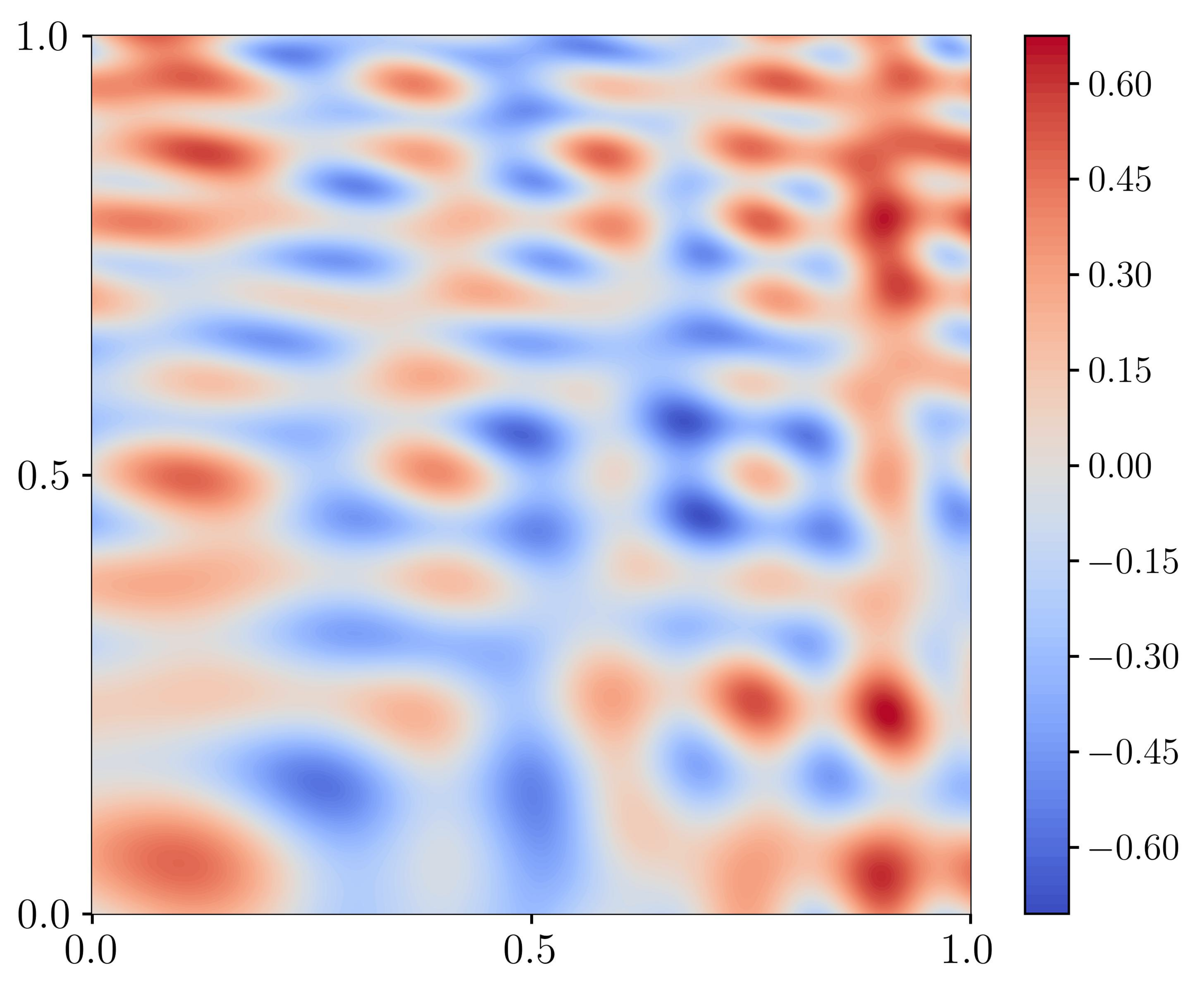}
    \end{subfigure}
    \caption{Illustrations of the target function $f$.}
    \label{fig:func:approx:f}
\end{figure}

In this experiment, we will employ the RCNet architecture $\calL_2\circ\bmg_n^{\circ r}\circ\bmcalL_1$ to approximate the target function $f$ for different values of $r$ and $n$.
Specifically, we consider $r=1,2,3,4$ and $n=100,200$.
The ReLU network block $\bmg_n$ is constructed by combining an affine linear map and the ReLU activation function, i.e., $\bmg_n$ is defined as
\begin{equation*}   
    \bmg_n(\bmx)\coloneqq\sigma(\bmA \bmx+\bmb)
\end{equation*}
for any $\bmx\in\R^{n}$, where $\bmA\in\R^{n\times n}$ and $\bmb\in\R^{n}$ are parameters and $\sigma$ is the ReLU activation function that can be applied element-wise to a vector.
Then, the dimensions of the input and output for the two affine linear maps, $\bmcalL_1:\R^2\to\R^{n}$ and $\calL_2:\R^{n}\to\R$, are determined accordingly.
Notably, the ReLU network block $\bmg_n$ consists of $n^2+n$ parameters. The affine linear map $\bmcalL_1$ contains $2n+n=3n$ parameters, while $\calL_2$ has $n +1$ parameters. Consequently, the total number of parameters in $\calL_2\circ \bmg_n^{\circ r}\circ \bmcalL_1$ is $(n^2+n)+3n+(n+1)=n^2+5n+1$ 
for different values of $r$ and $n$.
It is essential to note that when $r\geq 2$, the parameters of $\calL_2\circ \bmg_n^{\circ r}\circ \bmcalL_1$ are partially shared through repetitions of the ReLU network block $\bmg_n$. Our objective is to provide numerical evidence demonstrating that increasing the value of $r$ results in improved test losses for each fixed $n$.

Before presenting the numerical results, let us delve into the hyperparameters utilized for training our network architecture $\calL_2\circ \bmg_n^{\circ r}\circ \bmcalL_1$ for varying values of $r$ and $n$, specifically $r=1,2,3,4$ and $n=100,200$.
To generate the training and test samples, we employ the uniform distribution, resulting in $10^6$ training samples and $10^5$ test samples in $[0,1]^2$.
During the training process, we utilize the RAdam optimization method \cite{Liu2020On}, which aids in optimizing the network parameters.
We set the mini-batch size for training to $500$, which signifies the number of training samples processed in each iteration. Our training process comprises a total of $500$ epochs, representing complete passes through the training dataset.
The learning rate is adjusted every $5$ epochs. More specifically, the learning rate during epochs $5(i-1)+1$ to $5i$ is set to $0.002\times 0.9^{i-1}$ for $i=1,2,\dotsc,100$. This adaptive adjustment allows for fine-tuning the model during training.
To train our model, we employ the mean squared error (MSE) loss function, which measures the average squared difference between the network-generated function and the target function.
To ensure the reliability of our experiment, we repeat it $12$ times. From these repetitions, we discard $3$ top-performing and $3$ bottom-performing trials based on the average test accuracy of the last $100$ epochs. The target accuracy is then determined by taking the average of the test accuracies from the remaining $6$ trials for each epoch.

We are now prepared to present the experiment results that compare the numerical performances of $\calL_2\circ \bmg_n^{\circ r}\circ \bmcalL_1$ for different values of $r$ and $n$, specifically $r=1,2,3,4$ and $n=100,200$. The test losses over the last $100$ epochs are averaged to obtain the target losses, considering two types of loss functions: mean squared error (MSE) and maximum (MAX) loss functions.
Table~\ref{tab:loss:comparison} provides a comprehensive comparison of the numerical results obtained from $\calL_2\circ \bmg_n^{\circ r}\circ \bmcalL_1$ for the given values of $r$ and $n$. Additionally, Figure~\ref{fig:training:test:loss} illustrates the test losses measured in MSE on a logarithmic scale, allowing for an intuitive comparison.
The values presented in Table~\ref{tab:loss:comparison} and the trends observed in Figure~\ref{fig:training:test:loss} clearly indicate a significant improvement in test losses with increasing values of $r$.
These findings align with the theoretical results stated in Theorems~\ref{thm:main:Lp} and \ref{thm:main:Linfty}, providing further confirmation of the effectiveness of increasing $r$.

Lastly, it is crucial to acknowledge that
further increasing the value of $r$ may not lead to additional improvements in the results due to the inherent challenges involved in optimizing deep learning models. Issues such as local minima, saddle points, and vanishing gradients make it increasingly difficult to identify the global minimizer, particularly for larger values of $r$.





\begin{table}[htbp!]
	\centering  
 \caption{Test loss comparison.}
	\label{tab:loss:comparison}
	\resizebox{0.99\linewidth}{!}{ 
		\begin{tabular}{ccccccccc} 
			\toprule
			  \multirow{2}{*}{
     \raisebox{-3pt}{$\calL_2\circ\bmg_n^{\circ  r}\circ \bmcalL_1$}
     }  
     &     \multicolumn{2}{c}{$n=100$}
      & \multicolumn{2}{c}{$n=200$}\\
			 \cmidrule(lr){2-3}    
                \cmidrule(lr){4-5}
			 & MSE  & MAX & MSE  & MAX
    \\
			\midrule
			 \rowcolor{mygray}
			 $r=1$ &  
$1.22\times10^{-2}$ & $4.21\times10^{-1}$ & 
$8.92\times10^{-3}$ & $4.05\times10^{-1}$ 
\\	
			\midrule
			 $r=2$ & 
$2.02\times10^{-4}$ & $1.17\times10^{-1}$ & 
$4.60\times10^{-5}$ & $7.28\times10^{-2}$
\\		
    			\midrule
			 \rowcolor{mygray}
			 $r=3$  & 
$3.46\times10^{-5}$ & $4.65\times10^{-2}$ & 
$3.51\times10^{-6}$ & $1.66\times10^{-2}$\\	
			\midrule
			 $r=4$  & 
$1.27\times10^{-5}$ & $2.77\times10^{-2}$ & 
$1.11\times10^{-6}$ & $8.61\times10^{-3}$ \\	
			\bottomrule
		\end{tabular} 
	}
\end{table} 

\begin{figure}[htbp!]
\centering
    \begin{subfigure}[c]{0.49\linewidth}
    \centering            \includegraphics[width=0.998\textwidth]{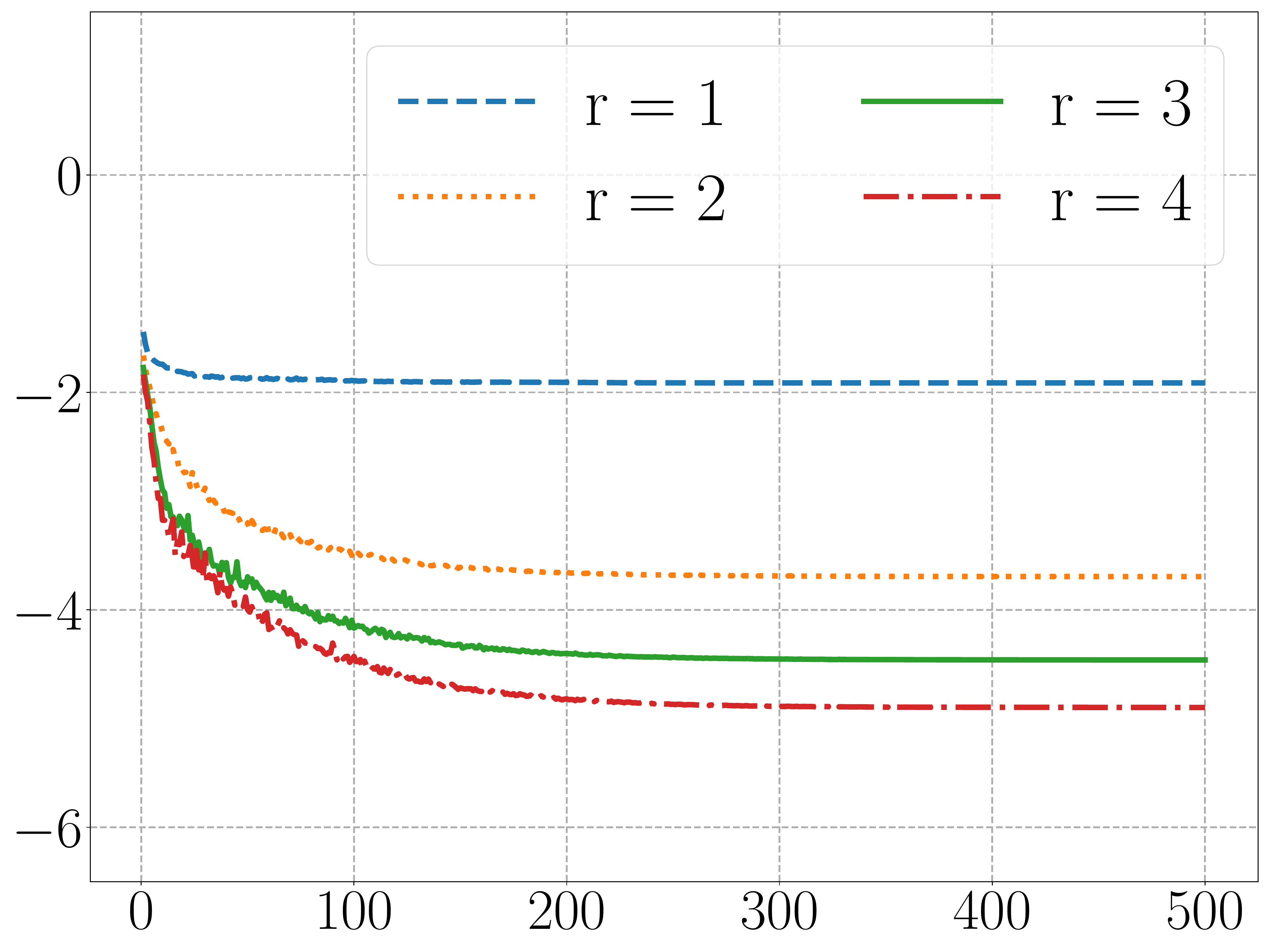}
    \subcaption{$n=100$.}
    \end{subfigure}
    \hfill
    \begin{subfigure}[c]{0.49\linewidth}
    \centering            \includegraphics[width=0.998\textwidth]{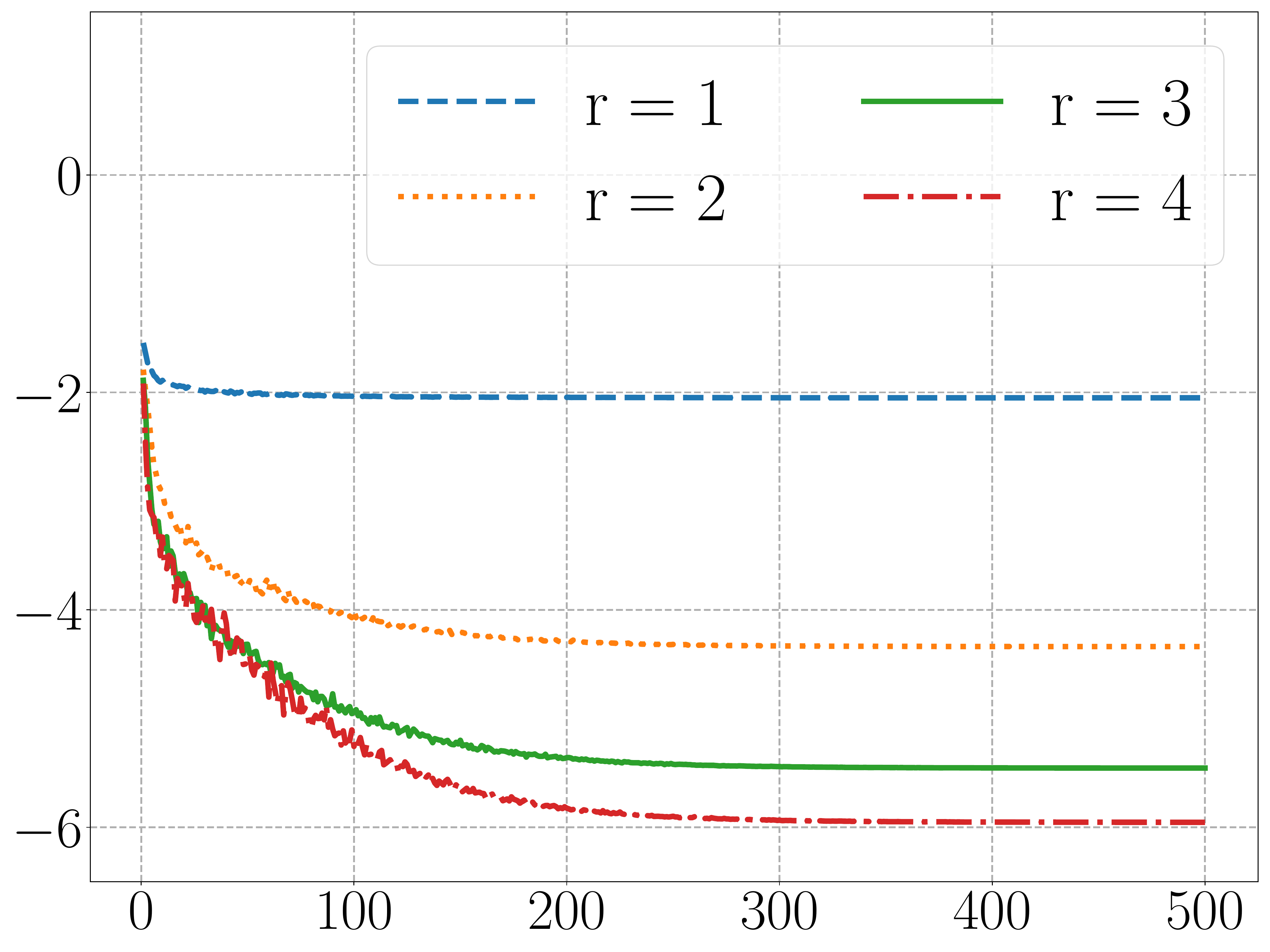}
    \subcaption{$n=200$.}
    \end{subfigure}
\caption{
Test losses measured in MSE across epochs: the $x$-axis represents the epoch number, while the $y$-axis corresponds to the base-10 logarithm of the test loss.
}
	\label{fig:training:test:loss}
\end{figure}



\subsection{Classification}
\label{sec:experiment:classification}



To assess and compare the approximation capabilities of our RCNet architecture $\bmcalL_2\circ\bmg_n^{\circ r}\circ\bmcalL_1$ across different values of $r$ and $n$, we will employ it for a classification task. This architecture consists of two affine linear maps, $\bmcalL_1$ and $\bmcalL_2$, along with a ReLU network block $\bmg_n$.
For a comprehensive comparison, we have selected a complex binary classification experiment utilizing the Archimedean spiral, as proposed in \cite{shijun:net:arc:beyond:width:depth}. The objective of this classification problem is to accurately classify samples from two distinct sets, denoted as $\calS_0$ and $\calS_1$. These two sets are constructed based on the Archimedean spiral, as illustrated in Figure~\ref{fig:spiral}.

\begin{figure}[htbp!]
    \centering
    \begin{subfigure}[c]{0.487\linewidth}
    \centering            \includegraphics[width=0.901\textwidth]{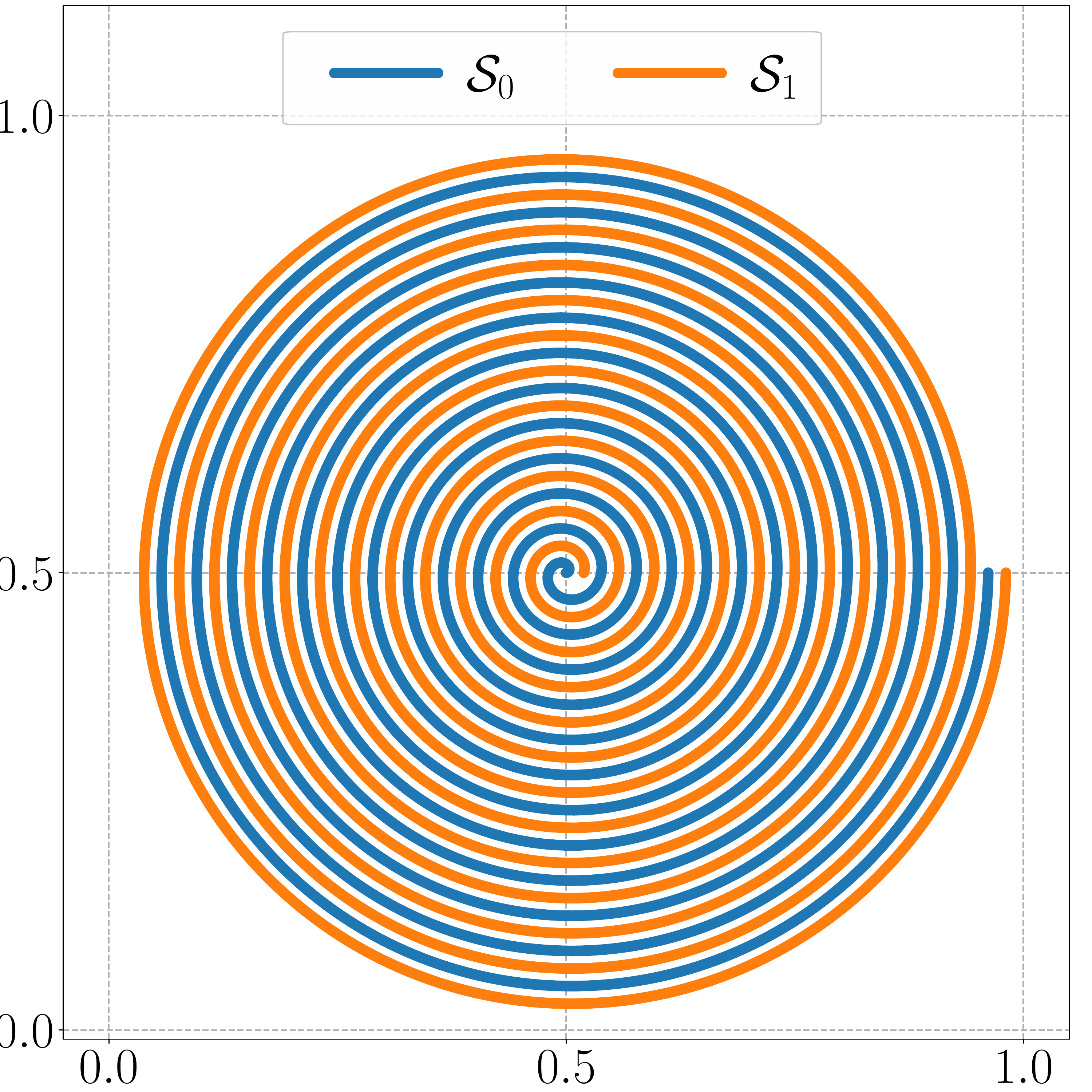}
    \end{subfigure}
    \hfill
    \begin{subfigure}[c]{0.487\linewidth}
    \centering            \includegraphics[width=0.901\textwidth]{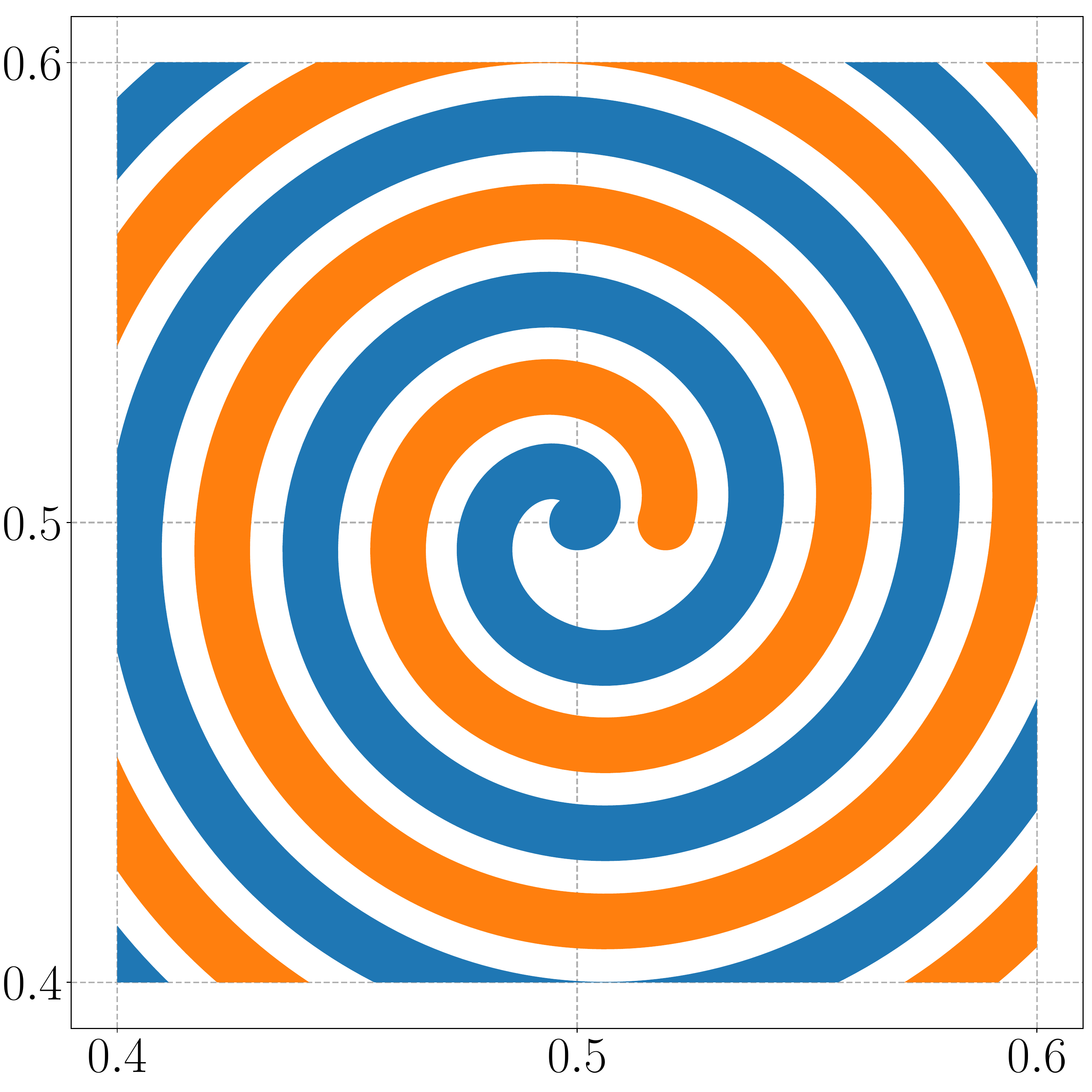}
    \end{subfigure}
    	\caption{Illustrations for $\calS_0$ and $\calS_1$. 
    	}
    	\label{fig:spiral}
\end{figure}

Let us delve into the construction details of the sets $\calS_0$ and $\calS_1$. An Archimedean spiral can be represented by
the equation
$r=a+b\theta$
in polar coordinates $(r,\theta)$ for proper $a,b\in\R$.
We begin by defining two curves
\begin{equation*}
	\caltildeC_i\coloneqq \Big\{\big(r_i\cos\theta,\,r_i\sin\theta\big):
	   r_i=a_i+b_i\theta, \,\   
	\theta\in [0,s\pi]
	\Big\}
\end{equation*}
for $i=0,1$, where $a_0=0$, $a_1=1$, $b_0=b_1={1}/{\pi}$, and $s=24$.
Next, 
we normalize $\caltildeC_i$ to obtain $\calC_i\subseteq [0,1]^2$ for each $i\in\{0,1\}$, where $\calC_i$ is defined as
\begin{equation*}
	{\calC}_i\coloneqq \bigg\{(x,y):x=\tfrac{\tildex+(s+2)}{2(s+2)},\    
	y=\tfrac{\tildey+(s+2)}{2(s+2)},\    
	(\tildex,\tildey)\in \caltildeC_i
	\bigg\}
\end{equation*}
for $i=0,1$. 
With $\calC_0$ and $\calC_1$ defined, we can construct the target sets $\calS_0$ and $\calS_1$ as
\begin{equation*}
	{\calS}_i\coloneqq \bigg\{(x,y):
	\sqrt{(x-u)^2+(y-v)^2}\le \varepsilon,\   
	(u,v)\in {\calC}_i
	\bigg\}
\end{equation*}
for $i=0,1$, where $\varepsilon=0.006$ in our experiments.
Refer to Figure~\ref{fig:spiral} for illustrations of $\calS_0$ and $\calS_1$.

In this experiment, we will employ the network architecture $\bmcalL_2\circ\bmg_n^{\circ r}\circ\bmcalL_1$ to classify samples in $\calS_0\cup \calS_1$ for different values of $r$ and $n$, specifically $r=1,2,3,4$ and $n=30,40$. 
Here, $\bmcalL_1$ and $\bmcalL_2$ represent two affine linear maps, and $\bmg_n$ corresponds to a ReLU network block. 
The construction of the ReLU network block $\bmg_n$ involves combining an affine linear map with the ReLU activation function.
Mathematically, $\bmg_n$ is defined as
\begin{equation*}   
    \bmg_n(\bmx)\coloneqq\sigma(\bmA \bmx+\bmb)
\end{equation*}
for any $\bmx\in\R^{n}$, where $\bmA\in\R^{n\times n}$ and $\bmb\in\R^{n}$ are parameters and $\sigma$ 
denotes the ReLU activation function that can be applied element-wise to a vector.
Subsequently, the input and output dimensions of the two affine linear maps, $\bmcalL_1:\mathbb{R}^2\rightarrow\mathbb{R}^{n}$ and $\bmcalL_2:\mathbb{R}^{n}\rightarrow\mathbb{R}^2$, are appropriately determined. 
The total number of parameters in $\bmcalL_2\circ \bmg_n^{\circ r}\circ \bmcalL_1$ can be easily verified to be $(2n+2)+(n^2+n)+(2n+n)=n^2+6n+2$ for different values of $r$ and $n$.
An important observation is that for $r\geq 2$, the parameters in $\bmcalL_2\circ \bmg_n^{\circ r}\circ \bmcalL_1$ are partially shared due to the repeated utilization of the ReLU network block $\bmg_n$.
Our objective is to provide numerical evidence demonstrating that increasing the value of $r$ results in improved test accuracies  for each fixed $n$.

Before proceeding with the numerical results, let us provide an overview of the hyperparameters employed in training our network architecture $\bmcalL_2\circ \bmg_n^{\circ r}\circ \bmcalL_1$ for different values of $r$ and $n$, specifically $r=1,2,3,4$ and $n=30,40$.
First, we generate training and test samples from $\calS_0$ and $\calS_1$ using the uniform distribution. Specifically, we randomly generate $3\times 10^5$ training samples and $3\times 10^4$ test samples for each class. These $6\times 10^5$ training samples are used for network training, while $6\times 10^4$ test samples are utilized to compute the test accuracy.
For optimization, we employ the RAdam method \cite{Liu2020On}. The training process consists of $1000$ epochs with a mini-batch size of $300$. The learning rate  is defined as $0.001\times0.95^{i-1}$  during epochs $5(i-1)+1$ to $5i$ for $i=1,2,\cdots,200$. 
To evaluate the model output, we apply the softmax activation function to the network output and employ the cross-entropy loss function to measure the loss between the target function and the network output.
To guarantee the standardization of training and test samples, we perform a rescaling procedure to adjust their mean to $0$ and standard deviation to $1$.
To ensure reliability of our results, we conduct the experiment $12$ times. Among these trials, 
we exclude $3$ top-performing and $3$ bottom-performing trials based on the average test accuracy over the last $100$ epochs. 
The target accuracy is then determined by averaging the test accuracies from the remaining $6$ trials for each epoch.

Let us now present the results of our experiments by comparing the numerical performances of $\bmcalL_2\circ \bmg_n^{\circ r}\circ \bmcalL_1$ for 
for varying values of $r$ and $n$, specifically $r=1,2,3,4$ and $n=30,40$. 
The target test accuracy is computed by averaging the test accuracies over the last $100$ epochs.
Table~\ref{tab:accuracy:comparison} provides a comprehensive overview of the test accuracies obtained by $\bmcalL_2\circ \bmg_n^{\circ r}\circ \bmcalL_1$ for different values of $r$ and $n$. To enhance the information presented in Table~\ref{tab:accuracy:comparison}, Figure~\ref{fig:test:accuracy} serves as a complementary visual representation. It showcases graphical depictions of the data, enabling an intuitive comparison and facilitating a visual analysis of the performance trends.
The results presented in Table~\ref{tab:accuracy:comparison} combined with the trends observed in Figure~\ref{fig:test:accuracy} confirm our initial expectation that increasing the value of $r$ leads to improved test accuracies. 
These experiment results provide compelling numerical evidence demonstrating the effectiveness of increasing $r$, which aligns with the theoretical results stated in Theorems~\ref{thm:main:Lp} and \ref{thm:main:Linfty}.

Finally, it is important to note that further increasing $r$ may not necessarily result in additional improvements in the results. This is because optimizing deep learning models is notoriously challenging, as it involves various issues such as local minima, saddle points, and vanishing gradients. In our experiments, the primary difficulty lies in identifying the global minimizer, especially when dealing with large values of $r$. 

\begin{table}[ht]
	\caption{Test accuracy comparison.} 
	\label{tab:accuracy:comparison}	
	\centering  
	\resizebox{0.912\linewidth}{!}{ 
		\begin{tabular}{ccccccccc} 
			\toprule
			$\bmcalL_2\circ \bmg_n^{\circ r}\circ \bmcalL_1$  & $r=1$ & $r=2$  & $r=3$ & $r=4$ \\
			\midrule
		\rowcolor{mygray}	 $n=30$ 
& $0.574781$
& $0.729258$
& $0.828750$
& $0.874866$ \\		
            \midrule
			$n=40$ 
& $0.575149$
& $0.797345$
& $0.871617$
& $0.904041$\\	
			\bottomrule
		\end{tabular} 
	}
\end{table} 


\begin{figure}[ht]
  \centering
      \begin{subfigure}[c]{0.487\linewidth}
    \centering            \includegraphics[width=0.9975\textwidth]{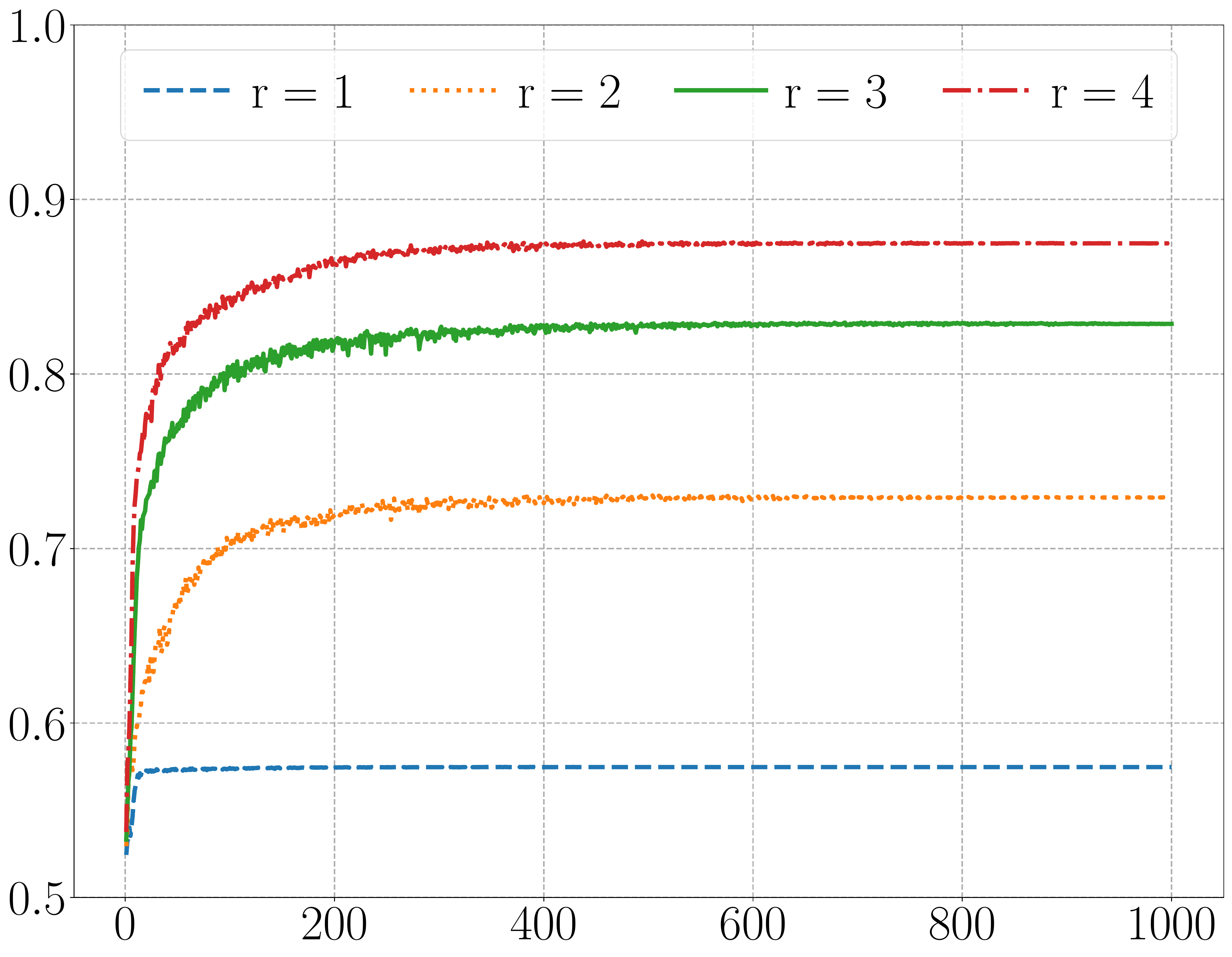}
    \subcaption{$n=30$.}
    \end{subfigure}
    \hfill
    \begin{subfigure}[c]{0.487\linewidth}
    \centering            \includegraphics[width=0.9975\textwidth]{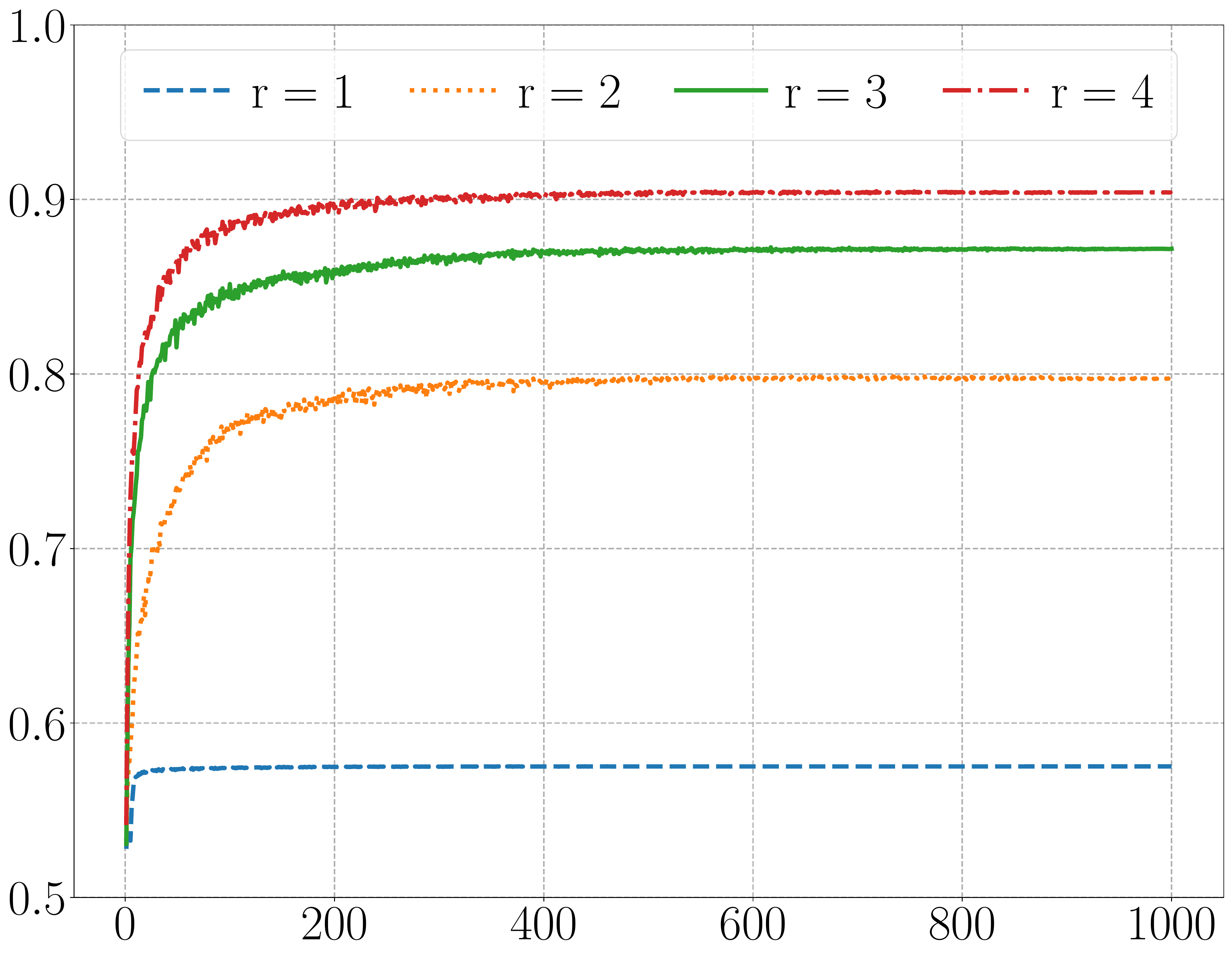}
    \subcaption{$n=40$.}
    \end{subfigure}
	\caption{
 Test accuracies across epochs: the $x$-axis represents the epoch number, while the $y$-axis corresponds to the test accuracy.
 }
	\label{fig:test:accuracy}

\end{figure}

\section{Conclusion}
\label{sec:conclusion}

This paper investigates the expressive power of deep neural networks from the perspective of function compositions. 
We demonstrate that  the repeated compositions of a single fixed-size ReLU network exhibit 
surprising
expressive power, despite the 
limited expressive capabilities of the individual network itself. 
As shown in Theorems~\ref{thm:main:Lp} and \ref{thm:main:Linfty}, our RCNet architecture $\mathcal{L}_2\circ \bm{g}^{\circ r}\circ \bm{\mathcal{L}}_1$ can approximate any continuous function $f\in C([0,1]^d)$ with an error $\mathcal{O}\big(\omega_f(r^{-1/d})\big)$. Here, $\bm{g}$ represents a fixed-size ReLU network, while $\bm{\mathcal{L}}_1$ and $\mathcal{L}_2$ correspond to two affine linear maps matching the dimensions.
Furthermore, we explore the connection between our findings and dynamical systems. Our results reveal that a continuous-depth network generated through a dynamical system possesses 
enormous
approximation capabilities, even when the dynamics function is time-independent and realized by a fixed-size ReLU network. 
Finally, we conduct experiments to provide numerical evidence that validates the theoretical results stated in Theorems~\ref{thm:main:Lp} and \ref{thm:main:Linfty}.


It is worth mentioning that our analysis is currently focused on the ReLU activation function and fully connected network architectures. Extending our results to other activation functions, such as the sigmoid and tanh functions, as well as different neural network architectures, such as convolutional neural networks, 
would be of great interest for future research.
Additionally, the numerical examples presented in this paper are relatively simple. Further exploration of the numerical performance of our network architecture and its application to real-world problems would be an intriguing direction for future studies. 
\ifdefined\isaccepted
\section*{Acknowledgements}

We extend our gratitude to Zuowei Shen, Yimin Zhong, and Haomin Zhou for their valuable insights and constructive feedback on our theoretical proofs and numerical experiments. Their contributions greatly enhanced the quality and rigor of our work.
Jianfeng Lu was partially supported by
NSF grants CCF-1910571 and DMS-2012286.
Hongkai Zhao was partially supported by NSF grant DMS-2012860.

\fi

%

\bibliography{references}
\bibliographystyle{icml2023}

\newpage
\appendix
\onecolumn

\cleardoublepage
\vspace*{-10pt}
\tableofcontents

\newpage
\section{Proofs of Theorems~\ref{thm:main:Lp} and \ref{thm:main:Linfty}}
\label{sec:proof:main}



As we shall see later in the proofs of Theorems~\ref{thm:main:Lp} and \ref{thm:main:Linfty}, our main approach involves constructing a piecewise constant function to approximate the desired continuous function. However, the inherent continuity of ReLU networks hinders their ability to uniformly approximate piecewise constant functions effectively. To address this limitation, we introduce the concept of the trifling region $\Omega([0,1]^d,K,\delta)$ as defined in Equation~\eqref{eq:triflingRegionDef}. By utilizing ReLU networks, we can accurately represent piecewise constant functions outside the trifling region.

To streamline the proofs of Theorems~\ref{thm:main:Lp} and \ref{thm:main:Linfty}, we introduce an auxiliary theorem, referred to as Theorem~\ref{thm:main:gap} below, where we disregard the approximation within the trifling region $\Omega([0,1]^d,K,\delta)$.
\begin{theorem}
	\label{thm:main:gap}
	Given  a continuous function $f\in C([0,1]^d)$, for any $r\in \N^+$,
	there exist $\bmg\in \nn[\big]{39d+24}{3}{5d+3}{5d+3}$
	and two affine linear maps
	$\bmcalL_1:\R^d\to\R^{5d+3}$ and $\calL_2:\R^{5d+3}\to\R$
	such that 
	\begin{equation*}
		\big|\calL_2\circ \bmg^{\circ (3r-1)}\circ \bmcalL_1(\bmx)-f(\bmx)\big|\le 5\sqrt{d}\,\omega_f(r^{-1/d})\quad \tn{for any $\bmx\in [0,1]^d\backslash\Omega([0,1]^d,K,\delta)$},
	\end{equation*}
	where $K=\lfloor r^{1/d}\rfloor$ and $\delta$ is an arbitrary number in $(0,\tfrac{1}{3K}]$.
\end{theorem}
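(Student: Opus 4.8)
The plan is to realize the three-step construction of Section~\ref{sec:ideas} rigorously and then fold the two composed blocks into one via Proposition~\ref{prop:two:blocks:three:affine}. First I fix the discretization: put $K=\lfloor r^{1/d}\rfloor$, and for each $\bmbeta\in\{0,1,\dotsc,K-1\}^d$ let $Q_\bmbeta$ be the cube of side $1/K$ anchored at the representative $\bmx_\bmbeta=\bmbeta/K$, with the thin inter-cube slabs collected into the trifling region $\Omega([0,1]^d,K,\delta)$ of \eqref{eq:triflingRegionDef}. For $\bmx\in Q_\bmbeta\setminus\Omega$ the diameter bound gives $|f(\bmx)-f(\bmx_\bmbeta)|\le\omega_f(\sqrt d/K)$, so all that remains is to build $\phi$ with $\phi(\bmx)\approx f(\bmx_\bmbeta)$ on these cells and in the prescribed architecture.

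\textbf{Steps 1 and 2.} For Step~1 I apply Proposition~\ref{prop:floor:approx} coordinatewise with $n=K$ (after rescaling $[0,1]\to[0,K]$): this produces the step map $\phi_1$ with $\phi_1(x_i)=\beta_i$ on the retained part of each subinterval, and stacking $d$ parallel copies realizes $\bmPhi_1(\bmx)=\bmbeta$ exactly on $[0,1]^d\setminus\Omega$ in the form $\bmcalL\circ\bmg_1^{\circ r_1}\circ\bmcalL'$ with $\bmg_1\in\nn{9d}{1}{5d}{5d}$. For Step~2 I reduce the value assignment to a one-dimensional fit: I pick an \emph{affine} $\calL:\R^d\to\R$ that injects the grid $\{0,\dotsc,K-1\}^d$ into $\{0,1,\dotsc,n-1\}$ and, writing $k=\calL(\bmbeta)$, set the targets $y_k=f(\bmx_\bmbeta)-\min_{[0,1]^d}f\ge0$ (the shift is undone later inside $\calL_2$). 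Proposition~\ref{prop:point:fitting} then supplies a scalar map $\tildephi_2$ realized by a fixed block $\bmg_2\in\nn{16}{2}{6}{6}$ with $|\tildephi_2(k)-y_k|\le\varepsilon$, and $\phi_2:=\tildephi_2\circ\calL$ satisfies $|\phi_2(\bmbeta)-f(\bmx_\bmbeta)|\le\varepsilon$. The catch is that Proposition~\ref{prop:point:fitting} demands $|y_k-y_{k-1}|\le\varepsilon$, i.e. \emph{consecutive} cells in the chosen enumeration must carry close $f$-values; this constrains both the enumeration and the choice of $\calL$.

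\textbf{Assembly and error.} Composing, $\phi=\phi_2\circ\bmPhi_1$ has precisely the form \eqref{eq:phi:full:form} (the affine maps sandwiched between the two repeated blocks merge into a single affine map), and Proposition~\ref{prop:two:blocks:three:affine} with ambient dimension $d'=5d+1\ge\max\{5d,6\}$ collapses it to $\calL_2\circ\bmg^{\circ(r_1+r_2+1)}\circ\bmcalL_1$ with $\bmg\in\nn{9d+16+6(5d+1)+2}{\max\{1+2,2+1\}}{5d+3}{5d+3}=\nn{39d+24}{3}{5d+3}{5d+3}$, as required. The composition count $r_1+r_2+1=3r-1$ is forced by taking the two free depth parameters $m,m'$ (legal since both propositions allow $m\ge n$) with $m+m'=3r$; this is feasible because the number of fitted points stays $O(K^d)=O(r)$ and $K\le r^{1/d}$. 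Finally, on $[0,1]^d\setminus\Omega$ the triangle inequality gives $|\phi(\bmx)-f(\bmx)|\le\varepsilon+\omega_f(\sqrt d/K)$; choosing $\varepsilon=\omega_f(\sqrt d/K)$ and using $K\ge r^{1/d}/2$ with the elementary scaling $\omega_f(ct)\le\lceil c\rceil\,\omega_f(t)$ brings the bound to $5\sqrt d\,\omega_f(r^{-1/d})$.

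\textbf{Main obstacle.} The genuinely delicate point is Step~2's hypothesis $|y_k-y_{k-1}|\le\varepsilon$. The naive lexicographic (base-$K$) encoding is affine but fails it: at each ``carry'' the two consecutive representatives are order-$1$ apart, so the jump can be as large as $\omega_f(1)$, far exceeding $\varepsilon$. I would resolve this by enumerating the cells so that successive representatives are neighbours at distance $1/K$ (a boustrophedon/space-filling order, giving $|y_k-y_{k-1}|\le\omega_f(1/K)$), or equivalently by spacing the affine image $\calL(\bmbeta)$ so as to leave gaps at carries that are filled with linear interpolants of the target values; in either case one must keep the total number of fitted points $O(r)$ so the $3r-1$ composition budget is respected, and verify that the resulting $\calL$ is still affine. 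Reconciling this neighbour-preserving order with the affine, fixed-size, parameter-shared architecture---together with the exact width/depth accounting above---is the crux, after which the error estimate is routine.
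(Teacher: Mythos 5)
Your proposal follows essentially the same route as the paper: the same three-step decomposition via Propositions~\ref{prop:floor:approx}, \ref{prop:point:fitting}, and \ref{prop:two:blocks:three:affine}, the same width/depth/dimension accounting (ambient dimension $5d+1$, hence $\bmg\in\nn{39d+24}{3}{5d+3}{5d+3}$), the same composition budget ($m=r$ for the index map and $m=2r$ for the value fit, giving $(r-1)+(2r-1)+1=3r-1$), and the same final error constant. The crux you flag is resolved in the paper by precisely the second of your two options: the affine encoding $\hatcalL_3(\bmbeta)=\tfrac{\beta_d}{2K^d}+\sum_{i=1}^{d-1}\tfrac{\beta_i}{K^i}$ maps the grid bijectively onto a set $\calA_1\subseteq\{\tfrac{j}{2K^d}\}$ that leaves $K$ spare grid points at every carry, across which the interpolant $h$ is taken linear, so consecutive targets differ by at most $\max\{\omega_f(\tfrac{\sqrt d}{K}),\,\tfrac{\omega_f(\sqrt d)}{K}\}\le\omega_f(\tfrac{\sqrt d}{K})$ while the number of fitted points $2K^d\le 2r=m$ respects the budget. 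Be aware, though, that your first option (a boustrophedon or space-filling order) is not viable in this architecture, since such an enumeration of $\{0,\dotsc,K-1\}^d$ is not the restriction of an affine map and therefore could not be absorbed into the single affine layer between the two repeated blocks.
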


The proof of Theorem~\ref{thm:main:gap} will be presented in Section~\ref{sec:proof:thm:main:gap}.
Assuming the validity of Theorem~\ref{thm:main:gap}, we will provide the detailed proofs of  Theorems~\ref{thm:main:Lp} and \ref{thm:main:Linfty} in Sections~\ref{sec:proof:main:Lp} and \ref{sec:proof:main:Linfty}, respectively. 
To enhance clarity, Section~\ref{sec:notation} offers a concise overview of the notations employed throughout this paper.


\subsection{Notations}
\label{sec:notation}

Below is a summary of the fundamental notations employed in this paper.
\begin{itemize}
	\item The set difference of two sets $A$ and $B$ is denoted as $A\backslash B:=\{x:x\in A,\ x\notin B\}$. 
 
	\item The sets of natural numbers (including $0$), integers, rational numbers, and real numbers are denoted as $\N$, $\Z$, $\Q$, and $\R$, respectively. Set $\N^+=\N\backslash\{0\}$.

 	\item The indicator (characteristic) function of a set $A$ is denoted as $\one_{A}$,  which takes the value $1$ on elements of $A$ and $0$ otherwise.
	
	
	\item The floor and ceiling functions of a real number $x$ are denoted as 
 $\lfloor x\rfloor=\max \{n: n\le x,\ n\in \Z\}$ and $\lceil x\rceil=\min \{n: n\ge x,\ n\in \Z\}$.
	


	\item 	Vectors and matrices are represented by bold lowercase and uppercase letters, respectively. For example,
	$\bma=(a_1,\dotsc,a_d)\in\R^d$,
 $\bm{A}\in\mathbb{R}^{m\times n}$ is a real matrix of size $m\times n$, and $\bm{A}^T$ denotes the transpose of $\bm{A}$.  

	
		\item Slicing notation is used for a  vector $\bmx=(x_1,\dotsc,x_d)\in\R^d$,  where $[\bmx]_{[n:m]}$ denotes a slice of $\bmx$ from its $n$-th to the $m$-th entries and  $[\bmx]_{[n]}$ denotes the $n$-th entry of $\bmx$ for any $n,m\in \{1,2,\dotsc,d\}$ with $n\le m$. For example, if $\bmx=(x_1,x_2,x_3)\in\R^3$, then $[5\bmx]_{[2:3]}=(5x_2,5x_3)$ and $[6\bmx+1]_{[3]}=6x_3+1$.
	
	\item Given any $p\in [1,\infty]$, the $p$-norm (or $\ell^p$-norm) of a vector $\bmx=(x_1,\dotsc,x_d)\in\R^d$ is defined via
	\begin{equation*}
		\|\bmx\|_p=\|\bmx\|_{\ell^p}\coloneqq \big(|x_1|^p+\cdots+|x_d|^p\big)^{1/p}\quad \tn{if $p\in [1,\infty)$}
	\end{equation*}
and
 \begin{equation*}		\|\bmx\|_{\infty}=\|\bmx\|_{\ell^\infty}\coloneqq \max\big\{|x_i|: i=1,2,\dotsc,d\big\}.
	\end{equation*}
	
	\item By convention, $\sum_{j=n}^{m} a_j=0$
	if $n>m$, no matter what $a_j$ is for each $j$.
	
	\item For any $\theta= \sum_{i=1}^{n}\theta_i 2^{-i}\in[0,1)$, we use $\bin 0.\theta_1\theta_2\cdots \theta_L$ to denote the binary representation of $\theta$, i.e., $\theta=\sum_{i=1}^{n}\theta_i2^{-i}=\bin 0.\theta_1\theta_2\cdots \theta_n$. 
	
	\item 
	Given any $K\in \N^+$ and $\delta\in (0, \tfrac{1}{K})$, we define a trifling region  $\Omega([0,1]^d,K,\delta)$ of $[0,1]^d$ via
	\begin{equation}
		\label{eq:triflingRegionDef}
		\Omega([0,1]^d,K,\delta)\coloneqq\bigcup_{j=1}^{d} \bigg\{\bmx=(x_1,\dotsc,x_d)\black{\in [0,1]^d}: x_j\in \bigcup_{k=1}^{K-1}\Big(\tfrac{k}{K}-\delta,\   \tfrac{k}{K}\Big)\bigg\}.
	\end{equation}
	In the degenerate case $K=1$, $\Omega([0,1]^d,K,\delta)=\emptyset$. 
 Figure~\ref{fig:region} presents two examples of trifling regions.
	
	\begin{figure}[htbp!]  
 
		\centering
		\begin{minipage}{0.805\textwidth}
			\centering
			\begin{subfigure}[b]{0.33\textwidth}
				\centering            
				\includegraphics[width=0.999\textwidth]{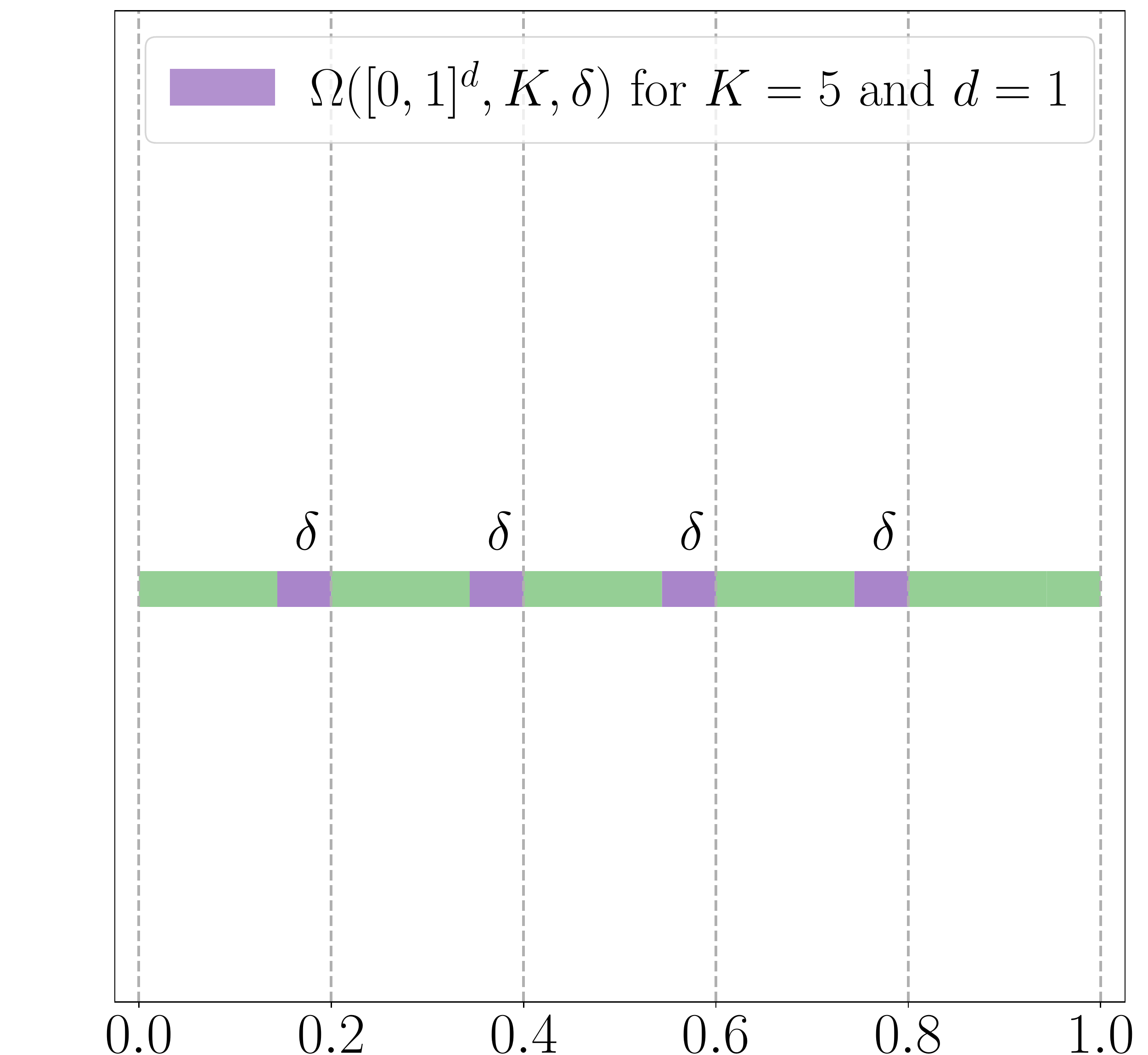}
				\subcaption{}
			\end{subfigure}
			\begin{minipage}{0.07\textwidth}
				\,
			\end{minipage}
			\begin{subfigure}[b]{0.33\textwidth}
				\centering            \includegraphics[width=0.999\textwidth]{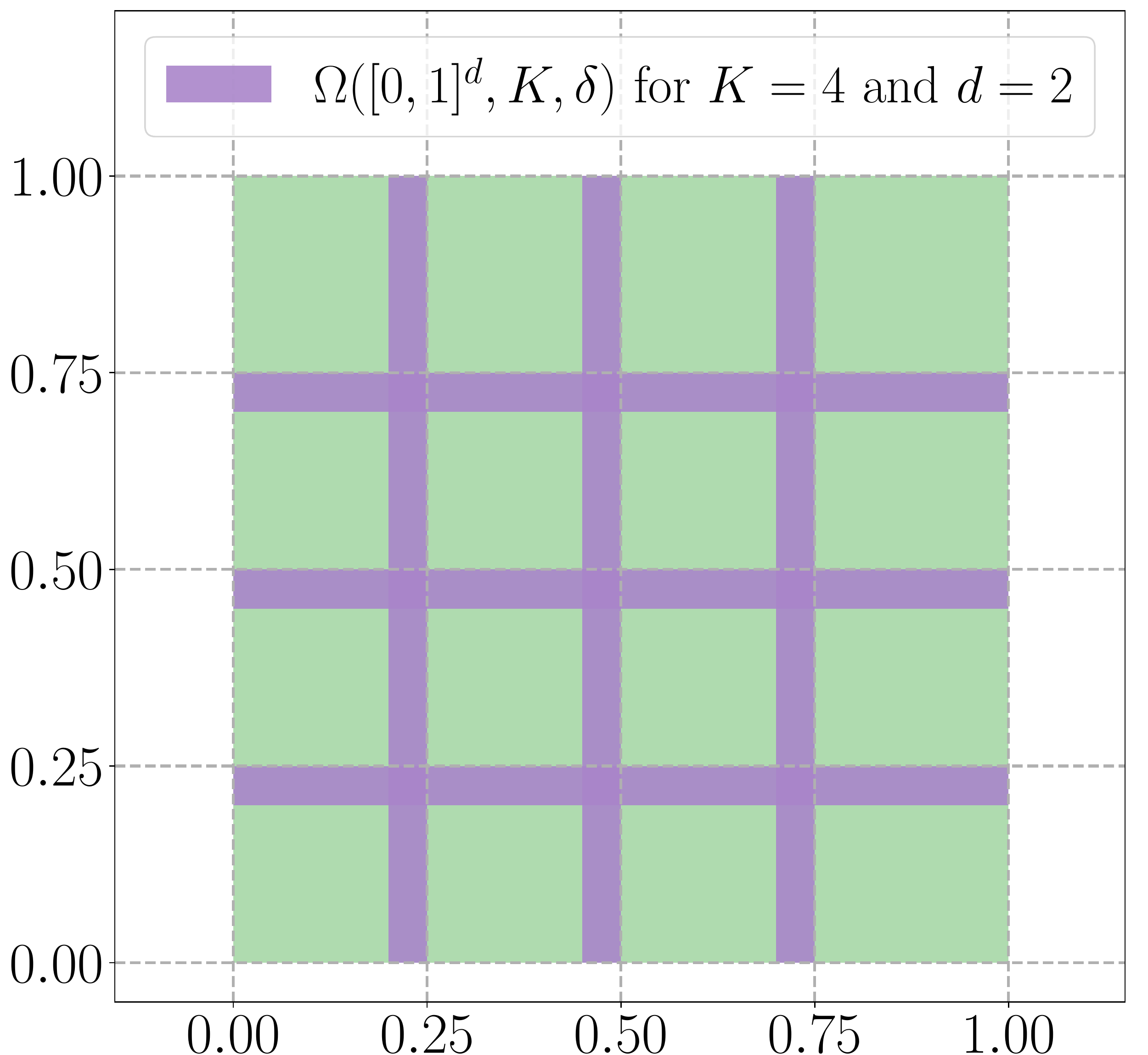}
				\subcaption{}
			\end{subfigure}
		\end{minipage}
		\caption{Two examples of trifling regions. (a)  $K=5,d=1$. (b) $K=4,d=2$.}
		\label{fig:region}
  
	\end{figure}
	
	
	
	%
	
	\item The rectified linear unit (ReLU) is denoted as $\sigma(x)=\max\{0,x\}$ for any $x\in\R$. With a slight abuse of notation, we allow $\sigma$ to be applied element-wise to a vector, i.e.,
$\sigma(\bmx)=\big(\sigma(x_1),\dotsc,\sigma(x_d)\big)$ 
	for any $\bmx=(x_1,\dotsc,x_d)\in \R^d$.
%
%

	\item 
 Suppose $\bmphi$ is a function realized by a ReLU network, whether scalar or vector-valued. Then, $\bmphi$ can be expressed as
	\begin{equation*}
		\begin{aligned}
			\bm{x}=\widetilde{\bm{h}}_0 
			\myto{2.42}^{\bm{W}_0,\ \bm{b}_0}_{\bmcalL_0} \bm{h}_1
			\myto{1.3015}^{\sigma} \widetilde{\bm{h}}_1 \quad \cdots\quad \myto{2.97}^{\bm{W}_{L-1},\ \bm{b}_{L-1}}_{\bmcalL_{L-1}} \bm{h}_L
			\myto{1.3015}^{\sigma} \widetilde{\bm{h}}_L
			\myto{2.42}^{\bm{W}_{L},\ \bm{b}_{L}}_{\bmcalL_L} \bm{h}_{L+1}=\bmphi(\bm{x}),
		\end{aligned}
	\end{equation*}
	where $\bm{W}_i\in \R^{N_{i+1}\times N_{i}}$ and $\bm{b}_i\in \R^{N_{i+1}}$ are the weight matrix and the bias vector in the $i$-th affine linear map $\bmcalL_i$, respectively, i.e., 
	\[\bm{h}_{i+1} =\bm{W}_i\cdot \bmtildeh_{i} + \bm{b}_i\eqqcolon \bmcalL_i(\bmtildeh_{i})\quad \tn{for $i=0,1,\dotsc,L$,}\]  
	and
	\[
	\widetilde{\bm{h}}_i=\sigma(\bm{h}_i)\quad \tn{for $i=1,2,\dotsc,L$.}
	\]
 Furthermore, $\bmphi$ can be expressed as a composition of functions. Specifically, it can be written as
	\begin{equation*}
		\bmphi =\bmcalL_L\circ\sigma\circ
		\ \cdots \  \circ 
		\bmcalL_1\circ\sigma\circ\bmcalL_0.
	\end{equation*}
 Refer to Figure~\ref{fig:ReLUeg} for an illustration.
	
	\begin{figure}[htbp!]     
		
		\centering            \includegraphics[width=0.7\textwidth]{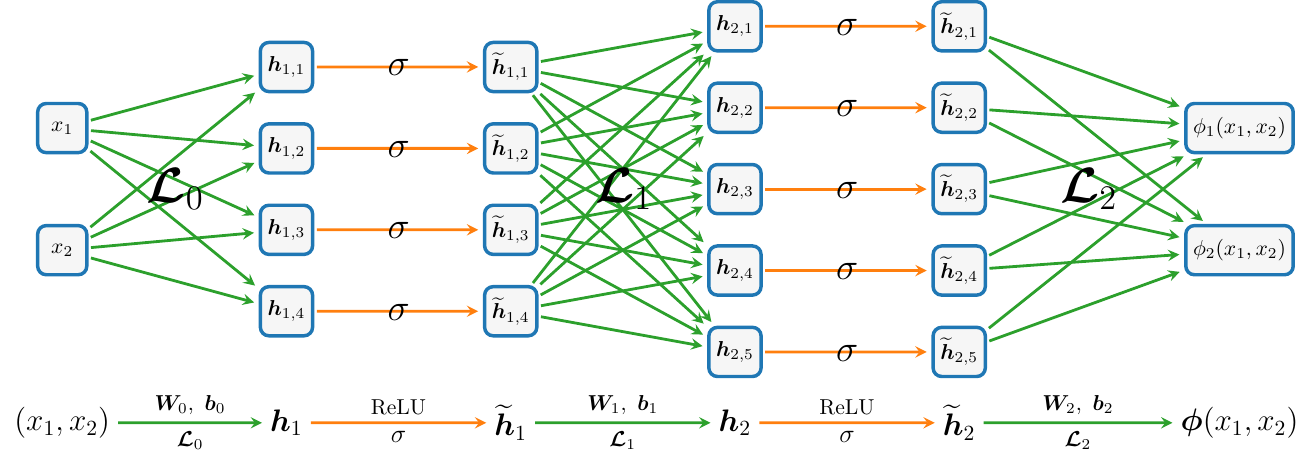}
\caption{An example of a ReLU network of width $5$ and depth $2$. The network realizes a vector-valued function $\bmphi=(\phi_1,\phi_2)$.}
		\label{fig:ReLUeg}
  
	\end{figure}

	\item A network is referred to as "a network of width $N$ and depth $L$" if it satisfies the following conditions.
 	\begin{itemize}
		\item The number of neurons in each  {hidden} layer of this network is less than or equal to $N$.
		\item The number of {hidden} layers of this network  is less than or equal to $L$.
	\end{itemize} 
 
\end{itemize}

\subsection{Proof of Theorem~\ref{thm:main:Lp} with Theorem~\ref{thm:main:gap}}
\label{sec:proof:main:Lp}

 By assuming the validity of Theorem~\ref{thm:main:gap}, we can proceed to prove Theorem~\ref{thm:main:Lp}.

\begin{proof}[Proof of Theorem~\ref{thm:main:Lp}]		
 We assume that $f$ is not a constant function, as considering constant functions would lead to a trivial case. Therefore, for any $t > 0$, we have $\omega_f(t) > 0$.
Set $K=\lfloor r^{1/d}\rfloor$ and let $\delta\in (0,\tfrac{1}{3K}]$ be an arbitrary number  determined later.
	By Theorem~\ref{thm:main:gap}, 	there exist \begin{equation*}
	    \bmg_1\in \nn[\big]{39d+24}{3}{5d+3}{5d+3}
	\end{equation*}
and two affine linear maps $\bmhatcalL_1:\R^d\to\R^{5d+3}$ and $\hatcalL_2:\R^{5d+3}\to\R$
	such that 
	\begin{equation}
    \label{eq:approx:outside}
		\big|\hatcalL_2\circ \bmg_1^{\circ (3r-1)}\circ \bmhatcalL_1(\bmx)-f(\bmx)\big|\le 5\sqrt{d}\,\omega_f(r^{-1/d})\quad \tn{for any $\bmx\in [0,1]^d\backslash\Omega([0,1]^d,K,\delta)$}.
	\end{equation}

That means the approximation error is well controlled outside the trifling region $\Omega([0,1]^d,K,\delta)$.
To control the $L^p$-norm of $\hatcalL_2\circ \bmg_1^{\circ (3r-1)}\circ \bmhatcalL_1-f$, we need to further bound it inside $\Omega([0,1]^d,K,\delta)$.
To this end,
	we  define 
	\begin{equation*}
		g_2(x)\coloneqq \begin{cases}
			M & \tn{if} \   x\ge M\\
			x &  \tn{if} \   |x|< M\\
			-M &  \tn{if}  \   x\le -M, 
		\end{cases}\quad \tn{where}\  M= M_f= \|f\|_{L^\infty([0,1]^d)}+5\sqrt{d}\,\omega_f(1).
	\end{equation*}
Clearly, $\big\|g_2\circ\hatcalL_2\circ \bmg_1^{\circ (3r-1)}\circ \bmhatcalL_1\big\|_{L^\infty(\R^d)}\le M$. Moreover, for any $\bmx\in [0,1]^d\backslash\Omega([0,1]^d,K,\delta)$, we have
\begin{equation*}
\begin{split}
		\hatcalL_2\circ \bmg_1^{\circ (3r-1)}\circ \bmhatcalL_1(\bmx)
		&\in \Big[f(\bmx)-5\sqrt{d}\,\omega_f(1),\   f(\bmx)+5\sqrt{d}\,\omega_f(1)\Big]\\
		&\subseteq \Big[-\|f\|_{L^\infty([0,1]^d)}-5\sqrt{d}\,\omega_f(1),\   \|f\|_{L^\infty([0,1]^d)}+5\sqrt{d}\,\omega_f(1)\Big]=  [-M,M],
\end{split}
\end{equation*}
implying 
\begin{equation*}
	g_2\circ \hatcalL_2\circ \bmg_1^{\circ (3r-1)}\circ \bmhatcalL_1(\bmx)=\hatcalL_2\circ \bmg_1^{\circ (3r-1)}\circ \bmhatcalL_1(\bmx).
\end{equation*}

	We claim $g_2\in \nnOneD{4}{2}{\R}{\R}$. To see this, we need to show how to realize $g_2$ by a ReLU network. Clearly, we have
	\begin{equation*}
		g_2(x)+M=\min\big\{\sigma(x+M),\, 2M\big\}\quad \tn{for any $x\in \R$,}
	\end{equation*}
	implying
	\begin{equation*}
		\begin{split}
					g_2(x)&=\min\big\{\sigma(x+M),\, 2M\big\}-M\\
			&=\tfrac{1}{2}\Big(\sigma\big(\sigma(x+M)+M\big)-\sigma\big(-\sigma(x+M)-M\big)-\sigma\big(\sigma(x+M)-M\big)-\sigma\big(-\sigma(x+M)+M\big)\Big)-M,
		\end{split}
	\end{equation*}
where the last equality comes from 
	\begin{equation*}
		\min\{a,b\}=\tfrac{1}{2}\big(a+b-|a-b|\big)=\tfrac{1}{2}\big(\sigma(a+b)-\sigma(-a-b)-\sigma(a-b)-\sigma(-a+b)\big)\quad \tn{for any $a,b\in \R$}.
	\end{equation*}
As shown in Figure~\ref{fig:g2clip}, $g_2\in \nnOneD{4}{2}{\R}{\R}$ as desired.
	
	\begin{figure}[ht]
		
		\begin{center}
\includegraphics[width=0.6\columnwidth]{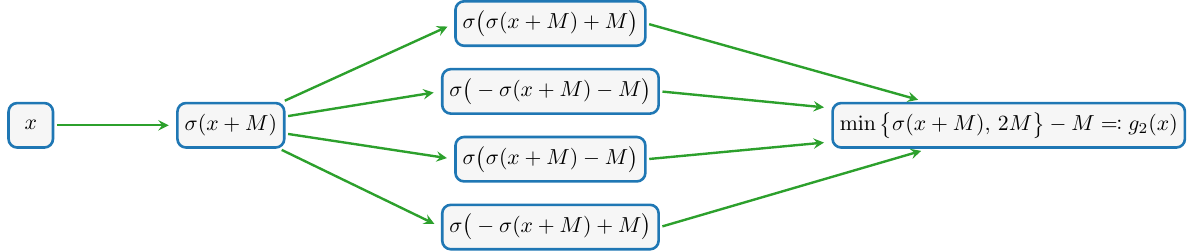}
			\caption{An illustration of the ReLU network realizing $g_2$.}
			\label{fig:g2clip}
		\end{center}
		
	\end{figure}

	Let $\hatcalL_3:\R\to\R$ as the identity map. Then, by Proposition~\ref{prop:two:blocks:three:affine} with 
	$N_1=39d+24$, $N_2=4$, $L_1=3$, $L_2=2$, $d_0=d$, $d_1=5d+3$, $d_2=d_3=1$ therein and setting $\tilded=5d+3\ge \max\{d_1,d_2\}$, 
	there exist 
	\begin{equation*}
		\begin{split}
			\bmg&\in \nn[\big]{(39d+24)+ 4+ 6\tilded  +2}{\max\{3+2,2+1\}}{\tilded+2}{\tilded+2}\\
			&=\nn[\big]{69d+48}{5}{5d+5}{5d+5}\\
		\end{split}
	\end{equation*}
	and two affine linear maps
	$\bmcalL_1:\R^d\to\R^{5d+5}$ and $\calL_2:\R^{5d+5}\to\R$
	such that 
	\begin{equation*}
		\hatcalL_3\circ g_2\circ \hatcalL_2\circ \bmg_1^{\circ (3r-1)}\circ \bmhatcalL_1(\bmx)
		=\calL_2\circ \bmg^{\circ (3r-1 +1 +1)}\circ \bmcalL_1(\bmx)
		=\calL_2\circ \bmg^{\circ (3r+1)}\circ \bmcalL_1(\bmx)
	\end{equation*}
for any $\bmx\in [-1,1]^{d}\supseteq [0,1]^d$. By defining $\phi\coloneqq \calL_2\circ \bmg^{\circ (3r+1)}\circ \bmcalL_1$, we have
\begin{equation*}
	\phi(\bmx)=\calL_2\circ \bmg^{\circ (3r+1)}\circ \bmcalL_1(\bmx)
	=	\hatcalL_3\circ g_2\circ \hatcalL_2\circ \bmg_1^{\circ (3r-1)}\circ \bmhatcalL_1(\bmx)
	=g_2\circ \hatcalL_2\circ \bmg_1^{\circ (3r-1)}\circ \bmhatcalL_1(\bmx)
\end{equation*}
for any $\bmx\in [0,1]^d$. Recall that
$\big\|g_2\circ\hatcalL_2\circ \bmg_1^{\circ (3r-1)}\circ \bmhatcalL_1\big\|_{L^\infty(\R^d)}\le M$ and 
\begin{equation*}
	g_2\circ \hatcalL_2\circ \bmg_1^{\circ (3r-1)}\circ \bmhatcalL_1(\bmx)=\hatcalL_2\circ \bmg_1^{\circ (3r-1)}\circ \bmhatcalL_1(\bmx)\quad \tn{for any $\bmx\in [0,1]^d\backslash\Omega([0,1]^d,K,\delta)$.}
\end{equation*}
Thus, we have
\begin{equation*}
	\big|\phi(\bmx)-f(\bmx)\big|\le \|\phi\|_{L^\infty([0,1]^d)}+\|f\|_{L^\infty([0,1]^d)}\le \big\|g_2\circ \hatcalL_2\circ \bmg_1^{\circ (3r-1)}\circ \bmhatcalL_1\big\|_{L^\infty([0,1]^d)}+M\le 2M
\end{equation*}
for any $\bmx\in [0,1]^d$ and 
\begin{equation*}
	\big|\phi(\bmx)-f(\bmx)\big|
	=  \Big|g_2\circ \hatcalL_2\circ \bmg_1^{\circ (3r-1)}\circ \bmhatcalL_1(\bmx)-f(\bmx)\Big|
	=  \Big|\hatcalL_2\circ \bmg_1^{\circ (3r-1)}\circ \bmhatcalL_1(\bmx)-f(\bmx)\Big|
	\le 5\sqrt{d}\,\omega_f(r^{-1/d})
\end{equation*}
for any $\bmx\in [0,1]^d\backslash\Omega([0,1]^d,K,\delta)$, where the last inequality comes from Equation~\eqref{eq:approx:outside}.

Observe that the Lebesgue measure of $\Omega([0,1]^d,K,\delta)$ is bounded by $Kd\delta$. Hence, by choosing a small $\delta\in (0,\tfrac{1}{3K}]$ with
\begin{equation*}
	Kd\delta(2M)^p= \lfloor r^{-1/d}\rfloor d\delta (2M)^p \le \Big(\omega_f(r^{-1/d})\Big)^p,
\end{equation*}
we have
\begin{equation*}
	\begin{split}
	\|\phi-f\|_{L^p([0,1]^d)}^p
		&=\int_{\Omega([0,1]^d,K,\delta)}|		\phi(\bmx)-f(\bmx)|^p\tn{d}\bmx+\int_{[0,1]^d\backslash\Omega([0,1]^d,K,\delta)}|\phi(\bmx)-f(\bmx)|^p\tn{d}\bmx\\
		&\le Kd\delta(2M)^p+ \Big(5\sqrt{d}\,\omega_f(r^{-1/d})\Big)^p\\
		&\le \Big(\omega_f(r^{-1/d})\Big)^p
		+ \Big(5\sqrt{d}\,\omega_f(r^{-1/d})\Big)^p
		\le \Big(6\sqrt{d}\,\omega_f(r^{-1/d})\Big)^p.
	\end{split}
\end{equation*}
Therefore, we can conclude that
$\|\calL_2\circ \bmg^{\circ (3r+1)}\circ \bmcalL_1-f\|_{L^p([0,1]^d)}=\|\phi-f\|_{L^p([0,1]^d)}\le 6\sqrt{d}\,\omega_f(r^{-1/d})$.
Thus, we have completed the proof of Theorem~\ref{thm:main:Lp}.
\end{proof}

\subsection{Proof of Theorem~\ref{thm:main:Linfty}  with Theorem~\ref{thm:main:gap}}
\label{sec:proof:main:Linfty}




To establish Theorem~\ref{thm:main:Linfty}, we will rely on Theorem~\ref{thm:main:gap}, which permits unbounded approximation errors in the trifling region $\Omega([0,1]^d,K,\delta)$. However, when it comes to proving Theorem~\ref{thm:main:Linfty} using pointwise approximation, it becomes essential to control the approximation error within the trifling region. To address this, we introduce a separate theorem that specifically deals with the approximation within the trifling region.

\begin{theorem}[Lemma~$3.11$ of \cite{shijun:thesis} or Lemma~$3.4$ of \cite{shijun:3}]
	\label{thm:gap:handling}
	Given any $\varepsilon>0$, $K\in \N^+$, and $\delta\in (0, \tfrac{1}{3K}]$,
	assume $f\in C([0,1]^d)$ and $g:\R^d\to \R$ is a general function with
	\begin{equation*}
		|g(\bmx)-f(\bmx)|\le \varepsilon\quad \tn{for any $\bmx\in [0,1]^d\backslash \Omega([0,1]^d,K,\delta)$.}
	\end{equation*}
	Then
	\begin{equation*}
		|\phi(\bmx)-f(\bmx)|\le \varepsilon+d\cdot\omega_f(\delta)\quad \tn{for any $\bmx\in [0,1]^d$,}
	\end{equation*}
	where $\phi\coloneqq  \phi_d$ is defined by induction through $\phi_0\coloneqq g$ and
	\begin{equation*}
		\phi_{i+1}(\bmx)\coloneqq  \middleValue\big(\phi_{i}(\bmx-\delta\bme_{i+1}),\, \phi_{i}(\bmx),\,\phi_{i}(\bmx+\delta\bme_{i+1})\big)\quad \tn{$i=0,1,\dotsc,d-1$,}
	\end{equation*}
	where $\{\bme_i\}_{i=1}^d$ is the standard basis in $\mathbb{R}^d$ and $\middleValue(\cdot,\cdot,\cdot)$ is the function returning the middle value of three inputs. 	 
\end{theorem}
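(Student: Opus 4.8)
The plan is to prove the estimate by induction on the number of coordinate directions that have already been processed by the median operator, tracking that the error deteriorates by exactly one $\omega_f(\delta)$ per direction. First I would split the trifling region into its directional slabs: for $j=1,\dots,d$ set
\[
\Omega_j \coloneqq \Big\{\bmx=(x_1,\dots,x_d)\in[0,1]^d:\ x_j\in\textstyle\bigcup_{k=1}^{K-1}\big(\tfrac{k}{K}-\delta,\tfrac{k}{K}\big)\Big\},
\]
so that $\Omega([0,1]^d,K,\delta)=\bigcup_{j=1}^{d}\Omega_j$ and the hypothesis on $g$ reads $|g-f|\le\varepsilon$ off $\bigcup_{j=1}^{d}\Omega_j$.

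The inductive claim I would carry is: for $i=0,1,\dots,d$,
\[
|\phi_i(\bmx)-f(\bmx)|\le \varepsilon+i\,\omega_f(\delta)\quad\text{for every }\bmx\in[0,1]^d\text{ with }\bmx\notin\Omega_j\ (j=i+1,\dots,d).
\]
The base case $i=0$ is precisely the assumption on $g=\phi_0$. At $i=d$ the side condition becomes vacuous, giving $|\phi_d-f|\le\varepsilon+d\,\omega_f(\delta)$ on all of $[0,1]^d$, which is the theorem.

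For the inductive step I would fix $\bmx\in[0,1]^d$ avoiding $\Omega_j$ for $j\ge i+2$ and analyze the three points $\bmx^-=\bmx-\delta\bme_{i+1}$, $\bmx^0=\bmx$, $\bmx^+=\bmx+\delta\bme_{i+1}$ feeding the median $\phi_{i+1}(\bmx)=\middleValue\big(\phi_i(\bmx^-),\phi_i(\bmx^0),\phi_i(\bmx^+)\big)$. Two facts drive the argument. First, the three points share every coordinate except the $(i+1)$-th, so for $j\ge i+2$ they inherit from $\bmx$ the property of avoiding $\Omega_j$. Second, their $(i+1)$-th coordinates are equally spaced $\delta$ apart, while the bad intervals $(\tfrac{k}{K}-\delta,\tfrac{k}{K})$ have length $\delta$ and are separated by gaps $\tfrac1K-\delta\ge 2\delta$; since $\delta\le\tfrac{1}{3K}$, no two of the three can simultaneously lie in $\Omega_{i+1}$, and moreover at most one of $\bmx^\pm$ can leave $[0,1]^d$, in which case the same spacing pins the surviving two into the good boundary cell. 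Consequently at least two of $\bmx^-,\bmx^0,\bmx^+$ lie in $[0,1]^d$ and avoid $\Omega_j$ for all $j\ge i+1$. For each such point $\bmy$ the inductive hypothesis gives $|\phi_i(\bmy)-f(\bmy)|\le\varepsilon+i\,\omega_f(\delta)$, and $\|\bmy-\bmx\|_2\le\delta$ with both in $[0,1]^d$ gives $|f(\bmy)-f(\bmx)|\le\omega_f(\delta)$, so $|\phi_i(\bmy)-f(\bmx)|\le \varepsilon+(i+1)\omega_f(\delta)$.

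The final ingredient is the elementary observation that the middle of three reals lies in any closed interval containing at least two of them; applying it with the interval $[f(\bmx)-E,f(\bmx)+E]$, where $E=\varepsilon+(i+1)\omega_f(\delta)$, yields $|\phi_{i+1}(\bmx)-f(\bmx)|\le E$ and closes the induction. I expect the main obstacle to be the boundary bookkeeping inside the second fact: one must verify, using $\delta\le\tfrac{1}{3K}$, that when a shifted point exits $[0,1]^d$ the two in-range points genuinely avoid $\Omega_{i+1}$, rather than simply counting slab memberships under an implicit interiority assumption. Everything else is routine.
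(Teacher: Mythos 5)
Your proof is correct. Note that the paper itself does not prove this statement---it imports it verbatim from Lemma~3.11 of \cite{shijun:thesis} (equivalently Lemma~3.4 of \cite{shijun:3})---so there is no in-paper argument to compare against; your induction on the number of processed coordinates, the key counting fact that at most one of the three $\delta$-spaced coordinates can land in a length-$\delta$ bad interval (consecutive bad intervals being separated by $\tfrac{1}{K}-\delta\ge 2\delta$ when $\delta\le\tfrac{1}{3K}$), the separate boundary check when a shifted point exits $[0,1]^d$, and the observation that the middle of three reals lies in any closed interval containing at least two of them reproduce exactly the standard proof given in those references.
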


Now, we are prepared to provide the detailed proof of Theorem~\ref{thm:main:Linfty} by assuming the validity of Theorem~\ref{thm:main:gap}.
\begin{proof}[Proof of Theorem~\ref{thm:main:Linfty}]	
	We may assume $f$ is not a constant function since it is  a trivial case. Then $\omega_f(t)>0$ for any $t>0$. 
	Set $K=\lfloor r^{1/d}\rfloor$ and choose a sufficiently small $\delta\in(0,\tfrac{1}{3K}]$ such that 
	\begin{equation*}
		d\cdot \omega_f(\delta)\le \omega_f\big(r^{-1/d}\big).
	\end{equation*}	
	By Theorem~\ref{thm:main:gap}, 		there exist $\bmg_0\in \nn[\big]{39d+24}{3}{5d+3}{5d+3}$
and two affine linear maps
$\bmcalL_{0,1}:\R^d\to\R^{5d+3}$ and $\calL_{0,2}:\R^{5d+3}\to\R$
such that 
\begin{equation*}
	\big|\calL_{0,2}\circ \bmg_0^{\circ (3r-1)}\circ \bmcalL_{0,1}(\bmx)-f(\bmx)\big|\le 5\sqrt{d}\,\omega_f(r^{-1/d})\quad \tn{for any $\bmx\in [0,1]^d\backslash\Omega([0,1]^d,K,\delta)$}.
\end{equation*}

Define $\phi_0\coloneqq \calL_{0,2}\circ \bmg_0^{\circ (3r-1)}\circ \bmcalL_{0,1}$. 
	By Theorem~\ref{thm:gap:handling}  with $g=\phi_0$ and $\varepsilon=5\sqrt{d}\,\omega_f\big(r^{-1/d}\big)>0$ therein, we have
	\begin{equation}\label{eq:phi-f:all:x}
		|\phi(\bmx)-f(\bmx)|\le \varepsilon+d\cdot \omega_f(\delta)\le  6\sqrt{d}\,\omega_f\big(r^{-1/d}\big)\quad \tn{for any $\bmx\in [0,1]^d$},
	\end{equation}
	where $\phi\coloneqq  \phi_d$ is defined by induction through 
	\begin{equation*}
		\phi_{i+1}(\bmx)\coloneqq  \middleValue\Big(\phi_{i}(\bmx-\delta\bme_{i+1}),  \  \phi_{i}(\bmx),
	\   \phi_{i}(\bmx+\delta\bme_{i+1})\Big)\quad \tn{for any $\bmx\in \R^d$ and $i=0,1,\dotsc,d-1$.}
	\end{equation*}
	Here, $\{\bme_i\}_{i=1}^d$ is the standard basis in $\mathbb{R}^d$ and $\middleValue(\cdot,\cdot,\cdot)$ is the function returning the middle value of three inputs.

It remains to show $\phi=\phi_d$ can be represented as the desired form.	
	We claim that $\phi_i$ can be represented as 
	\begin{equation*}
		\phi_i=\calL_{i,2}\circ \bmg_i^{\circ r_i}\circ\bmcalL_{i,1} \quad \tn{on $[-A_i,A_i]^d$\quad for $i=0,1,\dotsc,d$},
	\end{equation*}
where 
$r_i$, $A_i$, $\bmcalL_{i,1}$, $\calL_{i,2}$, and $\bmg_i$ satisfy the following conditions:
\begin{itemize}
	\item $r_i=3r+2i-1$ and $A_i=d+1-i$;
	\item $\bmcalL_{i,1}:\R^d\to\R^{d_i}$ and $\calL_{i,2}:\R^{d_i}\to\R$ are two affine linear maps with $d_i=3^i(5d+4)-1$;
	\item $\bmg_i\in \nn{N_i }{L_i}{d_i}{d_i}$ with $N_i=4^{i+5}d$ and $L_i=3+2i$. 
\end{itemize}

	We will prove this claim by induction. First, let us consider the base case $i=0$. Clearly, $\phi_0= \calL_{0,2}\circ \bmg_0^{\circ (3r-1)}\circ \bmcalL_{0,1}=\calL_{0,2}\circ \bmg_0^{\circ r_0}\circ \bmcalL_{0,1}$ on $\R^d\supseteq [-A_0,A_0]^d$, where 	$d_0=3^{0}(5d+4)-1=5d+3$, $\bmcalL_{0,1}:\R^d\to\R^{d_0}$ and $\calL_{0,2}:\R^{d_0}\to\R$ are two affine linear maps and
	\begin{equation*}
		\bmg_0\in \nn[\big]{39d+24}{3}{5d+3}{5d+3}\subseteq \nn[\Big]{N_0=4^{0+5} d=1024d}{L_0=3+0=3}{d_0}{d_0}.
	\end{equation*}

	Next, let us assume the claim holds for the case $i=j\in \{0,1,\dotsc,d-1\}$. We will prove the claim for the case $i=j+1$.
	By the induction hypothesis, 	 $\phi_i$ can be represented as 
	\begin{equation*}
		\phi_j=\calL_{j,2}\circ \bmg_j^{\circ r_j}\circ\bmcalL_{j,1} \quad \tn{on $[-A_j,A_j]^d$\quad for $j=0,1,\dotsc,d$},
	\end{equation*}
	where $\bmcalL_{j,1}:\R^d\to\R^{d_j}$ and $\calL_{j,2}:\R^{d_j}\to\R$ are two affine linear maps  and $\bmg_j\in \nn{N_j}{L_j}{d_j}{d_j}$.

	Define $\bmhatcalL_{j+1,1}:\R^{d}\to\R^{3d_j}$ via 
	\begin{equation*}
		\bmhatcalL_{j+1,1}(\bmx)
		\coloneqq \Big( \bmcalL_{j,1}(\bmx-\delta  \bme_{j+1}), \
		\ \bmcalL_{j,1}(\bmx), \   \bmcalL_{j,1}(\bmx+\delta  \bme_{k+1})\Big)\quad \tn{for any $\bmx\in\R^d$,}
	\end{equation*}
$\bmhatg_{j+1}:\R^{3d_j}\to\R^{3d_j}$ via
\begin{equation*}
	\bmhatg_{j+1}(\bmu,\bmv,\bmw)\coloneqq \Big(\bmg_j(\bmu),\   \bmg_j(\bmv),\  \bmg_j(\bmw)\Big)\quad \tn{for any $\bmu,\bmv,\bmw\in \R^{d_j}$,}
\end{equation*}
$\bmhatcalL_{j+1,2}:\R^{3d_j}\to\R^3$ via
\begin{equation*}
		\bmhatcalL_{j+1,2}(\bmu,\bmv,\bmw)
	\coloneqq \Big( \calL_{j,2}(\bmu), 
	\ \calL_{j,2}(\bmv), \   \calL_{j,2}(\bmw)\Big)\quad \tn{for any $\bmu,\bmv,\bmw\in\R^{d_j}$,}
\end{equation*}
$\bmhatG:\R^3\to\R^3$ via
\begin{equation*}
	\bmhatG(y_1,y_2,y_3)\coloneqq \Big(\middleValue(y_1,y_2,y_3), \ 0, \ 0\Big)\quad \tn{for any $(y_1,y_2,y_3)\in \R^3$,}
\end{equation*}
and $\hatcalL_3:\R^3\to\R$ via
\begin{equation*}
	\hatcalL_3(y_1,y_2,y_3)\coloneqq y_1\quad \tn{for any $(y_1,y_2,y_3)\in\R^3$.}
\end{equation*}

Note that $A_{j+1}=d+1-(j+1)=A_j-1\le A_j-\delta$.
For any $\bmx\in [-A_{j+1},A_{j+1}]^d\subseteq [-A_j+\delta,A_j-\delta]^d$, we have $\bmx-\delta \bme_{j+1},\, \bmx,\, \bmx+\delta \bme_{j+1}\in [-A_j,A_j]^d$, implying
\begin{equation*}
	\begin{split}
		\phi_{j+1}(\bmx)
		&=\middleValue\Big(\phi_j(\bmx-\delta  \bme_{j+1}), \ 
		\phi_j(\bmx), \   \phi_j(\bmx+\delta  \bme_{j+1})\Big)
		= \hatcalL_3\circ \bmhatG \Big(\phi_j(\bmx-\delta  \bme_{j+1}), \ 
		\phi_j(\bmx), \   \phi_j(\bmx+\delta  \bme_{j+1})\Big)\\
		&= \hatcalL_3\circ \bmhatG \Bigg(\calL_{j,2}\circ\bmg_j^{\circ r_j}\circ \bmcalL_{j,1}(\bmx-\delta  \bme_{j+1}), \quad 
		\calL_{j,2}\circ\bmg_j^{\circ r_j}\circ \bmcalL_{j,1}(\bmx), \quad   \calL_{j,2}\circ\bmg_j^{\circ r_j}\circ \bmcalL_{j,1}(\bmx+\delta  \bme_{j+1})\Bigg)\\
		&= \hatcalL_3\circ \bmhatG \circ \bmhatcalL_{j+1,2} \Bigg(\bmg_j^{\circ r_j}\circ \bmcalL_{j,1}(\bmx-\delta  \bme_{j+1}), \quad 
		\bmg_j^{\circ r_j}\circ \bmcalL_{j,1}(\bmx), \quad   \bmg_j^{\circ r_j}\circ \bmcalL_{j,1}(\bmx+\delta  \bme_{j+1})\Bigg)\\
		&= \hatcalL_3\circ \bmhatG \circ \bmhatcalL_{j+1,2}\circ \bmhatg_{j+1}^{\circ r_j} \bigg( \bmcalL_{j,1}(\bmx-\delta  \bme_{j+1}), \quad 
		\ \bmcalL_{j,1}(\bmx), \quad    \bmcalL_{j,1}(\bmx+\delta  \bme_{k+1})\bigg)\\
		&= \hatcalL_3\circ \bmhatG \circ \bmhatcalL_{j+1,2}\circ \bmhatg_{j+1}^{\circ r_j}\circ \bmhatcalL_{j+1,1}(\bmx).\\
	\end{split}
\end{equation*}
Clearly, $\bmg_j\in \nn{N_j}{L_j}{d_j}{d_j}$ implies $\bmhatg_{j+1}\in \nn{3N_j}{L_j}{3d_j}{3d_j}$.
By Lemma~$3.1$ of \cite{shijun:5}, $\middleValue(\cdot,\cdot,\cdot)$ can be realized by a ReLU network of width $14$ and depth $2$, implying 
$\bmhatG\in \nn{14}{2}{3}{3}$.  

Then, by Proposition~\ref{prop:two:blocks:three:affine} with 
$\hatN_1=3N_j$, $\hatN_2=14$, $\hatL_1=L_j$, $\hatL_2=2$, $\hatd_0=d$, $\hatd_1=3d_j$, $\hatd_2=3$, $\hatd_3=1$ therein and setting  $\hatd=3d_j=\max\{3d_j,3\}=\max\{\hatd_1,\hatd_2\}$, there exist
\begin{equation*}
\begin{split}
		\bmg_{j+1}
		&\in \nn[\Big]{\hatN_1+\hatN_2+6\hatd+2}{\hspace*{5pt}\max\{\hatL_1+2,\,\hatL_2+1\}}{\hatd+2}{\hatd+2}\\
  &= \nn[\Big]{3N_j+14+18d_j+2}{\hspace*{5pt}\max\{L_j+2,\,2+1\}}{3d_j+2}{3d_j+2}\\
		&=	 \nn[\Big]{3(4^{j+5}d)+18\big(3^j(5d+4)-1\big)+16}{\hspace*{5pt} 3+2j+2}{3(3^j(5d+4)-1)+2}{3(3^j(5d+4)-1)+2}\\
		&\subseteq  \nn[\Big]{3(4^{j+5}d)+ 3^{j+5}d}{\hspace*{5pt} 3+2(j+1)}{3^{j+1}(5d+4)-1}{ 
      3^{j+1}(5d+4)-1}\\
		&\subseteq  \nn[\Big]{4^{(j+1)+5}d}{\hspace*{5pt} 3+2(j+1)}{3^{j+1}(5d+4)-1}{3^{j+1}(5d+4)-1}
		= \nn[\big]{N_{j+1}}{L_{j+1}}{d_{j+1}}{d_{j+1}}
\end{split}
\end{equation*}
and two affine linear maps $\bmcalL_{j+1,1}:\R^d\to\R^{d_{j+1}}$ and $\calL_{j+1,2}:\R^{d_{j+1}}\to\R$ such that
\begin{equation*}
	\hatcalL_3\circ \bmhatG \circ \bmhatcalL_{j+1,2}\circ \bmhatg_{j+1}^{\circ r_j}\circ \bmhatcalL_{j+1,1}(\bmx)=\calL_{j+1,2}\circ \bmg_{j+1}^{\circ (r_j+1+1)} \circ \bmcalL_{j+1,1}(\bmx)=\calL_{j+1,2}\circ \bmg_{j+1}^{\circ r_{j+1}} \circ \bmcalL_{j+1,1}(\bmx)
\end{equation*}
for any $\bmx\in [-A_{j+1},A_{j+1}]$,
where the last equality comes from $r_j+1+1=(3r+2j-1)+1+1=3r+2(j+1)-1=r_{j+1}$. Therefore,
for any $\bmx\in [-A_{j+1},A_{j+1}]$, we have
\begin{equation*}
	\phi_{j+1}(\bmx)
	=\hatcalL_3\circ \bmhatG \circ \bmhatcalL_{j+1,2}\circ \bmhatg_{j+1}^{\circ r_j}\circ \bmhatcalL_{j+1,1}(\bmx)=
	\calL_{j+1,2}\circ \bmg_{j+1}^{\circ r_{j+1}} \circ \bmcalL_{j+1,1}(\bmx).
\end{equation*}
By the principle of mathematical induction,  we finish the proof of the claim. 

Then, by the claim and setting $\tilded=3^d(5d+4)-1$, $\phi=\phi_d$ can be represented as 
\begin{equation*}
	\phi=\phi_d=\calL_{d,2}\circ \bmg_d^{\circ r_d}\circ\bmcalL_{d,1}
=\calL_{d,2}\circ \bmg_d^{\circ (3r+2d-1)}\circ\bmcalL_{d,1} \quad \tn{on $[-A_d,A_d]^d=[-1,1]^d\supseteq [0,1]^d$},
\end{equation*}
where $\bmcalL_{d,1}:\R^d\to\R^{\tilded}$ and $\calL_{i,2}:\R^{\tilded}\to\R$ are two affine linear maps and 
\begin{equation*}
    \bmg_d\in \nn{N_d }{L_d}{\tilded}{\tilded}=\nn{4^{d+5}d}{3+2d}{\tilded}{\tilded}.
\end{equation*}

By defining $\bmcalL_1\coloneqq \bmcalL_{d,1}$, $\bmg\coloneqq\bmg_d$, and $\calL_2\coloneqq \calL_{d,2}$, we have
$\calL_2\circ\bmg^{\circ (3r+2d-1)}\circ \bmcalL_1=\calL_{d,2}\circ \bmg_d^{\circ (3r+2d-1)}\circ\bmcalL_{d,1}=\phi$.
It follows from Equation~\eqref{eq:phi-f:all:x} that
\begin{equation*}
	\Big|\calL_2\circ\bmg^{\circ (3r+2d-1)}\circ \bmcalL_1(\bmx)-f(\bmx)\Big|
	=|\phi(\bmx)-f(\bmx)|\le 6\sqrt{d}\, \omega_f(r^{-1/d})\quad \tn{for any $\bmx\in[0,1]^d$}.
\end{equation*}
Thus, we finish the proof of Theorem~\ref{thm:main:Linfty}.	
\end{proof}

\section{Proof of Theorem~\ref{thm:main:gap}  with Propositions}
\label{sec:proof:thm:main:gap}

In this section, we will provide the proof of the auxiliary theorem, Theorem~\ref{thm:main:gap}, by relying on Propositions~\ref{prop:floor:approx}, \ref{prop:point:fitting}, and \ref{prop:two:blocks:three:affine}. The detailed proofs of these propositions can be found in Sections~\ref{sec:proof:prop:floor:approx}, \ref{sec:proof:prop:point:fitting}, and \ref{sec:proof:prop:two:blocks:three:affine}, respectively. By assuming the validity of these three propositions, we  now proceed to prove Theorem~\ref{thm:main:gap}.



\begin{proof}[Proof of Theorem~\ref{thm:main:gap}]
We may assume $\omega_f(t)>0$ for any $t>0$ since $\omega_f(t_0)=0$ for some $t_0>0$ implies $f$ is a constant function, which is a trivial case.
Clearly, $|f(\bm{x})-f(\bmzero)|\le \omega_f(\sqrt{d})$ for any ${\bm{x}}\in [0,1]^d$. By defining 
\begin{equation*}
	\tildef\coloneqq  f-f(\bmzero)+\omega_f(\sqrt{d}),
\end{equation*} we have
$\omega_\tildef(t)=\omega_f(t)$ for any $t\ge 0$ and  $0\le \tildef (\bm{x}) \le 2\omega_f(\sqrt{d})$ for any ${\bm{x}}\in [0,1]^d$. 

Set $K=\lfloor r^{1/d}\rfloor$ and let $\delta$ be an arbitrary number in $(0,\tfrac{1}{3K}]$.
The proof can be divided into four main steps as follows.
\begin{enumerate}
	\item 
	Divide $[0,1]^d$ into a set of cubes $\{Q_{\bm{\beta}}\}_{\bm{\beta}\in \{0,1,\dotsc,K-1\}^d}$ and  $\Omega([0,1]^d,K,\delta)$. Denote $\bmx_\bmbeta$ as the vertex of $Q_\bmbeta$ with minimum $\|\cdot\|_1$ norm, where $\Omega([0,1]^d,K,\delta)$ is the trifling region defined in Equation~\eqref{eq:triflingRegionDef}.
	
	\item Use Proposition~\ref{prop:floor:approx} to construct a vector function $\bmPhi_1=\bmhatcalL_2\circ \bmG_1^{\circ (r-1)}\circ \bmhatcalL_1$ mapping $\bmx\in Q_\bmbeta$ to  $\bmbeta$ for each $\bmbeta\in \{0,1,\dotsc,K-1\}^d$, i.e., $\bmPhi_1(\bmx)=\bmbeta$ for all $\bmx\in Q_\bmbeta$, where $\bmhatcalL_1$ and $\bmhatcalL_2$ are affine linear maps and $\bmG_1$ is realized by a fixed-size ReLU network.
	
	\item Construct a  function $\phi_2=\hatcalL_5\circ \bmg_2^{\circ (2r-1)}\circ \bmhatcalL_4\circ \bmhatcalL_3$  mapping the index $\bmbeta$ approximately to $\tildef(\bmx_\bmbeta)$ for each $\bmbeta$, where $\bmhatcalL_3$, $\bmhatcalL_4$, and $\hatcalL_5$ are affine linear maps and $\bmg_2$ is realized by a fixed-size ReLU network. This core step can be further divided into two sub-steps:	
	\begin{enumerate}
		\item[3.1.] Design an affine linear map $\bmhatcalL_3$ bijectively mapping the index set $\{0,1,\dotsc,K-1\}^d$ to an auxiliary set $\mathcal{A}_1\subseteq \big\{\tfrac{j}{2K^d}:j=0,1,\dotsc,2K^d\big\}$ defined later. See Figure~\ref{fig:h+A12} for an illustration.
		
		
		
		\item[3.2.] Apply Proposition~\ref{prop:point:fitting} to design a sub-network to realize a function $\hatcalL_5\circ \bmg_2^{\circ (2r-1)}\circ \bmhatcalL_4$ mapping $\bmhatcalL_3(\bmbeta)$ approximately to $\tildef(\bmx_\bmbeta)$ for each $\bmbeta\in\{0,1,\dotsc,K-1\}^d$. Then, $\phi_2=\hatcalL_5\circ \bmg_2^{\circ (2r-1)}\circ \bmhatcalL_4\circ \bmhatcalL_3$ maps $\bmbeta$ approximately to $\tildef(\bmx_\bmbeta)$ for each $\bmbeta$.
	\end{enumerate}
	
	\item Construct the desired function $\phi$ via $\phi= \phi_2 \circ \bmPhi_1 +f(\bmzero)-\omega_f(\sqrt{d})=\hatcalL_5\circ \bmg_2^{\circ (2r-1)}\circ \bmhatcalL_4\circ \bmhatcalL_3\circ \bmhatcalL_2\circ \bmG_1^{\circ (r-1)}\circ \bmhatcalL_1+f(\bmzero)-\omega_f(\sqrt{d})$ and we use Proposition~\ref{prop:two:blocks:three:affine} to show $\phi$ can be represented as $\calL_2\circ \bmg^{\circ (3r-1)}\circ \bmcalL_1$, where $\bmcalL_1$ and $\calL_2$ are affine linear maps and $\bmg$ is realized by a fixed-size ReLU network. Then we have
	$\phi_2 \circ \bmPhi_1(\bmx)=\phi_2(\bmbeta)\approx \tildef(\bmx_\bmbeta)\approx \tildef(\bmx)$ for any $\bmx\in Q_\bmbeta$ and $\bmbeta\in \{0,1,\dotsc,K-1\}^d$, implying $\phi(\bmx)= \phi_2 \circ \bmPhi_1(\bmx) +f(\bmzero)-\omega_f(\sqrt{d})
	\approx \tildef(\bmx) +f(\bmzero)-\omega_f(\sqrt{d})= f(\bmx)$.
\end{enumerate}

%
The details of the above steps are presented below.

\mystep{1}{Divide $[0,1]^d$ into  $\{Q_{\bm{\beta}}\}_{\bm{\beta}\in \{0,1,\dotsc,K-1\}^d}$ and  $\Omega([0,1]^d,K,\delta)$.}

For each $d$-dimensional index  ${\bm{\beta}}= (\beta_1,\beta_2,\dotsc,\beta_d)\in \{0,1,\dotsc,K-1\}^d$, define  $\bmx_\bmbeta \coloneqq \bmbeta/K$ and 
\[
Q_{\bm{\beta}}\coloneqq\Big\{{\bm{x}}= (x_1,x_2,\dotsc,x_d)\in [0,1]^d:x_i\in\big[\tfrac{\beta_i}{K},\tfrac{\beta_i+1}{K}-\delta\cdot \one_{\{\beta_i\le K-2\}}\big], \quad i=1,2,\dotsc,d\Big\}.
\]
Clearly, $\bmx_\bmbeta=\bmbeta/K$ is the vertex of $Q_\bmbeta$ with minimum $\|\cdot\|_1$ norm and 
\[[0,1]^d= \big(\cup_{\bm{\beta}\in \{0,1,\dotsc,K-1\}^d}Q_{\bm{\beta}}\big)\bigcup \Omega([0,1]^d,K,\delta),\]
where $\Omega([0,1]^d,K,\delta)$ is the trifling region defined in Equation~\eqref{eq:triflingRegionDef}.
See Figure~\ref{fig:Q+TR} for illustrations of   $\Omega([0,1]^d,K,\delta)$, $Q_{\bm{\beta}}$, and $\bmx_\bmbeta$ for $\bm{\beta}\in \{0,1,\dotsc,K-1\}^d$.

\begin{figure}[htbp!]

	\centering
	\begin{minipage}{0.8\textwidth}
		\centering
		\begin{subfigure}[b]{0.435\textwidth}
			\centering
			\includegraphics[width=0.75\textwidth]{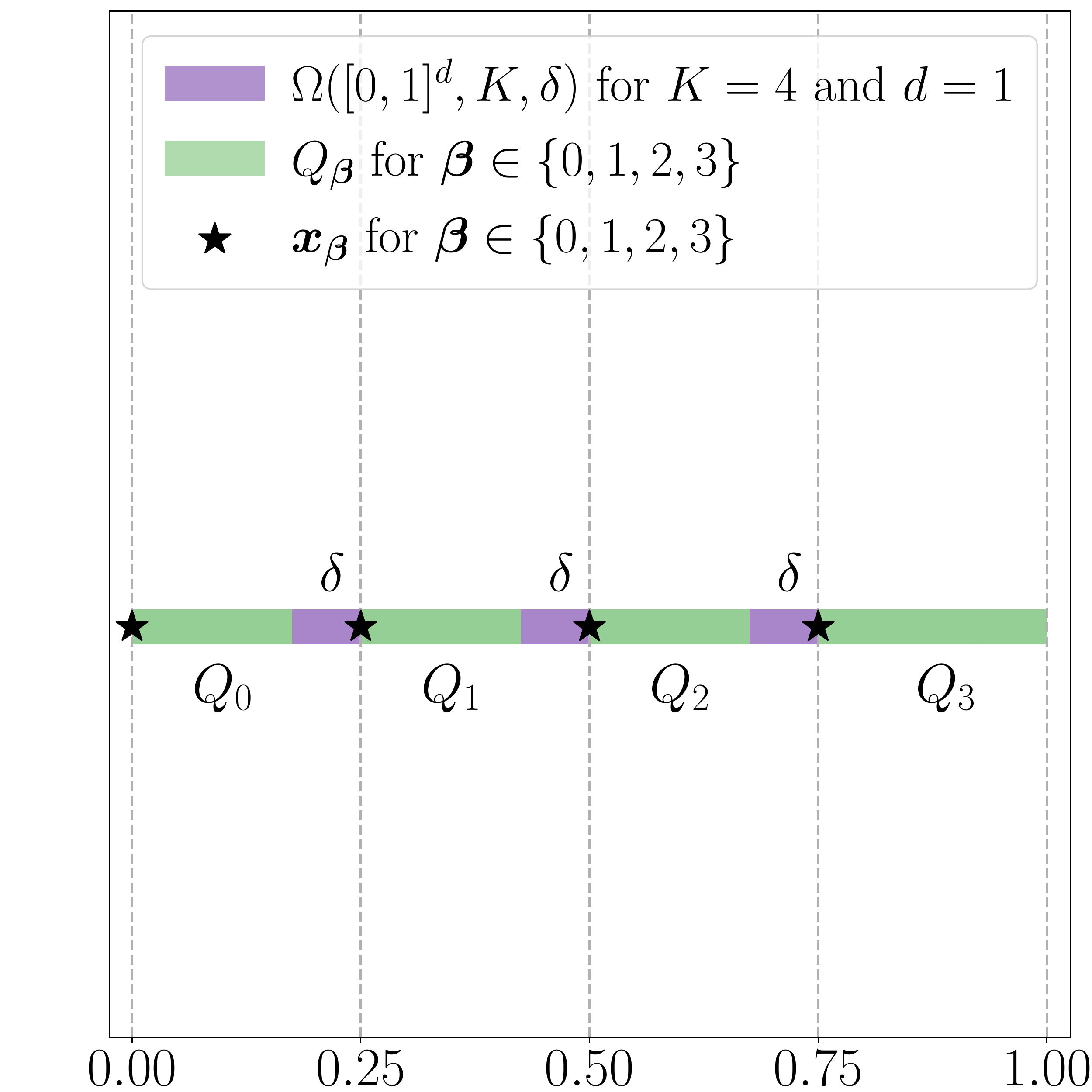}
			\subcaption{}
		\end{subfigure}
		\begin{minipage}{0.064\textwidth}
			\hspace{2pt}
		\end{minipage}
		\begin{subfigure}[b]{0.435\textwidth}
			\centering
			\includegraphics[width=0.75\textwidth]{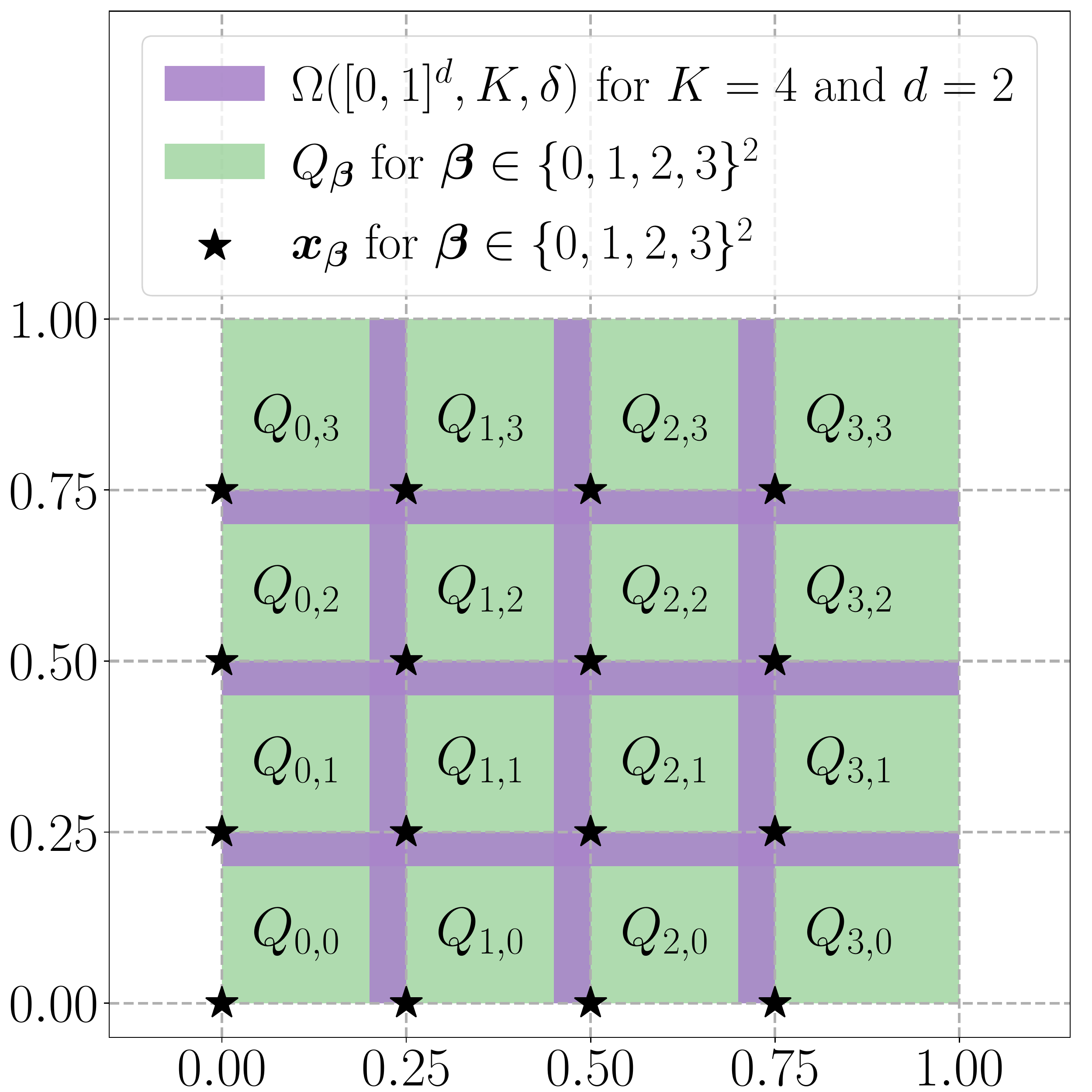}
			\subcaption{}
		\end{subfigure}
	\end{minipage}
	\caption{Illustrations of  $\Omega([0,1]^d,K,\delta)$,  $Q_\bmbeta$, and $\bmx_\bmbeta$ for $\bmbeta\in \{0,1,\dotsc,K-1\}^d$. (a) $K=4$ and $d=1$. (b) $K=4$ and $d=2$. }
	\label{fig:Q+TR}
 
\end{figure}

\mystep{2}{Construct $\bmPhi_1$ mapping $\bmx\in Q_\bmbeta$ to  $\bmbeta$.}

By Proposition~\ref{prop:floor:approx} with   $m=r$ and $n=K=\lfloor r^{1/d}\rfloor \le r= m$ therein and setting $\tildedelta=K\delta$,
		there exist $\bmg_1 \in
\nn[\big]{9}{1}{5}{5}$
and two affine linear maps $\bmtildecalL_1:\R\to\R^5$ and $\tildecalL_2:\R^5\to \R$ such that
\begin{equation*}
	\tildecalL_2\circ\bmg_1^{\circ (r-1)}\circ \bmtildecalL_1(t)=k\quad \tn{for any $t\in\big[k,\,k+1-\tildedelta\cdot \one_{\{k\le K-2\}}\big]$ and $k=0,1,\dotsc,K-1$.}
\end{equation*}

Define $\bmG_1:\R^{5d}\to \R^{5d}$ via
\begin{equation*}	\bmG_1(\bmy_1,\dotsc,\bmy_d)=\Big(\bmg_1(\bmy_1),\  \dotsc,\  \bmg_1(\bmy_d)\Big)\quad \tn{for any $\bmy_1,\dotsc,\bmy_d\in \R^5$},
\end{equation*}
$\bmhatcalL_1:\R^d\to \R^{5d}$ via
\begin{equation*}	\bmhatcalL_1(x_1,\dotsc,x_d)=\bigg(\bmtildecalL_1(Kx_1),\ \dotsc,\  \bmtildecalL_1(Kx_d)\bigg)\quad \tn{for any $(x_1,\dotsc,x_d)\in \R^d$},
\end{equation*}
and $\bmhatcalL_2:\R^{5d}\to \R^{d}$ via
\begin{equation*}
	\bmhatcalL_2(\bmy_1,\dotsc,\bmy_d)=\bigg(\bmtildecalL_2(\bmy_1),\  \dotsc,\  \bmtildecalL_2(\bmy_d)\bigg)\quad \tn{for any $\bmy_1,\dotsc,\bmy_d\in \R^5$}.
\end{equation*}
It is easy to verify that 
$\bmG_1 \in
\nn[\big]{9d}{1}{5d}{5d}$.

For any $\bmx=(x_1,\dotsc,x_d)\in Q_\bmbeta$ and $\bmbeta=(\beta_1,\dotsc,\beta_d)\in \{0,1,\dotsc,K-1\}^d$, we have
\begin{equation*}
	Kx_i\in \big[\beta_i,\, \beta_i+1-K\delta \cdot \one_{\{\beta_i\le K-2\}}\big]
	=\big[\beta_i,\, \beta_i+1-\tildedelta \cdot \one_{\{\beta_i\le K-2\}}\big]
\end{equation*}
\tn{for $i=1,2,\dotsc,d$,} implying 
\begin{equation*}
	\tildecalL_2\circ\bmg_1^{\circ (r-1)}\circ \bmtildecalL_1(Kx_i)=\beta_i.
\end{equation*}

Therefore, for any $\bmx=(x_1,\dotsc,x_d)\in Q_\bmbeta$ and $\bmbeta=(\beta_1,\dotsc,\beta_d)\in \{0,1,\dotsc,K-1\}^d$, we have
\begin{equation*}
	\begin{split}
		\bmhatcalL_2\circ \bmG_1^{\circ (r-1)}\circ \bmhatcalL_1(\bmx)
		&= \bmhatcalL_2\circ \bmG_1^{\circ (r-1)}\bigg(\bmtildecalL_1(Kx_1),\    \dotsc,\    \bmtildecalL_1(Kx_d)\bigg)\\
		& = \bmhatcalL_2\bigg(\bmg_1^{\circ (r-1)}\circ\bmtildecalL_1(Kx_1),\quad  \dotsc,\quad  \bmg_1^{\circ (r-1)}\circ\bmtildecalL_1(Kx_d)\bigg)\\
		&= \bigg(\bmtildecalL_2\circ \bmg_1^{\circ (r-1)}\circ\bmtildecalL_1(Kx_1),\quad  \dotsc,\quad \bmtildecalL_2\circ \bmg_1^{\circ (r-1)}\circ\bmtildecalL_1(Kx_d)\bigg)\\
		&= (\beta_1,\, \dotsc,\, \beta_d)=\bmbeta.
	\end{split}
\end{equation*}

By defining $\bmPhi_1\coloneqq 	\bmhatcalL_2\circ \bmG_1^{\circ (r-1)}\circ \bmhatcalL_1$, we have  
\begin{equation}
	\label{eq:Phi1:output:beta}
	\bmPhi_1(\bmx)=\bmbeta\quad \tn{ for any $\bmx\in Q_\bmbeta$ and $\bmbeta\in \{0,1,\dotsc,K-1\}^d$.}
\end{equation}
%


\mystep{3}{Construct $\phi_2$ mapping $\bmbeta$ approximately to $\tildef(\bmx_\bmbeta)$.}

We will use Proposition~\ref{prop:point:fitting} to construct the desired $\phi_2$.
To meet the requirements of applying Proposition~\ref{prop:point:fitting}, we first define two auxiliary sets $\calA_1$ and $\calA_2$ as 
\begin{equation*}
	\calA_1\coloneqq \bigg\{\frac{i}{K^{d-1}}+\frac{k}{2K^d}:i=0,1,\dotsc,K^{d-1}-1\tn{\quad and \quad} k=0,1,\dotsc,K-1\bigg\}
\end{equation*}
and 
\begin{equation*}
	\calA_2\coloneqq \bigg\{\frac{i}{K^{d-1}}+\frac{K+k}{2K^d}:i=0,1,\dotsc,K^{d-1}\black{-1}\tn{\quad and \quad}k=0,1,\dotsc,K-1\bigg\}.
\end{equation*}
Clearly, 
\begin{equation*}
	\calA_1\cup\calA_2\cup\{1\}=\Big\{\frac{j}{2K^d}:j=0,1,\dotsc,2K^d\Big\}
	\quad \tn{and}\quad \calA_1\cap\calA_2=\emptyset.
\end{equation*} 
See Figure~\ref{fig:Q+TR} for an illustration of $\calA_1$ and $\calA_2$. Next, we further divide this step into two sub-steps.

\mystep{3.1}{Construct $\bmhatcalL_3$ bijectively mapping  $\{0,1,\dotsc,K-1\}^d$ to $\mathcal{A}_1$.}

Inspired by the base-$K$ representation, we define
\begin{equation}
	\hatcalL_3(\bmx)\coloneqq \frac{x_d}{2K^d}+\sum_{i=1}^{d-1}\frac{x_i}{K^i}\quad \tn{for any $\bmx=(x_1,\dotsc,x_d)\in \R^d$.}
\end{equation}
Then $\hatcalL_3$ is a linear function bijectively mapping the index set $\{0,1,\dotsc,K-1\}^d$ to
\begin{equation*}
	\begin{split}
	 \Big\{\hatcalL_3(\bmbeta):\bm{\beta}\in \{0,1,\dotsc,K-1\}^d\Big\}
		& =\bigg\{\frac{\beta_d}{2K^d}+\sum_{i=1}^{d-1}\frac{\beta_i}{K^i}:\bm{\beta}\in \{0,1,\dotsc,K-1\}^d\bigg\}\\
		&=\bigg\{\frac{i}{K^{d-1}}+\frac{k}{2K^d}:i=0,1,\dotsc,K^{d-1}\black{-1}\tn{\quad and\quad } k=0,1,\dotsc,K-1\bigg\}=\calA_1.
	\end{split}
\end{equation*}

\mystep{3.2}{Apply Proposition~\ref{prop:point:fitting} to construct a sub-network mapping $\hatcalL_3(\bmbeta)$ approximate to $\tildef(\bmx_\bmbeta)$.}

Recall that
\begin{equation*}
	\quad \Big\{\hatcalL_3(\bmbeta):\bm{\beta}\in \{0,1,\dotsc,K-1\}^d\Big\}= \calA_1
\end{equation*}
and 
\begin{equation*}
	\Big\{\frac{j}{2K^d}: j=0,1,\dotsc,2K^d\Big\}=\calA_1\cup\calA_2\cup\{1\}.
\end{equation*} 
We will use a set of  $K^d+1$ points  
\begin{equation*}
	\Big\{\big(1,\, \tildef(\bm{1})\big)\Big\}\bigcup \bigg\{\Big(\hatcalL_3(\bmbeta),\, \tildef(\bmx_\bmbeta)\Big): 
	\bmbeta\in \{0,1,\dotsc,K-1\}^d\bigg\}
	\subseteq [0,1]\times \big[0,\,2\omega_f(\sqrt{d})\big]
\end{equation*}
to construct a continuous piecewise linear function $h:[0,1]\to \big[0,\,2\omega_f(\sqrt{d})\big]$, where $\bm{1}=(1,\dotsc,1)\in \R^d$.
Precisely, we design $h$ by making it satisfy the following two conditions.
\begin{itemize}
	\item First, we set $h(1)=\tildef(\bm{1})$ and $h\big(\hatcalL_3(\bmbeta)\big)=\tildef(\bmx_\bmbeta)$ for any $\bmbeta\in \{0,1,\dotsc,K-1\}^d$, where $\bm{1}=(1,\dotsc,1)\in \R^d$.
	\item Next, we let $h$ be linear between any two adjacent points in $\calA_1\cup\{1\}$.
\end{itemize} 
See Figure~\ref{fig:h+A12} for an illustration of $h$.
%
%
%
Recall that $\omega_f(t)=\omega_\tildef(t)$ and $\omega_f(n\cdot  t)\le n\cdot \omega_f(t)$ for any $n\in\N^+$ and $t\in [0,\infty)$.
It is easy to verify that
\begin{equation*}
	\label{eq:gErrorEstimation}
	\Big|h(\tfrac{j}{2K^d})-h(\tfrac{j-1}{2K^d})\Big|\le \max\Big\{\omega_\tildef(\tfrac{\sqrt{d}}{K}),\,
	\tfrac{\omega_\tildef({\sqrt{d}})}{K}\Big\}\le  \omega_\tildef(\tfrac{\sqrt{d}}{K})=\omega_f(\tfrac{\sqrt{d}}{K})
\end{equation*} 
for $j=1,2,\dotsc,2K^d$. 

\begin{figure}[htbp!]
	\centering
	\includegraphics[width=0.78\textwidth]{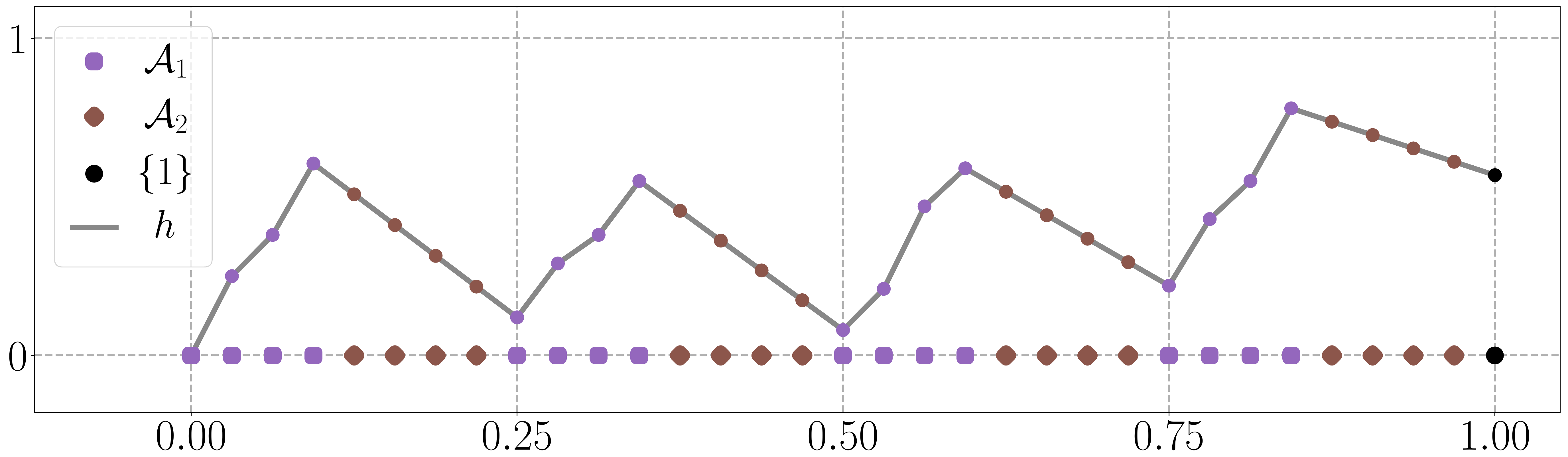}
	\caption{An illustration of $\mathcal{A}_1$, $\mathcal{A}_2$, $\{1\}$, and $h$ for $K=4$ and $d=2$.}
	\label{fig:h+A12} 
\end{figure}


By Proposition~\ref{prop:point:fitting} with $y_j=h(\tfrac{j}{2K^d})$, $\varepsilon=\omega_f(\tfrac{\sqrt{d}}{K})>0$, $m=2r$, and  $n=2K^d=2\lfloor r^{1/d}\rfloor^d\le 2r=m$ therein, there exist 
$\bmg_2\in\nn{16}{2}{6}{6}$ and two affine linear maps $\bmtildecalL_4:\R\to \R^6$ and $\hatcalL_5:\R^6\to \R$ such that
\begin{equation*}
	\Big|\hatcalL_5\circ \bmg_2^{\circ (2r-1)}\circ \bmtildecalL_4(j)-h(\tfrac{j}{2K^d})\Big|\le \omega_f(\tfrac{\sqrt{d}}{K})\quad  \tn{for } j=0,1,\dotsc,2K^d-1.
\end{equation*}

By defining $\bmhatcalL_4(x)\coloneqq \bmtildecalL_4(2K^dx)$ for any $x\in \R$, we have 
\begin{equation}
	\label{eq:L5:g2:L4-h}
	\begin{split}
		\Big|\hatcalL_5\circ \bmg_2^{\circ (2r-1)}\circ \bmhatcalL_4(\tfrac{j}{2K^d})-h(\tfrac{j}{2K^d})\Big|
		=\Big|\hatcalL_5\circ \bmg_2^{\circ (2r-1)}\circ \bmtildecalL_4(j)-h(\tfrac{j}{2K^d})\Big|\le \omega_f(\tfrac{\sqrt{d}}{K}) 
	\end{split}
\end{equation}
\tn{for $ j=0,1,\dotsc,2K^d-1$}. Then, we can define $\phi_2$ via  $\phi_2\coloneqq 	\hatcalL_5\circ \bmg_2^{\circ (2r-1)}\circ \bmhatcalL_4\circ \bmhatcalL_3$. 

By Equation~\eqref{eq:L5:g2:L4-h} and $\bmhatcalL_3(\bmbeta)\in \calA_1\subseteq \{\tfrac{j}{2K^d}:j=0,1,\dotsc,2K^d-1\}$ for any $\bmbeta\in \{0,1,\dotsc,K-1\}^d$, we have
\begin{equation}
	\label{eq:phi2-tildef}
	\begin{split}
		\big|\phi_2(\bmbeta)-\tildef(\bmx_\bmbeta)\big|
		&= \Big|\hatcalL_5\circ \bmg_2^{\circ (2r-1)}\circ \bmhatcalL_4\circ \bmhatcalL_3(\bmbeta)- \tildef(\bmx_\bmbeta)\Big|\\
		&= \Big|\hatcalL_5\circ \bmg_2^{\circ (2r-1)}\circ \bmhatcalL_4\big( \bmhatcalL_3(\bmbeta)\big) -h\big(\bmhatcalL_3(\bmbeta)\big)\Big|\le \omega_f(\tfrac{\sqrt{d}}{K}).
	\end{split}
\end{equation}

\mystep{4}{Construct the desired function $\phi$ and show it can be represented by the desired form.}

We are ready to define the desired function $\phi$ via
\begin{equation*}
	\phi\coloneqq \phi_2\circ\bmPhi_1+f(\bmzero)-\omega_f(\sqrt{d})=
	\hatcalL_5\circ \bmg_2^{\circ (2r-1)}\circ \bmhatcalL_4\circ \bmhatcalL_3\circ \bmhatcalL_2\circ \bmG_1^{\circ (r-1)}\circ \bmhatcalL_1+f(\bmzero)-\omega_f(\sqrt{d}).
\end{equation*}
By defining
$\bmhatcalL_6\coloneqq \bmhatcalL_4\circ \bmhatcalL_3\circ \bmhatcalL_2$ and $\hatcalL_7:\R^6\to \R$ via 
\begin{equation*}
	\hatcalL_7(\bmz)\coloneqq \hatcalL_5(\bmz)+f(\bmzero)-\omega_f(\sqrt{d})\quad \tn{for any $\bmz\in \R^6$,}
\end{equation*}
we have $\phi=\hatcalL_7\circ \bmg_2^{\circ (2r-1)}\circ\bmhatcalL_6\circ \bmG_1^{\circ (r-1)}\circ \bmhatcalL_1$.

Recall that $\bmG_1 \in
\nn[\big]{9d}{1}{5d}{5d}$ and $\bmg_2 \in
\nn[\big]{16}{2}{6}{6}$.
By Proposition~\ref{prop:two:blocks:three:affine} with $N_1=9d$, $N_2=16$, $L_1=1$, $L_2=2$, $d_0=d$, $d_1=5d$, $d_2=6$ and $d_3=1$ therein  and setting $\tilded=5d+1\ge \max\{5d,\, 6\}$, there exist
\begin{equation*}
	\begin{split}
			\bmg&\in\nn[\big]{9d+16+6\tilded + 2}{\max\{1+2,\,2+1\}}{\tilded+2}{\tilded+2}\\
			&= \nn[\big]{39d+24}{3}{5d+3}{5d+3}\\
	\end{split}
\end{equation*}
and two affine linear maps $\bmcalL_1:\R^d\to \R^{5d+3}$ and $\calL_2:\R^{5d+3}\to \R$ such that
\begin{equation*}
	\phi(\bmx)=\hatcalL_7\circ \bmg_2^{\circ (2r-1)}\circ\bmhatcalL_6\circ \bmG_1^{\circ (r-1)}\circ \bmhatcalL_1(\bmx)=
	\calL_2\circ \bmg^{\circ (2r-1+r-1+1)}\circ \bmcalL_1(\bmx)
	=\calL_2\circ \bmg^{\circ (3r-1)}\circ \bmcalL_1(\bmx)
\end{equation*}
for any $\bmx\in [-1,1]^d\supseteq [0,1]^d$.


Next, let us estimate the approximation error.
Recall that $f=\tildef+f(\bmzero)-\omega_f(\sqrt{d})$
 and $\phi=\phi_2\circ\bmPhi_1+f(\bmzero)-\omega_f(\sqrt{d})$. 
By Equations~\eqref{eq:Phi1:output:beta} and \eqref{eq:phi2-tildef}, for any $\bmx\in Q_\bmbeta$ and $\bmbeta\in \{0,1,\dotsc,K-1\}^d$, we have
\begin{equation*}
	\begin{split}
		\big|\calL_2\circ \bmg^{\circ (3r-1)}\circ \bmcalL_1(\bmx)-f(\bmx)\big|
		&=|\phi(\bmx)-f(\bmx)|
		=\big|\phi_2\circ\bmPhi_1(\bmx)-\tildef(\bmx)\big|=\big|\phi_2(\bmbeta)-\tildef(\bmx)\big|\\
		&\le \big|\phi_2(\bmbeta)-\tildef(\bmx_\bmbeta)\big|+ \big|\tildef(\bmx_\bmbeta)-\tildef(\bmx)\big|\\
		&\le \omega_f(\tfrac{\sqrt{d}}{K})
		+\omega_\tildef\big(\|\bmx_\bmbeta-\bmx\|_2\big)
		\le   \omega_f(\tfrac{\sqrt{d}}{K}) + \omega_\tildef(\tfrac{\sqrt{d}}{K}),
	\end{split}
\end{equation*}
where the last inequality comes from $\|\bmx_\bmbeta-\bmx\|_2\le \tfrac{\sqrt{d}}{K}$.

Recall that  $K=\lfloor r^{1/d}\rfloor \ge    \tfrac{r^{1/d}}{2}$,  $\omega_f(t)=\omega_\tildef(t)$, and $\omega_f(n\cdot  t)\le n\cdot \omega_f(t)$ for any $n\in\N^+$ and $t\in [0,\infty)$. Therefore, for any $\bmx\in \bigcup_{\bm{\beta}\in \{0,1,\dotsc,K-1\}^d} Q_\bmbeta\black{=} [0,1]^d\backslash \Omega([0,1]^d,K,\delta)$, we have
\begin{equation*}
	\begin{split}
		\big|\calL_2\circ \bmg^{\circ (3r-1)}\circ \bmcalL_1(\bmx)-f(\bmx)\big|
		&\le  \omega_f(\tfrac{\sqrt{d}}{K}) + \omega_\tildef(\tfrac{\sqrt{d}}{K})
		\le 2 \omega_f(\tfrac{\sqrt{d}}{K})
		=2 \omega_f(\tfrac{\sqrt{d}}{\lfloor r^{1/d}\rfloor})\le  2 \omega_f(2\sqrt{d}\,r^{-1/d})\\
		&\le  2 \omega_f\big(\big\lceil 2\sqrt{d}\big\rceil\,r^{-1/d}\big)
		\le 2\big\lceil 2\sqrt{d}\,\big\rceil \omega_f(r^{-1/d})
		\le 5\sqrt{d}\,\omega_f(r^{-1/d}), 
	\end{split}
\end{equation*}
where the last equality comes from the fact $2\big\lceil 2\sqrt{n}\,\big\rceil\le 5\sqrt{n}$ for any $n\in \N^+$. 
So we finish the proof of Theorem~\ref{thm:main:gap}.
\end{proof}

%
%

\section{Proof of Proposition~\ref{prop:floor:approx}}
\label{sec:proof:prop:floor:approx}
	

 The main idea behind proving Proposition~\ref{prop:floor:approx} lies in the composition architecture of neural networks.	
 To streamline the proof, we begin by introducing a lemma, Lemma~\ref{lem:floor:approx} below, which can be seen as a weaker version of Proposition~\ref{prop:floor:approx}.

\begin{lemma}
	\label{lem:floor:approx}
	Given any $\delta\in (0,1)$ and $n\in \N^+$ with $n\ge 2$,
	there exist $\bmg \in
	\nn[\big]{9}{1}{5}{5}$
	and two affine linear maps $\bmcalL_1:\R\to\R^5$ and $\calL_2:\R^5\to \R$ such that
	\begin{equation*}
		\calL_2\circ\bmg^{\circ (n-1)}\circ \bmcalL_1(x)=\lfloor x\rfloor \quad \tn{for any $x\in \bigcup_{\ell=0}^{n-1}\big[\ell,\,\ell+1-\delta\big]$}
	\end{equation*}
\end{lemma}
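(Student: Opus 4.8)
The plan is to exploit the identity $\lfloor x\rfloor=\sum_{k=1}^{n-1}\one_{\{x\ge k\}}$, which holds for every $x$ in the ``good set'' $\bigcup_{\ell=0}^{n-1}[\ell,\ell+1-\delta]$: on $[\ell,\ell+1-\delta]$ one has $x\ge k$ exactly for $k=1,\dotsc,\ell$, so the sum equals $\ell=\lfloor x\rfloor$. The idea is then to let the $(n-1)$-fold iterate $\bmg^{\circ(n-1)}$ accumulate this sum one term at a time, using the fixed map $\bmg$ as a single ``time step'' carrying a threshold that advances on its own.

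Concretely, I would carry a three-dimensional state $(x,c,a)$ (embedded in $\R^5$, padding the remaining coordinates with $0$), where $x$ is the unchanged input, $c$ is the current integer threshold, and $a$ is a running accumulator. The initialization is $\bmcalL_1(x)=(x,1,0,0,0)$ and the read-out $\calL_2$ extracts the accumulator coordinate $a$; both are affine. The step map should implement
\[
  (x,c,a)\ \longmapsto\ \big(x,\ c+1,\ a+\psi(x-c)\big),\qquad \psi(u)\coloneqq\sigma(u/\delta+1)-\sigma(u/\delta).
\]
Since $x\ge0$, $c\ge1$, and $a\ge0$ hold along the entire trajectory, the pass-through and increment of the first two coordinates are realized by $\sigma(x)$ and $\sigma(c)+1$, while the accumulator update is $\sigma(a)+\sigma(\tfrac{x-c}{\delta}+1)-\sigma(\tfrac{x-c}{\delta})$. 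These require only the five hidden neurons $\sigma(x),\sigma(c),\sigma(a),\sigma(\tfrac{x-c}{\delta}+1),\sigma(\tfrac{x-c}{\delta})$, so $\bmg$ is a one-hidden-layer ReLU network of width $5\le 9$ mapping $\R^5\to\R^5$, as required.

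The correctness then rests on two observations. First, $\psi$ realizes the indicator on exactly the values we ever query: $\psi(u)=1$ for $u\ge0$ and $\psi(u)=0$ for $u\le-\delta$, and for integer $c$ and $x\in[\ell,\ell+1-\delta]$ the quantity $x-c$ is always $\ge0$ (when $c\le\ell$) or $\le-\delta$ (when $c\ge\ell+1$), never landing in the ramp $(-\delta,0)$; this is precisely where the $\delta$-gaps are used. Second, a short induction shows that after $j$ applications the state equals $\big(x,\,1+j,\,\sum_{k=1}^{j}\one_{\{x\ge k\}}\big)$, so after $n-1$ applications the accumulator equals $\sum_{k=1}^{n-1}\one_{\{x\ge k\}}=\lfloor x\rfloor$, giving $\calL_2\circ\bmg^{\circ(n-1)}\circ\bmcalL_1(x)=\lfloor x\rfloor$ on the good set. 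I expect the main obstacle to be the bookkeeping that turns this discrete update into a genuine fixed ReLU layer: one must verify that the indicator is realized continuously using the gap structure, and that the ``carry'' of $x$, $c$, and $a$ through a layer with no skip connections is legitimate, which relies on the non-negativity of these coordinates throughout the iteration.
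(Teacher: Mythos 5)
Your proposal is correct, and it takes a genuinely different route from the paper's proof. The paper decomposes $\lfloor x\rfloor=\sum_{k=0}^{n-1}h_k(x)$, where $h_k$ is a trapezoidal bump of \emph{height $k$} supported essentially on $[k-\delta,\,k+1]$, so that exactly one summand is nonzero at any point of the good set; to synthesize the bump $h_k$ at step $k$ the iterated block must carry the quantities $kx$, $k^2$, $k$, and $x$ (updated via $kx\mapsto kx+x$ and $k^2\mapsto k^2+2k+1$), which is why its state is genuinely five-dimensional and its hidden layer uses $9$ neurons. You instead use the cumulative decomposition $\lfloor x\rfloor=\sum_{k=1}^{n-1}\one_{\{x\ge k\}}$, realizing each indicator by the unit-height ramp $\psi(u)=\sigma(u/\delta+1)-\sigma(u/\delta)$ evaluated at $u=x-c$ with an integer threshold $c$ that merely increments; this eliminates the quadratic bookkeeping, needs only the three-dimensional state $(x,c,a)$ padded to $\R^5$, and fits in $5$ hidden neurons, comfortably inside $\nn[\big]{9}{1}{5}{5}$. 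Both arguments use the $\delta$-gaps in exactly the same way --- to guarantee the query point $x-c$ never lands in the ramp interval $(-\delta,0)$ --- and both rely on the nonnegativity of the carried coordinates to pass them through a ReLU layer without skip connections; your checks of these two points ($x\ge 0$, $c\ge 1$, $a\ge 0$ along the trajectory, and $x-k\ge 0$ or $x-k\le-\delta$ for integer $k$ and $x$ in the good set) are the essential ones and they all go through. The trade-off is purely aesthetic versus economical: the paper's localized-bump construction mirrors its Figure of $h_k$ and reuses the same ``one active term'' picture elsewhere, while your running-sum-of-indicators construction is leaner and arguably the more natural way to build $\lfloor x\rfloor$ by iteration.
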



We will prove
Proposition~\ref{prop:floor:approx} with Lemma~\ref{lem:floor:approx} in Section~\ref{sec:proof:prop:floor:approx:with:lemma}.
The proof of Lemma~\ref{lem:floor:approx} can be found in Section~\ref{sec:proof:lem:floor:approx}. 

\subsection{Proof of Proposition~\ref{prop:floor:approx} with Lemma~\ref{lem:floor:approx}}
\label{sec:proof:prop:floor:approx:with:lemma}

Now, let us provide the detailed proof of Proposition~\ref{prop:floor:approx} by assuming Lemma~\ref{lem:floor:approx} is true. 
\begin{proof}[Proof of Proposition~\ref{prop:floor:approx}]
	We may assume $m\ge  n\ge 2$ since $n=1$ is a trivial case.
Set $\tildedelta=\tfrac{(1-\delta)\delta}{n}\in (0,1)$. By Lemma~\ref{lem:floor:approx}, 
there exist $\bmg \in
\nn[\big]{9}{1}{5}{5}$
and two affine linear maps $\bmtildecalL_1:\R\to\R^5$ and $\calL_2:\R^5\to \R$ such that
\begin{equation}
	\label{eq:floor:r-delta}
	\calL_2\circ\bmg^{\circ (m-1)}\circ \bmtildecalL_1(y)=\lfloor y\rfloor \quad \tn{for any $y\in \bigcup_{k=0}^{m-1}\big[k,\,k+1-\tildedelta\big]\supseteq \bigcup_{k=0}^{n-1}\big[k,\,k+1-\tildedelta\big]$}.
\end{equation}
	Define $\calL_0(x)\coloneqq \tfrac{n-\delta-\tildedelta}{n}x+\delta$  for any $x\in\R$ and $\bmcalL_1\coloneqq \bmtildecalL_1\circ \calL_0$.
%
%
We claim 
\begin{equation}\label{eq:calL0:k}
	\calL_0\Big(\big[k,\,k+1-\delta\cdot\one_{\{k\le n-2\}}\big]\Big)\subseteq [k,\, k+1-\tildedelta]\quad \tn{for $k=0,1,\dotsc,n-1$.}
\end{equation}
Then,
 by Equations~\eqref{eq:floor:r-delta} and \eqref{eq:calL0:k}, 
 for any $x\in [k,\,k+1-\delta\cdot\one_{\{k\le n-2\}}]$ and $k=0,1,\dotsc,n-1$, 
we have 
\begin{equation*}
    y=\calL_0(x)\in [k,\, k+1-\tildedelta]\quad \tn{for $k=0,1,\dotsc,n-1$,}
\end{equation*}
from which we deduce
\begin{equation*}
	\begin{split}
		\calL_2\circ\bmg^{\circ (m-1)}\circ \bmcalL_1(x)
		=\calL_2\circ\bmg^{\circ (r-1)}\circ \bmtildecalL_1\circ \calL_0(x)
		&=\calL_2\circ\bmg^{\circ (m-1)}\circ \bmtildecalL_1\Big(\calL_0(x)\Big)\\
  &=\calL_2\circ\bmg^{\circ (m-1)}\circ \bmtildecalL_1(y)
		=\lfloor y\rfloor =k. 		
	\end{split}
\end{equation*}

%
%


It remains to prove Equation~\eqref{eq:calL0:k}. 
Clearly,
\begin{equation*}
    \tfrac{n-\delta-\tildedelta}{n}=\tfrac{1}{n}\big(n-\delta-\tfrac{(1-\delta)\delta}{n}\big)\ge 
\tfrac{1}{n}(1-\delta-(1-\delta)\delta)=\tfrac{1}{n}(1-\delta)^2> 0,
\end{equation*}
implying $\calL_0$ is increasing. 
To prove Equation~\eqref{eq:calL0:k},
we only need to prove
\begin{equation}\label{eq:k:calL0:left}
	k\le \calL_0(k)\quad \tn{for $k=0,1,\dotsc,n-1$ }
\end{equation}
and 
\begin{equation}\label{eq:k:calL0:right}
	 \calL_0\big(k+1-\delta\cdot\one_{\{k\le n-2\}}\big)\le k+1-\tildedelta\quad \tn{for $k=0,1,\dotsc,n-1$. }
\end{equation}

Let us first prove Equation~\eqref{eq:k:calL0:left}. Clearly, for $k=0,1,\dotsc,n-1$, we have
\begin{equation*}
	\begin{split}
		\calL_0(k)=\tfrac{n-\delta-\tildedelta}{n}k+\delta
		&=k+(-\delta-\tildedelta)\tfrac{k}{n}+\delta
		=k+ \big(-\delta-\tfrac{(1-\delta)\delta}{n}\big)\tfrac{k}{n}+\delta	\\
		&=k  + \Big(- \tfrac{k}{n}-\tfrac{(1-\delta)}{n}\tfrac{k}{n}+1\Big)\delta	
    =k  + \tfrac{-kn-(1-\delta)k+n^2}{n^2}\delta\ge k,
	\end{split}
\end{equation*}
where the inequality comes from the fact $-kn-(1-\delta)k+n^2=(n-k)n-(1-\delta)k\ge n-k\ge 0$.

Next, let us  prove Equation~\eqref{eq:k:calL0:right}. 
In the case of $k=n-1$, we have
\begin{equation*}
	\calL_0\big(k+1-\delta\cdot\one_{\{k\le n-2\}}\big)=\calL_0(n)=\tfrac{n-\delta-\tildedelta}{n}n+\delta=n-\tildedelta=k+1-\tildedelta.
\end{equation*}
In the case of $k\in \{0,1,\dotsc,n-2\}$, we have 
\begin{equation*}
	\begin{split}
		\calL_0\big(k+1-\delta\cdot\one_{\{k\le n-2\}}\big)
		&=\calL_0(k+1-\delta)
		=\tfrac{n-\delta-\tildedelta}{n}(k+1-\delta)+\delta\\
		&=\big(1-\tfrac{\delta+\tildedelta}{n}\big)(k+1-\delta)+\delta
  =(k+1-\delta)-\tfrac{\delta+\tildedelta}{n}(k+1-\delta) +\delta\\
		&= (k+1)-\tfrac{\delta+\tildedelta}{n}(k+1-\delta)\le (k+1) -\tfrac{\delta}{n}(1-\delta)= k+1-\tildedelta.
	\end{split}
\end{equation*}
So we finish the proof of Proposition~\ref{prop:floor:approx}.
\end{proof}

\subsection{Proof of Lemma~\ref{lem:floor:approx}}
\label{sec:proof:lem:floor:approx}

To ensure the completeness of the proof of Proposition~\ref{prop:floor:approx},
we now provide the proof of Lemma~\ref{lem:floor:approx}.
\begin{proof}[Proof of Lemma~\ref{lem:floor:approx}]
		
		Define
		\begin{equation*}
			\begin{split}
				h_k(x)&\coloneqq
				\sigma\big(\tfrac{k}{\delta}(x-k+\delta)\big)
				- \sigma\big(\tfrac{k}{\delta}(x-k)\big)
				+\sigma\big(\tfrac{k}{\delta}(-x+k+1)\big)
				-\sigma\big(\tfrac{k}{\delta}(-x+k+1-\delta)\big)-k\\
				&\phantom{:}=
				\sigma\big(\tfrac{kx}{\delta}-\tfrac{k^2}{\delta}+ k\big)
				-
				\sigma\big(\tfrac{kx}{\delta}-\tfrac{k^2}{\delta}\big)
				+\sigma\big(-\tfrac{kx}{\delta}+\tfrac{k^2}{\delta}+ \tfrac{k}{\delta}\big)
				-\sigma\big(-\tfrac{kx}{\delta}+\tfrac{k^2}{\delta}+ \tfrac{k}{\delta}-k\big)-k
			\end{split}
		\end{equation*}
		for $k=0,1,\dotsc,n-1$ and any $x\in \R$. See an illustration of $h_k$ in Figure~\ref{fig:hk}.

\begin{figure}[ht]
	
	\begin{center}		\includegraphics[width=0.6\columnwidth]{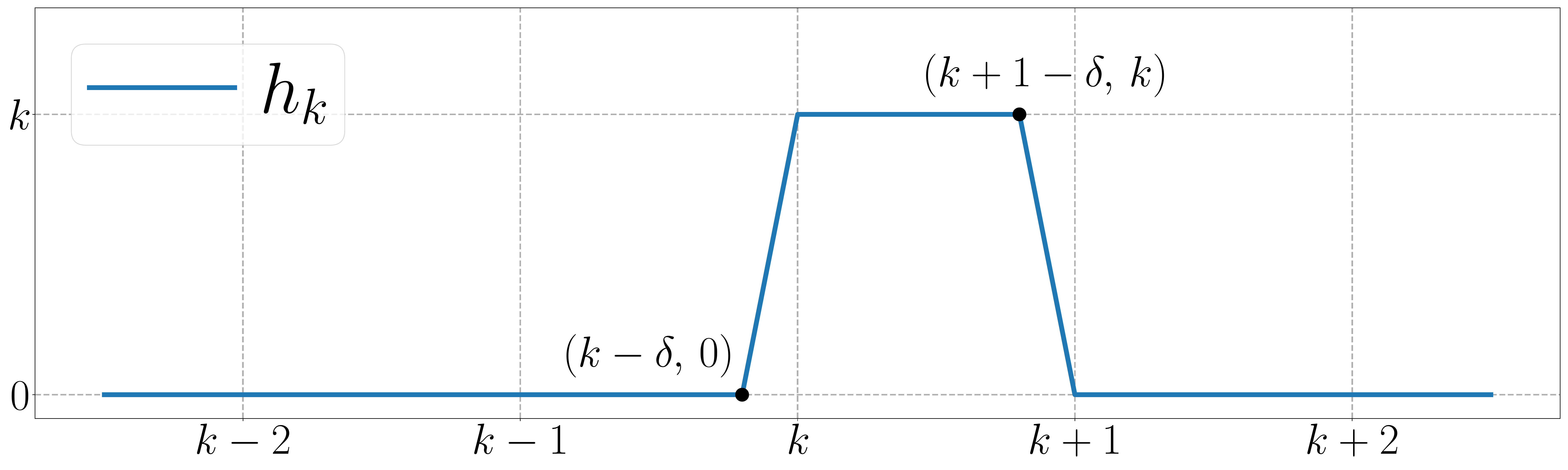}
		\caption{An illustration of $h_k$.}
		\label{fig:hk}
	\end{center}
	
\end{figure}

  It is easy to verify that
		\begin{equation*}
			h_k(x)=\begin{cases}
				k & \tn{if}\ x\in [k,k+1-\delta]\\
				0 &  \tn{if}\ x\in (-\infty,k-\delta]\cup[k+1,\infty).
			\end{cases}
		\end{equation*}
To see this, let us fix $k\in \{0,1,\dotsc,n-1\}$ and consider three cases below.
If $x\in [k,k+1-\delta]$, we have $x-k+\delta\ge 0$, $x-k\ge 0$, $-x+k+1\ge 0$, and $-x+k+1-\delta \ge 0$, implying
\begin{equation*}
	\begin{split}
		h_k(x)&=
		\sigma\big(\tfrac{k}{\delta}(x-k+\delta)\big)
		- \sigma\big(\tfrac{k}{\delta}(x-k)\big)
		+\sigma\big(\tfrac{k}{\delta}(-x+k+1)\big)
		-\sigma\big(\tfrac{k}{\delta}(-x+k+1-\delta)\big)-k\\
		&=
		\underbrace{\tfrac{k}{\delta}(x-k+\delta)
			- \tfrac{k}{\delta}(x-k)}_{=k}
		+
			\underbrace{\tfrac{k}{\delta}(-x+k+1)
		-\tfrac{k}{\delta}(-x+k+1-\delta)}_{=k}-k=k.\\
	\end{split}
\end{equation*}
	If $x\in (-\infty,k-\delta]$, we have $x-k+\delta\le 0$, $x-k\le 0$, $-x+k+1\ge 0$, and $-x+k+1-\delta \ge 0$, implying
\begin{equation*}
	\begin{split}
		h_k(x)&=
		\sigma\big(\tfrac{k}{\delta}(x-k+\delta)\big)
		- \sigma\big(\tfrac{k}{\delta}(x-k)\big)
		+\sigma\big(\tfrac{k}{\delta}(-x+k+1)\big)
		-\sigma\big(\tfrac{k}{\delta}(-x+k+1-\delta)\big)-k\\
		&=
		0
			- 0
		+
		\underbrace{\tfrac{k}{\delta}(-x+k+1)
			-\tfrac{k}{\delta}(-x+k+1-\delta)}_{=k}-k=0.\\
	\end{split}
\end{equation*}
	If $x\in [k+1,\infty)$, we have $x-k+\delta\ge 0$, $x-k\ge 0$, $-x+k+1\le 0$, and $-x+k+1-\delta \le 0$, implying
\begin{equation*}
	\begin{split}
		h_k(x)&=
		\sigma\big(\tfrac{k}{\delta}(x-k+\delta)\big)
		- \sigma\big(\tfrac{k}{\delta}(x-k)\big)
		+\sigma\big(\tfrac{k}{\delta}(-x+k+1)\big)
		-\sigma\big(\tfrac{k}{\delta}(-x+k+1-\delta)\big)-k\\
		&=
		\underbrace{\tfrac{k}{\delta}(x-k+\delta)
			- \tfrac{k}{\delta}(x-k)}_{=k}
		+
		0
			-0-k=0.\\
	\end{split}
\end{equation*}

Obviously, for any $x\in [k,k+1-\delta]$ and $k=0,1,\dotsc,n-1$, we have
\begin{equation*}
	\sum_{i=0}^{n-1}h_i(x)= h_k(x)=k=\lfloor x\rfloor.
\end{equation*}
It remains to construct $\bmg$, $\bmcalL_1$, and $\calL_2$ such that
\begin{equation*}
	\calL_2\circ \bmg^{\circ n}\circ \bmcalL_1(x) =\sum_{i=0}^{n-1}h_i(x)\quad \tn{for any $x\in \bigcup_{i=0}^{n-1}[i,\, i+1-\delta]$}.
\end{equation*}

By defining $h:\R^3 \to \R$ via 
\begin{equation*}
	h(x_1,x_2,x_3)\coloneqq \sigma\big(\tfrac{x_1}{\delta}-\tfrac{x_2}{\delta}+x_3\big)
	-\sigma\big(\tfrac{x_1}{\delta}-\tfrac{x_2}{\delta}\big)
	+\sigma\big(-\tfrac{x_1}{\delta}+\tfrac{x_2}{\delta}+\tfrac{x_3}{\delta}\big)
	-\sigma\big(-\tfrac{x_1}{\delta}+\tfrac{x_2}{\delta}+\tfrac{x_3}{\delta}-x_3\big)-\sigma(x_3),
\end{equation*}
we have
\begin{equation}
	\label{eq:h:to:hk}
	\begin{split}
		h\big(kx,k^2,k\big)=
		\sigma\big(\tfrac{kx}{\delta}-\tfrac{k^2}{\delta}+ k\big)
		-
		\sigma\big(\tfrac{kx}{\delta}-\tfrac{k^2}{\delta}\big)
		+\sigma\big(-\tfrac{kx}{\delta}+\tfrac{k^2}{\delta}+ \tfrac{k}{\delta}\big)
		-\sigma\big(-\tfrac{kx}{\delta}+\tfrac{k^2}{\delta}+ \tfrac{k}{\delta}-k\big)-k=h_k(x)
	\end{split}
\end{equation}
for $k=0,1,\dotsc,n-1$.

Now we are ready to construct $\bmg:\R^5\to\R^5$.  Define 
\begin{equation*}
	\bmg(x_1,x_2,x_3,x_4,x_5)\coloneqq
	\Bigg(\sigma(x_1+x_4), \quad
	\sigma(x_2+2x_3+1), \quad
	\sigma(x_3)+1, \quad 
	\sigma(x_4), \quad
	\sigma(x_5)+h(x_1,x_2,x_3)\Bigg),
\end{equation*}
for any $(y_1,y_2,y_3,y_4,y_5)\in \R^5$, where
\begin{equation*}
		h(x_1,x_2,x_3)= \sigma\big(\tfrac{x_1}{\delta}-\tfrac{x_2}{\delta}+x_3\big)
	-\sigma\big(\tfrac{x_1}{\delta}-\tfrac{x_2}{\delta}\big)
	+\sigma\big(-\tfrac{x_1}{\delta}+\tfrac{x_2}{\delta}+\tfrac{x_3}{\delta}\big)
	-\sigma\big(-\tfrac{x_1}{\delta}+\tfrac{x_2}{\delta}+\tfrac{x_3}{\delta}-x_3\big)-\sigma(x_3).
\end{equation*}
See an illustration of the ReLU network realizing $\bmg:\R^5\to\R^5$ in Figure~\ref{fig:g:floor:approx}.
Clearly, 
$\bmg\in \nn{9}{1}{5}{5}$.

\begin{figure}[ht]
	
	\begin{center}
		\includegraphics[width=0.9\columnwidth]{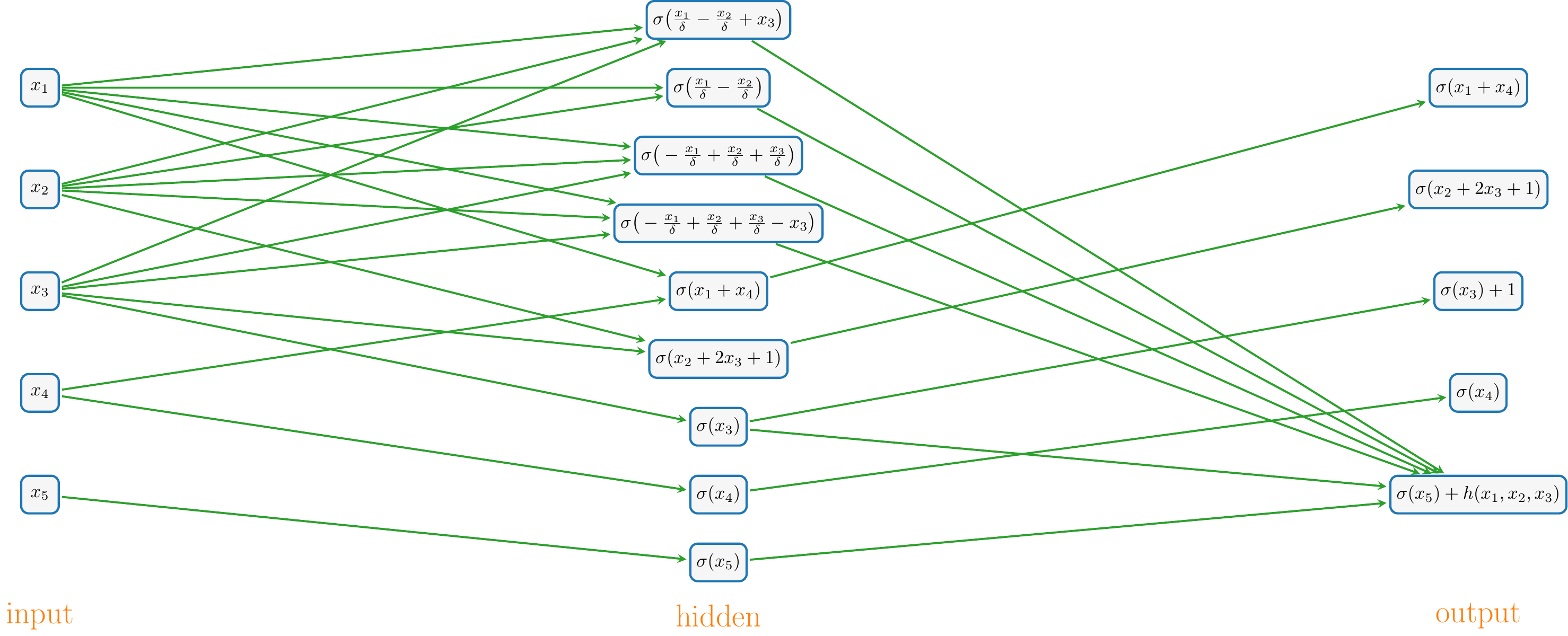}
		\caption{An illustration of $\bmg:\R^5\to\R^5$.}
		\label{fig:g:floor:approx}
	\end{center}
	
\end{figure}

Fix $x\in \cup_{i=0}^{n-1}[i,\, i+1-\delta]$ and set 
\begin{equation*}
	\bmxi_k=	\bmxi_k(x)=\Bigg(kx,\  k^2,\  k,\  x,\   \sum_{i=0}^{k-1}h_i(x)\Bigg)\in [0,\infty)^5\quad \tn{for $k=1,2,\dotsc,n$.}
\end{equation*}

For $k=1,2,\dotsc,n-1$, we have
\begin{equation*}
	\begin{split}
		\bmg(\bmxi_k)
		&=\bmg\Bigg(kx,\  k^2,\  k,\  x,\   \sum_{i=0}^{k-1}h_i(x)\Bigg)\\
		&=	\Bigg(\sigma(kx+x), \quad
		\sigma(k^2 + 2k +1), \quad
		\sigma(k)+1, \quad 
		\sigma(x), \quad
		\sigma\bigg(\sum_{i=0}^{k-1}h_i(x)\bigg)+\underbrace{h(kx,k^2,k)}_{=h_k(x)\tn{ by } \eqref{eq:h:to:hk}}\Bigg)\\
		&=	\Bigg((k+1)x, \quad
		(k+1)^2, \quad
		(k+1), \quad 
		x, \quad		\sum_{i=0}^{(k+1)-1}h_i(x)\Bigg)=\bmxi_{k+1},
	\end{split}
\end{equation*}
implying $\bmxi_{n}=\bmg(\bmxi_{n-1})=\cdots=\bmg^{\circ (n-1)}(\bmxi_1)$. 

Define $\bmcalL_1:\R \to\R^5$ via $\bmcalL_1(x)\coloneqq (x,1,1,x,0)$ and $\calL_2:\R^5\to\R$ via $\calL_2(x_1,x_2,x_3,x_4,x_5)\coloneqq x_5$. Then, 
we have 
\begin{equation*}
	\bmxi_1=\big(x,1,1,x,h_0(x)\big)=(x,1,1,x,0)=\bmcalL_1(x),
\end{equation*}
from which we deduce
\begin{equation*}
	\begin{split}
			\calL_2\circ \bmg^{\circ (n-1)}\circ \bmcalL_1(x)
		&= \calL_2\circ \bmg^{\circ (n-1)}(\bmxi_1) \\
		&= \calL_2(\bmxi_n)=\big[\bmxi_n\big]_{[5]}
	= \sum_{i=0}^{n-1}h_i(x).
	\end{split}
\end{equation*}
So we finish the proof of Lemma~\ref{lem:floor:approx}.
\end{proof}

\section{Proof of Proposition~\ref{prop:point:fitting}}
\label{sec:proof:prop:point:fitting}


The bit extraction technique proposed in \cite{Bartlett98almostlinear} plays a crucial role in proving Proposition~\ref{prop:point:fitting}. Before we delve into the proof of Proposition~\ref{prop:point:fitting}, we first establish Lemma~\ref{lem:bit:extraction}, which serves as a key intermediate step in the proof of Proposition~\ref{prop:point:fitting}.

\begin{lemma}
	\label{lem:bit:extraction}
	Given any $r\in \N^+$,
	there exist
	$\bmg\in \nn[\big]{8}{2}{3}{3}$ and two affine linear maps $\bmcalL_1:\R^2\to\R^5$ and $\calL_2:\R^5\to \R$
	such that: For any $\theta_1,\theta_2,\dotsc,\theta_{r}\in \{0,1\}$, it holds that
	\begin{equation}\label{eq:bit:extraction}
		\calL_2\circ\bmg^{\circ r}\circ \bmcalL_1\big(k,\ \bin 0.\theta_1\theta_2\cdots\theta_{r}\big)=\sum_{\ell=1}^{k}\theta_\ell \quad \tn{for $k=0,1,\dotsc,r$.}
	\end{equation}
\end{lemma}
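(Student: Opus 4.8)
The plan is to realize $\bmg^{\circ r}$ as a bit-extraction dynamical system acting on a three-coordinate state $(p,q,w)$ (matching $\bmg\in\nn{8}{2}{3}{3}$), where $p$ is a countdown initialized to $k$, $q$ is the still-unread binary fraction, and $w$ is the running partial sum. Concretely I would take $\bmcalL_1(k,\theta)\coloneqq(k,\theta,0)$ and $\calL_2(p,q,w)\coloneqq w$, and design the fixed block $\bmg$ so that one composition performs the update
\[
(p,q,w)\ \longmapsto\ \big(p-1,\ 2q-\theta',\ w+\theta'\cdot\one_{\{p\ge1\}}\big),\qquad \theta'\coloneqq\lfloor 2q\rfloor .
\]
Running this from $(k,\theta,0)$ for $r$ steps reads the bits $\theta_1,\dots,\theta_r$ off $\theta=\bin 0.\theta_1\cdots\theta_r$ in order, and accumulates $\theta_\ell$ into $w$ exactly while the index $\ell\le k$ (that is, while the countdown $p$ is still positive), so after $r$ steps $w=\sum_{\ell=1}^{\min(r,k)}\theta_\ell=\sum_{\ell=1}^{k}\theta_\ell$ because $k\le r$. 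A short induction formalizes that after $j$ steps the state equals $\big(k-j,\ \bin 0.\theta_{j+1}\cdots\theta_r,\ \sum_{\ell=1}^{\min(j,k)}\theta_\ell\big)$.

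The two genuinely nonlinear ingredients are the leading-bit extraction $q\mapsto\lfloor 2q\rfloor$ and the conditional accumulation, and both must be squeezed into two hidden layers. For the bit, I would use the crucial fact that every fraction arising as $q$ is a dyadic rational with denominator at most $2^r$; hence the values of $q$ that begin with bit $0$ all lie in $[0,\tfrac12-2^{-r}]$ while those beginning with bit $1$ all lie in $[\tfrac12,1)$, leaving an empty gap of width at least $2^{-r}$ around the threshold $\tfrac12$. A single clipped ramp of slope $2^{r}$, namely $\theta'=\sigma\big(2^{r}(q-\tfrac12)+1\big)-\sigma\big(2^{r}(q-\tfrac12)\big)$, therefore returns the exact bit at every step with one and the same $\bmg$, the point being that the gap only widens as bits are consumed, so the slope calibrated to the worst (first) step suffices throughout. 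The conditional add is the Boolean AND of $\theta'\in\{0,1\}$ with $\one_{\{p\ge1\}}$; since $p$ is always an integer, $\one_{\{p\ge1\}}=\sigma(p)-\sigma(p-1)$ is exact, and the AND is then the single ReLU $\sigma\big(\theta'+\one_{\{p\ge1\}}-1\big)$, which I would place in the second hidden layer.

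With these pieces the explicit $\bmg$ is assembled layer by layer. The first hidden layer computes $\sigma(2^rq-2^{r-1}+1)$, $\sigma(2^rq-2^{r-1})$, $\sigma(p)$, $\sigma(p-1)$, $\sigma(-p)$ together with the carries $\sigma(q)=q$ and $\sigma(w)=w$; the second hidden layer forms the masked bit $\sigma\big(\theta'+\one_{\{p\ge1\}}-1\big)$, the nonnegative updated fraction $\sigma(2q-\theta')=2q-\theta'$, a carry $\sigma(w)=w$, and a $\pm$-pair $\sigma(p-1)-\sigma(-(p-1))$ reconstructing the possibly negative $p-1$; finally the output affine map reads off $\big(p-1,\ 2q-\theta',\ w+\text{masked}\big)$. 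This uses at most $7\le 8$ neurons per hidden layer and depth $2$, so $\bmg\in\nn{8}{2}{3}{3}$, and the induction above closes the argument once $\calL_2$ extracts the last coordinate.

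I expect the main obstacle to be exactly the robustness of the fixed bit-extractor: because the same block $\bmg$ is reused at every resolution, one must argue that a single steepness handles all $r$ steps at once, which is precisely where the uniform gap bound $2^{-r}$ and the monotonicity of the gap in the step index enter. The remaining difficulty is purely bookkeeping, namely carrying the three state coordinates (one of which can become negative) through two ReLU layers within the width-$8$ budget, which the $\sigma(v)=v$ carries and the single $\pm$-pair handle.
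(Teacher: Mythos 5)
Your proposal is correct and follows essentially the same route as the paper's proof: the same three-coordinate state (countdown, unread binary fraction, running sum), the same steep clipped ramp of slope $2^r$ exploiting the worst-case dyadic gap $2^{-r}$ to extract the leading bit exactly, and the same ReLU-AND $\sigma(a+b-1)$ to mask the bit by the countdown indicator. The only differences are cosmetic (the paper initializes the countdown at $k-1$ and reuses the ramp $\calT$ as the integer indicator, and phrases the accumulation via the closed-form identity $\sum_{i=1}^{k}\theta_i=\sum_{i=1}^{r}\sigma\big(\calT(\beta_i-\tfrac12)+\calT(k-i)-1\big)$ rather than your step-by-step induction).
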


We will prove Proposition~\ref{prop:point:fitting} by assuming the validity of Lemma~\ref{lem:bit:extraction}, which
will be proved later in Section~\ref{sec:proof:lem:bit:extraction}.

\subsection{Proof of Proposition~\ref{prop:point:fitting} with Lemma~\ref{lem:bit:extraction}}

Now we are ready to give the proof of Proposition~\ref{prop:point:fitting} by assuming Lemma~\ref{lem:bit:extraction} is true.
\begin{proof}[Proof of Proposition~\ref{prop:point:fitting}]
	
	We may assume $n=m$ since we can set $y_{n-1}=y_{n}=\cdots=y_{m-1}$ if $n<m$. 	
	Set 
	\begin{equation*}
		a_{i}= \big\lfloor \tfrac{y_{i}}{\varepsilon} \big\rfloor 
		\quad \tn{for $i=0,1,\dotsc,n-1$}
	\end{equation*}
	and 
	\begin{equation*}
		b_{i}=a_{i}-a_{i-1}\quad \tn{for $i=1,2,\dotsc,n-1$.}
	\end{equation*}

	Since $|y_{i}-y_{i -1}|\le \varepsilon$ for $i=1,2,\dotsc,n-1$, we have
	$y_{i}\in [y_{i-1}-\varepsilon,\, y_{i-1}+\varepsilon]$. Thus, for $i=1,2,\dotsc,n-1$, we have
	\begin{equation*}
		-1
		=\big\lfloor \tfrac{y_{i-1}-\eps}{\eps}\big\rfloor-\big\lfloor \tfrac{y_{i-1}}{\eps}\big\rfloor
		\le \big\lfloor \tfrac{y_{i}}{\eps}\big\rfloor-\big\lfloor \tfrac{y_{i-1}}{\eps}\big\rfloor
		\le \big\lfloor \tfrac{y_{i-1}+\eps}{\eps}\big\rfloor-\big\lfloor \tfrac{y_{i-1}}{\eps}\big\rfloor
		=1,
	\end{equation*}
	implying 
	\begin{equation*}
		b_i=a_i-a_{i-1}=\big\lfloor \tfrac{y_{i}}{\eps}\big\rfloor-\big\lfloor \tfrac{y_{i-1}}{\eps}\big\rfloor\in [-1,1].
	\end{equation*}

	It follows from $b_i=a_i-a_{i-1}\in\Z$ that
	$b_{i}\in \{-1,0,1\}$ for $i=1,2,\dotsc,n-1$.
	Hence, there exist $c_{i}\in\{0,1\}$ and $d_{i }\in\{0,1\}$ such that 
	\begin{equation*}
		b_{i}=c_{i}-d_{i}\quad \tn{ for $i=1,2,\dotsc,n-1$.}
	\end{equation*}
	
	Then, for any $k\in \{1,2,\dotsc,n-1\}$, we have
	\begin{equation*}
		\begin{split}
			a_{k}=a_{0}+\sum_{i=1}^{k }(a_{i}-a_{i-1})
			&=a_0+\sum_{i=1}^{k }(a_{i}-a_{i-1})\\
			&=a_0+\sum_{i=1}^{k }b_i
			=a_0+\sum_{i=1}^{ k }c_{i}
			-\sum_{i=1}^{ k }d_{i}.
		\end{split}
	\end{equation*}
	Clearly, $a_0=a_0+0-0=a_0+\sum_{i=1}^{0}c_i-\sum_{i=1}^{0}d_i$. Thus, we have
	\begin{equation*}
		\begin{split}
			a_{k}=a_0+\sum_{i=1}^{ k }c_{i}
			-\sum_{i=1}^{ k }d_{i}\quad \tn{for $k=0,1,\dotsc,n-1$.}
		\end{split}
	\end{equation*}
	
	
	By Lemma~\ref{lem:bit:extraction} with $r=n-1$ therein, there exist
	$\bmtildeg,\bmhatg\in \nn{8}{2}{3}{3}$ and four affine linear maps 
	$\bmtildecalL_1,\bmhatcalL_1:\R^2\to\R^3$ and $\tildecalL_2,\hatcalL_2:\R^3\to\R$ 
	such that
	\begin{equation*}
		\tildecalL_2\circ \bmtildeg^{\circ (n-1)}\circ \bmtildecalL_1\big(k,\  \bin 0.c_1\cdots  c_{n-1}\big)
		=\sum_{i=1}^{ k }c_{i}
		\quad \tn{and}\quad 
		\hatcalL_2\circ \bmhatg^{\circ (n-1)}\circ \bmhatcalL_1\big(k,\  \bin 0.d_1\cdots  d_{n-1}\big)
		=\sum_{i=1}^{ k }d_{i} 
	\end{equation*}
	for $k=0,1,\dotsc,n-1$, implying
	\begin{equation}
		\label{eq:ak:decom}
		\begin{split}
			a_{k}
			&=a_{0}
			+\sum_{i=1}^{ k }c_{i}
			-\sum_{i=1}^{ k }d_{i}\\
			&=a_0+ 	\tildecalL_2\circ \bmtildeg^{\circ (n-1)}\circ \bmtildecalL_1\big(k,\  \bin 0.c_1\cdots  c_{n-1}\big)
			-
			\hatcalL_2\circ \bmhatg^{\circ (n-1)}\circ \bmhatcalL_1\big(k,\  \bin 0.d_1\cdots  d_{n-1}\big).
		\end{split}
	\end{equation}
	
	Define $\bmg:\R^6\to\R^6$ via 
	\begin{equation*}
		\bmg(\bmx,\bmy)\coloneqq \Big(\bmtildeg(\bmx),\  \bmhatg(\bmy)\Big)\quad \tn{for any $\bmx,\bmy\in \R^3$,}
	\end{equation*}
	$\bmcalL_1:\R\to\R^6$ via
	\begin{equation*}
		\bmcalL_1(x)\coloneqq \bigg(\bmtildecalL_1(x, \  \bin 0.c_1\cdots c_{n-1})
		,\quad
		\bmhatcalL_1(x, \  \bin 0.d_1\cdots d_{n-1})\bigg)\quad \tn{for any $x\in \R$,}
	\end{equation*}
	and $\calL_{2}:\R^6\to \R$ via
	\begin{equation*}
		\calL_{2}(\bmx,\bmy)\coloneqq \eps  \Big(a_0+\tildecalL_2(\bmx)-\hatcalL_2(\bmy)\Big)\quad \tn{for any $\bmx,\bmy\in \R^3$.}
	\end{equation*}

	
	It is easy to verify that $\bmg\in \nn{16}{2}{6}{6}$. Moreover, 
	we have 
	\begin{equation*}
		\bmg^{\circ (n-1)}(\bmx,\bmy)=\Big(\bmtildeg^{\circ (n-1)}(\bmx),\  \bmhatg^{\circ (n-1)}(\bmy)\Big)\quad \tn{for any $\bmx,\bmy\in\R^3$.}
	\end{equation*}
	
	Therefore, for $k=0,1,\dotsc,n-1$, we have
	\begin{equation*}
		\begin{split}
			\calL_2\circ \bmg^{\circ (n-1)}\circ \bmcalL_1(k)
			&= \calL_2\circ \bmg^{\circ (n-1)}\bigg(\bmtildecalL_1(k, \  \bin 0.c_1\cdots c_{n-1})
			,\quad
			\bmhatcalL_1(k, \  \bin 0.d_1\cdots d_{n-1})\bigg)\\
			&=\calL_2\bigg(\bmtildeg^{\circ (n-1)}\circ\bmtildecalL_1(k, \  \bin 0.c_1\cdots c_{n-1})
			,\quad
			\bmhatg^{\circ (n-1)}\circ\bmhatcalL_1(k, \  \bin 0.d_1\cdots d_{n-1})\bigg)\\
			&= \eps \bigg(a_0 + \tildecalL_2\circ \bmtildeg^{\circ (n-1)}\circ\bmtildecalL_1(k, \  \bin 0.c_1\cdots c_{n-1})
			-
			\hatcalL_2\circ\bmhatg^{\circ (n-1)}\circ\bmhatcalL_1(k, \  \bin 0.d_1\cdots d_{n-1})\bigg)\\
			&=\eps  a_k,
		\end{split}
	\end{equation*}
	where the last equality comes from
	Equation~\eqref{eq:ak:decom}. It follows that, for $k=0,1,\dotsc,n-1$, 
	\begin{equation*}
		\Big|\calL_2\circ \bmg^{\circ (n-1)}\circ \bmcalL_1(k)-y_k\Big|
		=\big|\eps   a_k - y_k\big|
		=\Big| \eps \lfloor \tfrac{y_k}{\eps}\rfloor  - \eps \tfrac{ y_k}{\eps}\Big|
		=\eps \Big| \lfloor \tfrac{y_k}{\eps}\rfloor  - \tfrac{ y_k}{\eps} \Big|\le \eps.
	\end{equation*}
	So we finish the proof of Proposition~\ref{prop:point:fitting}.
\end{proof}

\subsection{Proof of Lemma~\ref{lem:bit:extraction}}
\label{sec:proof:lem:bit:extraction}

To make the proof of Proposition~\ref{prop:point:fitting} complete, we now provide the proof of Lemma~\ref{lem:bit:extraction}.
\begin{proof}[Proof of Lemma~\ref{lem:bit:extraction}]
	Set $\delta=2^{-r}$ and define 
	\begin{equation*}
		\calT(x)=\sigma\big(\tfrac{x}{\delta}+1\big)-\sigma\big(\tfrac{x}{\delta}\big)\quad \tn{for any $x\in\R$.}
	\end{equation*}
See an illustration of $\calT$ in Figure~\ref{fig:calT}.

\begin{figure}[ht]
	
	\begin{center}
		\includegraphics[width=0.6\columnwidth]{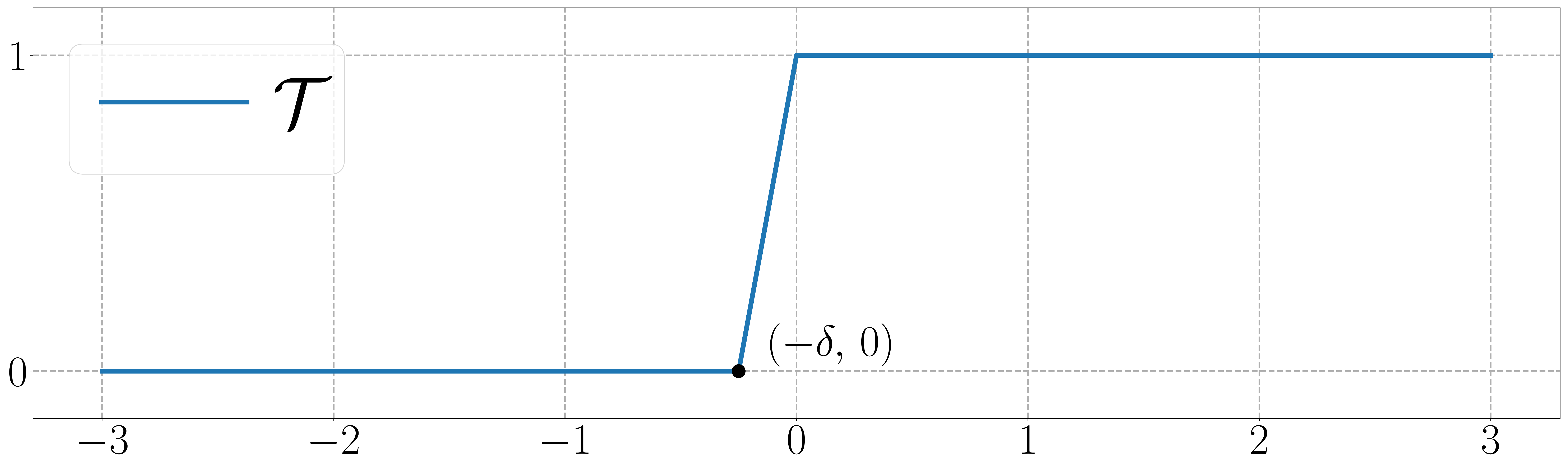}
		\caption{An illustration of $\calT$.}
		\label{fig:calT}
	\end{center}
	
\end{figure}

For any $\theta_1,\theta_2,\dotsc,\theta_{r}\in \{0,1\}$,  set
\begin{equation*}
	\beta_i=\bin 0.\theta_{i}\cdots \theta_r\quad \tn{for $i=1,2,\dotsc,r$.}
\end{equation*}
It is easy to verify that
\begin{equation*}
	\theta_i = \calT\big(\bin 0.\theta_{i}\cdots \theta_r-\tfrac{1}{2}\big)
	= \calT\big(\beta_i-\tfrac{1}{2}\big)\quad \tn{for $i=1,2,\dotsc,r$,}
\end{equation*}
implying
\begin{equation*}
	\beta_{i+1}= 2\beta_i - \theta_i = 2\beta_i- \calT\big(\beta_i-\tfrac{1}{2}\big)\quad \tn{for $i=1,2,\dotsc,r-1$.}
\end{equation*}
By setting  $\beta_{r+1}=2\beta_r-\calT(\beta_r-\tfrac{1}{2})=0$, we have
\begin{equation*}
	\beta_{i+1}= 2\beta_i- \calT\big(\beta_i-\tfrac{1}{2}\big)\quad \tn{for $i=1,2,\dotsc,r$.}
\end{equation*}

Fix $k\in \{0,1,\dotsc,r\}$.
The fact that $xy=\max\{0,\, x+y-1\}=\sigma(x+y-1)$ for any $x,y\in \{0,1\}$ implies 
\begin{equation}
	\label{eq:theta:i:sum:k}
	\begin{split}
		\sum_{i=1}^k \theta_i = 
		\sum_{i=1}^k \theta_i + \sum_{i=k+1}^r 0
		= \sum_{i=1}^{r}\theta_i\cdot\calT(k-i)
		&= \sum_{i=1}^{r}\sigma\Big(\theta_i+\calT(k-i)-1\Big)\\
		&= \sum_{i=1}^{r}\sigma\Big(\calT(\beta_i-\tfrac{1}{2})+\calT(k-i)-1\Big).
	\end{split}
\end{equation}

Define $\bmg:\R\times [0,\infty)^2\to\R^3$ via
\begin{equation*}
	\bmg(x_1,x_2,x_3)
	\coloneqq \Bigg(x_1-1,\quad  2x_2 -\calT(x_2-\tfrac{1}{2}),\quad 
	\sigma\Big(\calT(x_2-\tfrac{1}{2})+\calT(x_1)-1\Big)+x_3
	\Bigg)
\end{equation*}
for any $(x_1,x_2,x_3)\in \R\times [0,\infty)^2$.

For $\ell=1,2,\dotsc,r+1$, we set
\begin{equation*}
	\bmxi_\ell=\Bigg(k-\ell,\quad  \beta_\ell,\quad   \sum_{i=1}^{\ell-1}\sigma\Big(\calT(\beta_i-\tfrac{1}{2})+\calT(k-i)-1\Big)\Bigg)\in \R\times [0,\infty)^2.
\end{equation*}

Then, for $\ell=1,2,\dotsc,r$, we have
\begin{equation*}
	\begin{split}
		\bmg(\bmxi_\ell)
		&=\bmg\Bigg(k-\ell,\quad  \beta_\ell,\quad   \sum_{i=1}^{\ell-1}\sigma\Big(\calT(\beta_i-\tfrac{1}{2})+\calT(k-i)-1\Big)\Bigg)\\
		&=	\Bigg((k-\ell)-1, \quad
		2\beta_\ell-\calT(\beta_\ell-\tfrac{1}{2}), \quad
		\sigma\Big(\calT(\beta_\ell-\tfrac{1}{2})+\calT(k-\ell)-1\Big)
		+
		\sum_{i=1}^{\ell-1}\sigma\Big(\calT(\beta_i-\tfrac{1}{2})+\calT(k-i)-1\Big)
		\Bigg)\\
		&=	\Bigg(k-(\ell+1), \quad
		\beta_{\ell+1}, \quad
		\sum_{i=1}^{(\ell+1)-1}\sigma\Big(\calT(\beta_i-\tfrac{1}{2})+\calT(k-i)-1\Big)\Bigg)=\bmxi_{\ell+1},
	\end{split}
\end{equation*}
implying $\bmxi_{r+1}=\bmg(\bmxi_{r})=\cdots=\bmg^{\circ r}(\bmxi_1)$. 

Define $\bmcalL_1:\R^2 \to\R^3$ via $\bmcalL_1(x_1,x_2)\coloneqq(x_1-1,x_2,0)$ and $\calL_2:\R^3\to\R$ via $\calL_2(x_1,x_2,x_3)\coloneqq   x_3$ for any $x_1,x_2,x_3\in\R$. Then, 
we have 
\begin{equation*}
	\bmxi_1=\bigg(k-1,\ \beta_1,\ \sum_{i=1}^{0}\sigma\Big(\calT(\beta_i-\tfrac{1}{2})+\calT(k-i)-1\Big)\bigg)=\Big(k-1,\, \bin 0.\theta_1\cdots\theta_r,\, 0\Big)=\bmcalL_1\big(k,\,\bin 
 0.\theta_1\cdots\theta_r\big),
\end{equation*}
from which we deduce
\begin{equation*}
	\begin{split}
		\calL_2\circ \bmg^{\circ r}\circ \bmcalL_1(k,\, \bin 
 0.\theta_1\cdots\theta_r)
		&= \calL_2\circ \bmg^{\circ r}(\bmxi_1) 
		= \calL_2(\bmxi_{r+1})=\big[\bmxi_{r+1}\big]_{[3]}\\
		&= \sum_{i=1}^{(r+1)-1}\sigma\Big(\calT(\beta_i-\tfrac{1}{2})+\calT(k-i)-1\Big)=\sum_{i=1}^k\theta_i,
	\end{split}
\end{equation*}
where the last equality comes from Equation~\eqref{eq:theta:i:sum:k}. Furthermore, $\bmg$, $\bmcalL_1$, and $\calL_2$ are independent of $\theta_1,\cdots,\theta_r$.

It remains to show $\bmg$ can be realized by a ReLU network with the desired size.

\begin{figure}[ht]
	
	\begin{center}
		\includegraphics[width=0.96\columnwidth]{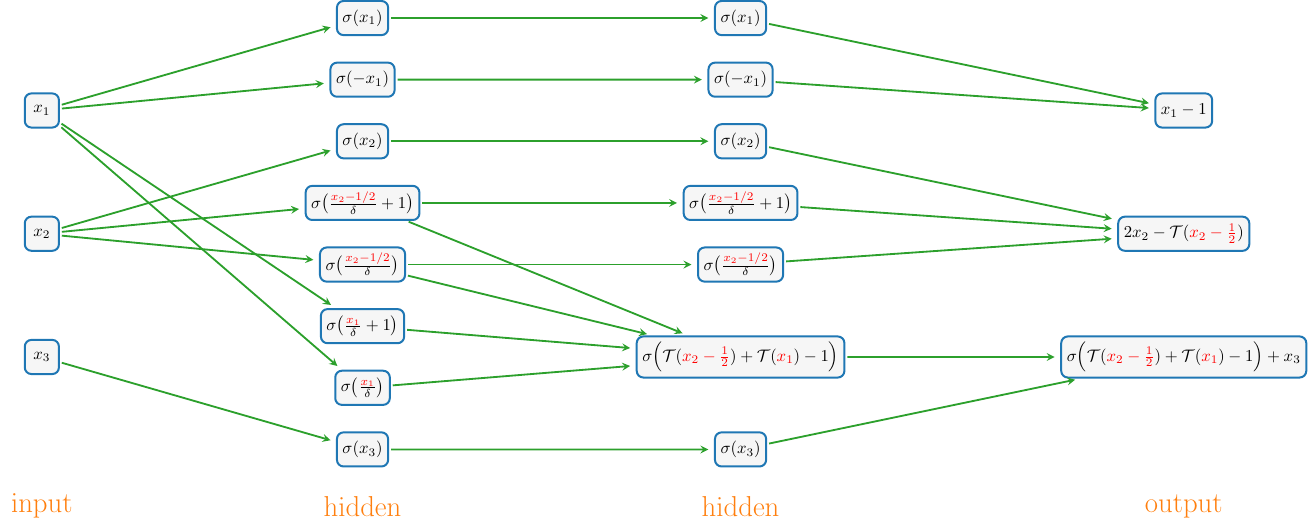}
		\caption{An illustration of the ReLU network realizing $\bmg(x_1,x_2,x_3)$ for  $(x_1,x_2,x_3)\in \R\times [0,\infty)^2$ based on $\calT(t)=\sigma\big(\tfrac{t}{\delta}+1\big)-\sigma\big(\tfrac{t}{\delta}\big)$ and  $t=\sigma(t)-\sigma(-t)$ for any $t\in\R$.}
		\label{fig:g:bits:extraction}
	\end{center}
	
\end{figure}

As shown in Figure~\ref{fig:g:bits:extraction},
$\bmg(x_1,x_2,x_3)$ can be realized by a ReLU network of width $8$  and depth $2$ for   $(x_1,x_2,x_3)\in \R\times [0,\infty)^2$. That means, $\bmg\in \nn{8}{2}{3}{3}$.
So we finish the proof of Lemma~\ref{lem:bit:extraction}.
\end{proof}


%


\section{Proof of Proposition~\ref{prop:two:blocks:three:affine} }
\label{sec:proof:prop:two:blocks:three:affine}



The objective of this section is to prove Proposition~\ref{prop:two:blocks:three:affine}. To facilitate the proof, we introduce the following lemma.

\begin{lemma}
	\label{lem:two:blocks}
	For any $A>0$, $\bmg_i\in \nn[\big]{N_i}{L_i}{d}{d}$ and $r_i\in \N^+$ for $i=1,2$,
	there exists 
	\begin{equation*}
		\bmPhi\in \nn[\Big]{N_1+N_2+2d}{\max\{L_1,L_2\}+1}{d+1}{d+1}
	\end{equation*} 
	such that
	\begin{equation*}
		\bmPhi^{\circ (r_1+r_2)}(\bmx,\,2r_1+1)=\Big(\bmg_2^{\circ r_2}\circ \bmg_1^{\circ r_1}(\bmx),\  -2r_2+1\Big)
	\end{equation*}
	for any $\bmx \in [-A,A]^d$. 
\end{lemma}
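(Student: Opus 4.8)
The plan is to use the extra $(d{+}1)$-th coordinate of $\bmPhi$ as a discrete \emph{timer} that counts the steps and, at each application, decides whether to apply $\bmg_1$ or $\bmg_2$. Concretely I would design $\bmPhi\colon\R^{d+1}\to\R^{d+1}$ so that
\[
\bmPhi(\bmx,t)=\big(\bm{s}(\bmx,t),\ t-2\big),\qquad
\bm{s}(\bmx,t)=\begin{cases}\bmg_1(\bmx)&\tn{if }t\ge 2,\\ \bmg_2(\bmx)&\tn{if }t\le 1,\end{cases}
\]
so that the timer starts at $2r_1+1$ and drops by $2$ each step. The input timer at the $k$-th application is $2r_1+3-2k$, which is $\ge 3$ for $k=1,\dots,r_1$ and $\le 1$ for $k=r_1+1,\dots,r_1+r_2$ (the odd-integer values never land in the gap $(1,3)$). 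A straightforward induction on $k$ then gives $\bmPhi^{\circ k}(\bmx,2r_1+1)=\big(\bmg_1^{\circ k}(\bmx),\,2r_1+1-2k\big)$ for $k\le r_1$ and $\bmPhi^{\circ k}(\bmx,2r_1+1)=\big(\bmg_2^{\circ(k-r_1)}\circ\bmg_1^{\circ r_1}(\bmx),\,2r_1+1-2k\big)$ for $r_1\le k\le r_1+r_2$; at $k=r_1+r_2$ the timer reads $2r_1+1-2(r_1+r_2)=-2r_2+1$, which is exactly the asserted identity.

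Next I would realize $\bmPhi$ as a ReLU network of the stated size. The idea is to compute $\bmg_1(\bmx)$ and $\bmg_2(\bmx)$ \emph{in parallel}, using $N_1$ and $N_2$ neurons per layer respectively over $\max\{L_1,L_2\}$ hidden layers (padding the shorter path to the common depth), while simultaneously carrying the timer $t$ and a $\{0,1\}$-valued gate $\gamma(t)=\sigma(t-2)-\sigma(t-3)$, which equals $1$ for $t\ge 3$ and $0$ for $t\le 2$ and is computable from $t$ with one ReLU layer. A single extra hidden layer (the source of the ``$+1$'' in the depth) then performs the selection and outputs $t-2$ as well. To make the ReLU selection exact, I would first record that the finitely many iterates $\bmg_1^{\circ k}(\bmx)$ and $\bmg_2^{\circ j}\circ\bmg_1^{\circ r_1}(\bmx)$, together with $\bmg_1,\bmg_2$ evaluated at all of them, range over a compact set as $\bmx$ varies in $[-A,A]^d$; hence there is a finite constant $B$ bounding every value that enters the selection, and $\bmPhi$ may be built with this fixed $B$.

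For the selection itself I would write $\bm{s}=\bmy_2+\gamma\,(\bmy_1-\bmy_2)$ with $\bmy_i=\bmg_i(\bmx)$, and realize the gated term coordinatewise by the bounded-selection identity
\[
\gamma\,(\bmy_1-\bmy_2)=\tfrac12\Big[\sigma\big((\bmy_1-\bmy_2)+B(2\gamma-1)\big)-\sigma\big(-(\bmy_1-\bmy_2)+B(2\gamma-1)\big)\Big],
\]
valid whenever $\|\bmy_1-\bmy_2\|_\infty\le B$ and $\gamma\in\{0,1\}$, which consumes $2d$ ReLU neurons in the selection layer while $\bmy_2$ is passed into the final affine map. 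Checking that the carried timer, gate, and these $2d$ selection neurons all fit within width $N_1+N_2+2d$, and that padding plus the selection layer keeps the depth at $\max\{L_1,L_2\}+1$, is the bookkeeping that completes the proof; the main obstacle is precisely this realization step, namely exhibiting the conditional ``apply $\bmg_1$ or $\bmg_2$ according to $t$'' as an honest ReLU network within the tight width and depth budget, which is why the boundedness constant $B$ and the clean timer separation (step size $2$, so that the threshold never sees an ambiguous value) are essential.
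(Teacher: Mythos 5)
Your overall strategy coincides with the paper's: a timer in the $(d{+}1)$-th coordinate initialized at $2r_1+1$ and decremented by $2$ at each step, parallel evaluation of $\bmg_1$ and $\bmg_2$ padded to a common depth $\max\{L_1,L_2\}$, one extra selection layer, and a compactness constant bounding all iterates so that the ReLU selection is exact. The timer arithmetic and the induction on $k$ are correct. The gap is precisely in the step you defer as ``bookkeeping'': the selector gadget you propose does not fit the width budget $N_1+N_2+2d$ in general. Your selection layer spends $2d$ neurons on the gated term $\gamma(\bmy_1-\bmy_2)$, but the final affine map also needs $\bmy_2$ and $t$, and neither is an affine function of those $2d$ activations (when $\gamma=0$ the gated neurons output $0$ independently of $\bmy_1-\bmy_2$, so $\bmy_2$ cannot be recovered from them); in a plain feedforward ReLU network there is no skip connection, so passing $\bmy_2$ and $t$ through the selection layer costs at least $d+1$ additional neurons, giving width at least $3d+1$, which exceeds $N_1+N_2+2d$ whenever $N_1+N_2\le d$. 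Likewise your gate $\gamma(t)=\sigma(t-2)-\sigma(t-3)$ must be materialized as two extra neurons in the last parallel hidden layer, on top of a neuron carrying $t$, which can already push that layer past the stated width when $N_1,N_2<d$. So, as written, the width claim does not follow.

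The paper closes exactly this step with a tighter selector (its auxiliary lemma on the binary selector): coordinatewise it uses
\begin{equation*}
	g(u,v,t)=\sigma(u+Mt)+\sigma(v-Mt)-M\sigma(t)-M\sigma(-t),
\end{equation*}
which equals $u$ for $t\ge 1$ and $v$ for $t\le -1$ whenever $|u|,|v|\le M$. This costs $2d$ neurons for all $d$ coordinates plus the two shared neurons $\sigma(t)$ and $\sigma(-t)$, and those same two neurons reproduce $t=\sigma(t)-\sigma(-t)$ for the output, so the selection layer has width exactly $2d+2\le N_1+N_2+2d$ and no gate needs to be carried through the parallel layers. Substituting this gadget for yours (your odd-integer timer values are compatible with the $t\ge 1$ versus $t\le -1$ thresholds after the decrement) repairs the argument and recovers the stated bound.
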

We will prove Proposition~\ref{prop:two:blocks:three:affine} with Lemma~\ref{lem:two:blocks}, which will be proved later in Section~\ref{sec:proof:lem:two:blocks}.

\subsection{Proof of Proposition~\ref{prop:two:blocks:three:affine} with Lemma~\ref{lem:two:blocks}}

Now, let us present the proof of Proposition~\ref{prop:two:blocks:three:affine} by assuming Lemmas~\ref{lem:two:blocks} is true.
\begin{proof}[Proof of Proposition~\ref{prop:two:blocks:three:affine}]
	Set 
	\begin{equation*}
		\tildeA=100(r_1+r_2+1)+ \sup_{\bmx\in [-A,A]^{d_0}}\big\|\bmtildecalL_1(\bmx)\big\|_{\ell^\infty}.
	\end{equation*}
	Since $d\ge \max\{d_1,d_2\}$, we can define $\bmhatg_1:\R^d\to\R^d$ via
	\begin{equation*}
		\bmhatg_1(\bmu,\bmv)\coloneqq \Big(\bmg_1(\bmu),\   \bmzero\Big)\in\R^d\quad\tn{for any $\bmu\in \R^{d_1}$ and $\bmv\in \R^{d-d_1}$}
	\end{equation*}
	and $\bmhatg:\R^d\to\R^d$ via
	\begin{equation*}
		\bmhatg(\bmu,\bmv)\coloneqq \Big(\bmtildecalL_2(\bmu),\   \bmzero\Big)\in\R^d\quad\tn{for any $\bmu\in \R^{d_1}$ and $\bmv\in \R^{d-d_1}$}.
	\end{equation*}
	Clearly, $\bmg_1\in \nn{N_1}{L_1}{d_1}{d_1}$ implies 
	$\bmhatg_1\in \nn{N_1}{L_1}{d}{d}$. Note that
	$\bmtildecalL_2:\R^{d_1}\to\R^{d_2}$ can be represented as 
	\begin{equation*}
		\bmtildecalL_2(\bmu)=\sigma\Big(\bmtildecalL_2(\bmu)\Big)-
		\sigma\Big(-\bmtildecalL_2(\bmu)\Big) \quad \tn{for any $\bmu\in \R^{d_1}$},
	\end{equation*} 
	which means $\bmtildecalL_2$ can be realized by a one-hidden-layer ReLU network of width $2d_2$. Thus,  $\bmhatg\in \nn{2d_2}{1}{d}{d}$.

	By Lemma~\ref{lem:two:blocks}, there exists
	$\bmG_1\in\nn{N_1+2d_2+2d}{L_1+1}{d+1}{d+1}$ such that
	\begin{equation*}
		\bmG_1^{\circ (r_1+1)}(\bmw,\,2r_1+1)=
		\Big(\bmhatg\circ \bmhatg_1^{\circ r_1}(\bmw),\,   -1\Big)
		\quad \tn{for any $\bmw\in [-\tildeA,\tildeA]^{d}$.}
	\end{equation*}
	Define $\bmhatg_2:\R^{d+1}\to\R^{d+1}$ via
	\begin{equation*}
		\bmhatg_2(\bmu,\bmv)\coloneqq \Big(\bmg_2(\bmu),\   \bmzero\Big)\in\R^{d+1}\quad\tn{for any $\bmu\in \R^{d_2}$ and $\bmv\in \R^{d+1-d_2}$}.
	\end{equation*}
	Clearly, $\bmg_2\in \nn{N_2}{L_2}{d_2}{d_2}$ implies 
	$\bmhatg_2\in \nn{N_2}{L_2}{d+1}{d+1}$.
	
	By Lemma~\ref{lem:two:blocks}, there exists
	\begin{equation*}
		\begin{split}
			\bmg&\in\nn[\Big]{(N_1+2d_2+2d)+N_2+2(d+1)}{\max\{L_1+1,L_2\}+1}{d+2}{d+2}\\
			& \subseteq  \nn[\Big]{N_1+N_2+6d+2}{\max\{L_1+2,L_2+1\}}{d+2}{d+2}
		\end{split}
	\end{equation*}
	such that
	\begin{equation*}
		\bmg^{\circ (r_1+1+r_2)}\Big(\bmz,\  2(r_1+1)+1\Big)=
		\Big(\bmhatg_2^{\circ r_2}\circ \bmG_1^{\circ (r_1+1)}(\bmz),\ -2r_2+1\Big)
		\quad \tn{for any $\bmz\in [-\tildeA,\tildeA]^{d+1}$,}
	\end{equation*}
	implying 
	\begin{equation*}
		\Big[\bmg^{\circ (r_1+r_2+1)}\Big(\bmz,\  2r_1+3\Big)\Big]_{[1:d+1]}=
		\bmhatg_2^{\circ r_2}\circ \bmG_1^{\circ (r_1+1)}(\bmz)
		\quad \tn{for any $\bmz\in [-\tildeA,\tildeA]^{d+1}$.}
	\end{equation*}

	Therefore, for any $\bmy\in [-\tildeA,\tildeA]^{d_1}$,
	we have
	\begin{equation*}
		\begin{split}
			\bigg[\bmg^{\circ (r_1+r_2+1)}\Big(
			\underbrace{\bmy,\   \bmzero,\  2r_1+1}_{\in [-\tildeA,\tildeA]^{d+1}},
			\  2r_1+3\Big)\bigg]_{[1:d+1]}
			&=
			\bmhatg_2^{\circ r_2}\circ \bmG_1^{\circ (r_1+1)}\big(
			\bmy,\   \bmzero,\  2r_1+1\big)\\
			& = \bmhatg_2^{\circ r_2}\Big(\bmhatg \circ \bmhatg_1^{\circ r_1}(\bmy,\, \bmzero),\    -1\Big) 
			= \bmhatg_2^{\circ r_2}\bigg(\bmhatg \Big(\bmg_1^{\circ r_1}(\bmy),\, \bmzero\Big),\    -1\bigg)\\
			&  = \bmhatg_2^{\circ r_2} \Big(\bmtildecalL_2\circ\bmg_1^{\circ r_1}(\bmy),\   \bmzero,\  -1\Big)  =  \Big(\bmg_2^{\circ r_2}\circ\bmtildecalL_2\circ\bmg_1^{\circ r_1}(\bmy),\   \bmzero\Big),\\
		\end{split}
	\end{equation*}
	implying
	\begin{equation*}
		\begin{split}
			\bigg[\bmg^{\circ (r_1+r_2+1)}\Big(
			\bmy,\   \bmzero,\  2r_1+1,
			\  2r_1+3\Big)\bigg]_{[1:d_2]}
			= \bmg_2^{\circ r_2}\circ\bmtildecalL_2\circ\bmg_1^{\circ r_1}(\bmy).
		\end{split}
	\end{equation*}
	
	Define $\bmcalL_1:\R^{d_0}\to \R^{d+2}$ via
	\begin{equation*}
		\bmcalL_1(\bmx)\coloneqq \Big(
		\bmtildecalL_1(\bmx),\   \bmzero,\  2r_1+1,
		\  2r_1+3\Big)\in\R^{d+2} \quad \tn{for any $\bmx\in \R^{d_0}$}
	\end{equation*}
	and 
	$\bmcalL_2:\R^{d+2}\to \R^{d_3}$ via
	\begin{equation*}
		\bmcalL_2(\bmu,\bmv)\coloneqq 
		\bmtildecalL_3(\bmu) \quad \tn{for any $\bmu\in \R^{d_2}$ and $\bmv\in \R^{d+2-d_2}$}.
	\end{equation*}
	Then, for any $\bmx\in [-A,A]^{d_0}$, we have $\bmy=\bmtildecalL_1(\bmx)\in [-\tildeA,\tildeA]^{d_1}$, implying
	\begin{equation*}
		\begin{split}
			\bmcalL_2\circ \bmg^{\circ (r_1+r_2+1)}\circ \bmcalL_1(\bmx)
			& = \bmcalL_2\circ \bmg^{\circ (r_1+r_2+1)}\Big(
			\bmtildecalL_1(\bmx),\   \bmzero,\  2r_1+1,
			\  2r_1+3\Big)\\
			& = \bmcalL_2\bigg(\bmg^{\circ (r_1+r_2+1)}\Big(
			\bmy,\   \bmzero,\  2r_1+1,
			\  2r_1+3\Big)\bigg)\\
			& = \bmtildecalL_3\bigg(\Big[\bmg^{\circ (r_1+r_2+1)}\Big(
			\bmy,\   \bmzero,\  2r_1+1,
			\  2r_1+3\Big)\Big]_{[1:d_2]}\bigg)\\
			&= \bmtildecalL_3\Big(\bmg_2^{\circ r_2}\circ\bmtildecalL_2\circ\bmg_1^{\circ r_1}(\bmy)\Big)
			=\bmtildecalL_3\circ \bmg_2^{\circ r_2}\circ\bmtildecalL_2\circ\bmg_1^{\circ r_1}\circ \bmtildecalL_1(\bmx).
		\end{split}
	\end{equation*}
	So we finish the proof of Proposition~\ref{prop:two:blocks:three:affine}. 
\end{proof}


\subsection{Proof of Lemma~\ref{lem:two:blocks}}
\label{sec:proof:lem:two:blocks}


The proof of Lemma~\ref{lem:two:blocks} will be provided after establishing an auxiliary lemma, namely Lemma~\ref{lem:binary:selector} below. As we shall see later, the auxiliary lemma plays a crucial role in the proof of Lemma~\ref{lem:two:blocks}.

\begin{lemma}
	\label{lem:binary:selector}
	For any $M>0$ and $d\in \N^+$, there exists $\bmphi\in \nn{2d+2}{1}{2d+1}{d+1}$
	such that
	\begin{equation*}
		\bmphi(\bmx,\bmy,t)=(\bmz,t) \quad \tn{with} \quad
		\bmz=\begin{cases}
			\bmx \quad & \tn{if}\  t\ge 1   \\
			\bmy \quad  & \tn{if}\  t\le -1
		\end{cases}
	\end{equation*}
	for any $\bmx,\bmy \in [-M,M]^d$ and $t\in (-\infty,-1]\cup[1,\infty)$.
\end{lemma}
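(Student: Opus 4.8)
The plan is to build $\bmphi$ explicitly as a single hidden layer of ReLU units, exploiting the two structural facts at our disposal: that $t$ is bounded away from the interval $(-1,1)$, and that every entry of $\bmx,\bmy$ lies in $[-M,M]$. Fix a constant $C\ge 2M$. The idea is to gate each coordinate by the sign of $t$. For a value $a\in[-M,M]$ the shifted quantity $a+M$ is nonnegative, so the one-sided combination $\sigma(a+M+Ct)-\sigma(Ct)$ equals $a+M$ when $t\ge 1$ (both arguments are then positive and the $Ct$ terms cancel) and equals $0$ when $t\le -1$ (both arguments are then $\le 0$, using $C\ge 2M$). Symmetrically, $\sigma(a+M-Ct)-\sigma(-Ct)$ equals $0$ when $t\ge 1$ and $a+M$ when $t\le -1$.

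Concretely, I would take the hidden layer to consist of the $2d$ neurons $\sigma(x_i+M+Ct)$ and $\sigma(y_i+M-Ct)$ for $i=1,\dots,d$, together with the two shared neurons $\sigma(Ct)$ and $\sigma(-Ct)$, for a total width of $2d+2$. The coordinate outputs are the affine readouts
\[
z_i=\sigma(x_i+M+Ct)+\sigma(y_i+M-Ct)-\sigma(Ct)-\sigma(-Ct)-M,\qquad i=1,\dots,d.
\]
A direct case check on the two regions $t\ge 1$ and $t\le -1$ then yields $z_i=x_i$ and $z_i=y_i$ respectively, which is exactly the claimed selection.

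The main subtlety is the tight width budget. A naive implementation would spend $2d$ neurons on the coordinates, two more on the gates $\sigma(\pm Ct)$, and yet two more to carry $t$ through the hidden layer, i.e.\ $2d+4$ neurons, overshooting the target $2d+2$ by two. The key observation that makes the count work is that the gate neurons already encode $t$ exactly, since $\sigma(Ct)-\sigma(-Ct)=Ct$ for every $t\in\R$; hence the last output coordinate can be produced by the readout $t=\tfrac1C\big(\sigma(Ct)-\sigma(-Ct)\big)$ with no additional neurons. This brings the width down to precisely $2d+2$ and keeps the depth equal to $1$, matching the required class $\nn{2d+2}{1}{2d+1}{d+1}$.

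Finally I would record the routine verifications: that each hidden unit is an affine map followed by $\sigma$ and each readout is affine; that the choice $C\ge 2M$ indeed forces the ``off'' arguments to be nonpositive on the relevant region (so the corresponding ReLUs vanish) while keeping the ``on'' arguments nonnegative (so the $Ct$ contributions cancel cleanly); and that the construction is uniform in $\bmx,\bmy\in[-M,M]^d$ and $t\in(-\infty,-1]\cup[1,\infty)$. None of these steps is delicate; the only genuine content is the gating identity together with the reuse of the gate neurons to transmit $t$.
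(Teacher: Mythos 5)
Your construction is correct and is essentially the same as the paper's: a single hidden layer that gates each coordinate by a large multiple of $t$ inside the ReLU arguments, with the two shared neurons $\sigma(\pm Ct)$ doing double duty as gates and as the means of passing $t$ to the output (the paper uses $g(u,v,t)=\sigma(u+Mt)+\sigma(v-Mt)-M\sigma(t)-M\sigma(-t)$, which is your formula with the shift by $M$ and the slack constant $C$ absorbed). The differences are purely cosmetic, and your width count of $2d+2$ matches the paper's.
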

\begin{proof}
	The key idea for proving this lemma is to use a ReLU network to realize a selector function $g:\R^3\to\R$ such that
	\begin{equation*}
		g(u,v,t)=\begin{cases}
			u \quad & \tn{if}\  t\ge 1   \\
			v \quad  & \tn{if}\  t\le -1
		\end{cases}
	\end{equation*} 
	for any $u,v\in [-M,M]$ and $t\in (-\infty,-1]\cup[1,\infty)$.
	To this end, we define
	\begin{equation}
		\label{eq:def:g:uvt}
		g(u,v,t)\coloneqq\sigma\big(u+Mt\big)+\sigma\big(v-Mt\big)-M\sigma(t)-M\sigma(-t).
	\end{equation}
	Let us verify that $g$ meets the requirements.
	
	In the case of $t\ge 1$, we have $u+Mt\ge 0$ and $v-Mt\le 0$ for any $u,v\in [-M,M]$, implying
	\begin{equation*}
		\begin{split}
			g(u,v,t)
			&=\sigma\big(u+Mt\big)+\sigma\big(v-Mt\big)-M\sigma(t)-M\sigma(-t)\\
			&= (u+Mt) + 0 - Mt -0 =u.
		\end{split}
	\end{equation*}

	In the case of $t\le -1$, we have $u+Mt\le 0$ and $v-Mt\ge 0$ for any $u,v\in [-M,M]$, implying
	\begin{equation*}
		\begin{split}
			g(u,v,t)
			&=\sigma\big(u+Mt\big)+\sigma\big(v-Mt\big)-M\sigma(t)-M\sigma(-t)\\
			&= 0 +  (v- Mt) -0  - M\cdot(-t)=v.
		\end{split}
	\end{equation*}

	Based on $g$, we can design a ReLU network to realize $\bmphi:\R^{2d+1}\to \R^{d+1}$ that maps $(\bmx,\bmy,t)$ to 
	\begin{equation*}
		(\bmz,t)=(z_1,\dotsc,z_d,t)=\Big( g(x_1,y_1,t),\ \dotsc,\ g(x_d,y_d,t),\quad   t\Big)
	\end{equation*}
	for any $\bmx=(x_1,\cdots,x_d),\bmy=(y_1,\cdots,y_d) \in [-M,M]^d$ and $t\in (-\infty,-1]\cup[1,\infty)$.
	
	\begin{figure}[htbp!]
 
		\centering
		\includegraphics[width=0.667\linewidth]{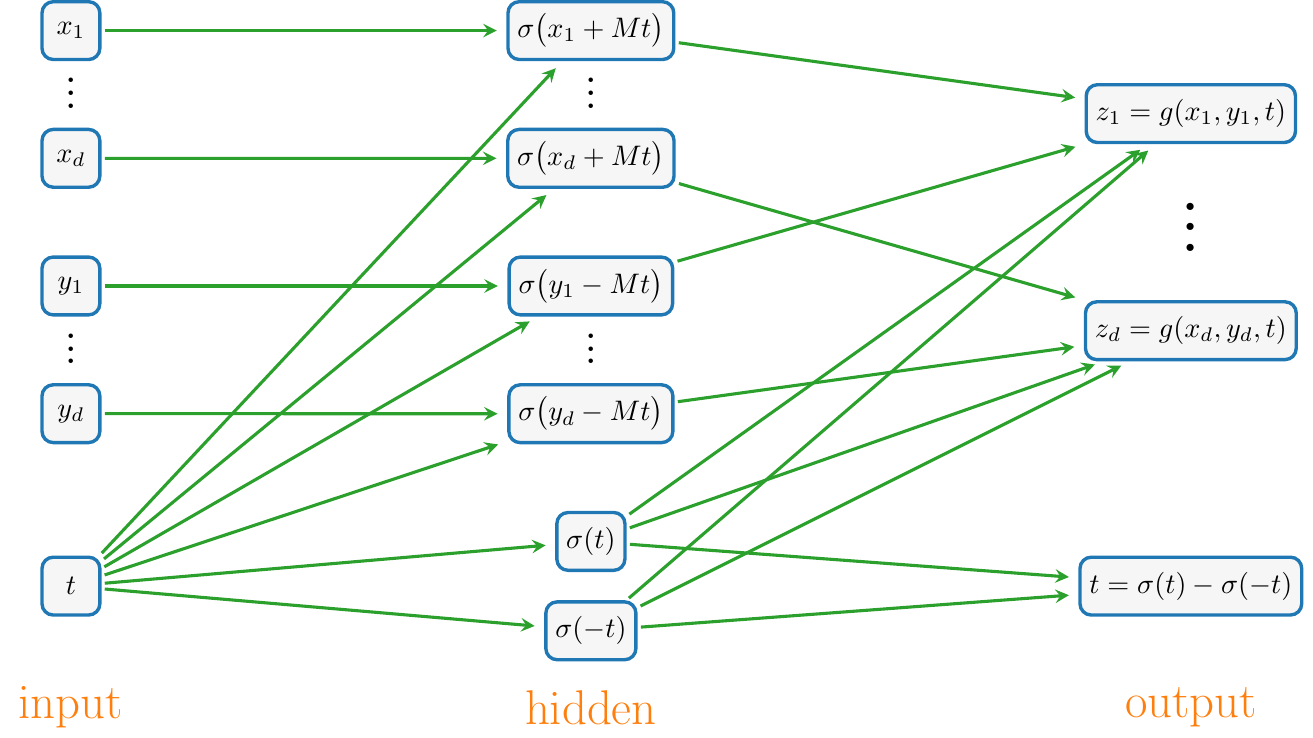}
		\caption{An illustration of the target ReLU network realizing $\bmphi$ based on Equation~\eqref{eq:def:g:uvt}.  }
		\label{fig:selector}
  
	\end{figure}
	
	We present the ReLU network realizing $\bmphi$ in Figure~\ref{fig:selector}. Clearly, 
	$\bmphi\in \nn{2d+2}{1}{2d+1}{d+1}$. So we finish the proof of Lemma~\ref{lem:binary:selector}.
	%
	%
\end{proof}

Equipped with Lemma~\ref{lem:binary:selector}, we are prepared to demonstrate the proof of Lemma~\ref{lem:two:blocks}.
\begin{proof}[Proof of Lemma~\ref{lem:two:blocks}]
	We will construct $\bmPhi=\bmpsi\circ \bmG$ via two steps below.
	\begin{itemize}
		\item First, we construct $\bmG:\R^{d+1}\to\R^{2d+1}$  by stacking $\bmg_1$, $\bmg_2$, and $g_0$, where $g_0(t)=t-2$.
		\item Next, we will apply Lemma~\ref{lem:binary:selector} to construct a selector function $\bmpsi:\R^{2d+1}\to\R^{d+1}$, determining which sub-block ($\bmg_1$ or $\bmg_2$) in $\bmG$ is used in each composition.
	\end{itemize} 
	More details can be found below.

	\mystep{1}{Constructing $\bmG$.}	
	
			Recall the $\ell^\infty$-norm of a vector $\bma=(a_1,a_2,\dotsc,a_d)\in \R^d$ is given by
	\begin{equation*}
		\|\bma\|_{\ell^\infty}=\|\bma\|_{\infty}\coloneqq \max\big\{|a_i|:i=1,2,\dotsc,d\big\}.
	\end{equation*}
	
	Set 	
		\begin{equation*}
		M=\max\{M_k:k=0,1,\dotsc,r_1+r_2\},
	\end{equation*}
	where $M_0=\max\{A,\, 100(r_1+r_2+1)\}$ and $M_k$ is given by
	\begin{equation*}
		M_k= \sup\Big\{\big\|\bmh_{k}\circ\cdots\circ\bmh_1(\bmx)\big\|_{\ell^\infty}:\bmx\in [-A,A]^d,\quad   \bmh_1,\dotsc,\bmh_k\in \{\bmg_1,\bmg_2\}\Big\}
	\end{equation*}
	for $k=1,2,\dotsc,r_1+r_2$.
	
	Define $\bmG:[-M,M]^{d+1}\to [-M,M]^{2d+1}$ via
	\begin{equation*}
		\bmG(\bmx,t)\coloneqq \Big(\bmg_1(\bmx),\   \bmg_2(\bmx),\   g_0(t)\Big)\quad \tn{for any $(\bmx,t)\in [-M,M]^{d+1}$,}
	\end{equation*}
	where $g_0(t)=t-2$. 
	Recall that $\bmg_i\in \nn[\big]{N_i}{L_i}{d}{d}$ for $i=1,2$.
	To make $\bmg_1$, $\bmg_2$,  and $g_0$ have the same number of hidden layers, we need to manually add some ``trifling'' layers.

		
		Then, by setting $L=\max\{L_1,L_2\}$ and 

		we have
		\begin{equation*}
			\bmg_1(\bmx)=\sigma^{\circ (L-L_1)}\circ (\bmg_1+M) (\bmx) -M,
		\end{equation*}
		\begin{equation*}
			\bmg_2(\bmx)=\sigma^{\circ (L-L_2)}\circ (\bmg_2+M) (\bmx) -M,
		\end{equation*}
		and 
		\begin{equation*}
			g_0(t)=\sigma^{\circ L}\circ g_0(t+M+2)-M-2
		\end{equation*}
		for any $(\bmx,t)\in [-M,M]^{d+1}$.
		Then, 
		$\bmg_i\in \nn[\big]{\max\{N_i,d\}}{L}{d}{d}$ for $i=1,2$ and 
		 $g_0\in \nnOneD[\big]{1}{L}{\R}{\R}$.
		It follows that 
		\begin{equation*}
			\bmG\in \nn[\Big]{\max\{N_1,d\}+\max\{N_2,d\}+1}{L=\max\{L_1,L_2\}}{d+1}{2d+1}.
		\end{equation*}
		
		\mystep{2}{Constructing $\bmpsi$.}
		Next, let construct a selector function $\bmpsi$ to ``select'' $\bmg_1$ or $\bmg_2$ in $\bmG$.

		By Lemma~\ref{lem:binary:selector}, there exists 
		\begin{equation*}
			\bmpsi\in \nn{2d+2}{1}{2d+1}{d+1}
		\end{equation*}
		such that
		\begin{equation}
			\label{eq:bmphi0:def}
			\bmpsi(\bmu,\bmv,t)=(\bmw,t)\quad \tn{with}\quad 
			\bmw=\begin{cases}
				\bmu \quad & \tn{if}\  t\ge 1  \\
				\bmv \quad  & \tn{if}\  t\le -1
			\end{cases}
		\end{equation}
		for any $\bmu,\bmv \in [-M,M]^d$ and $t\in (-\infty,-1]\cup[1,\infty)$.
		Then, we can define the desired $\bmPhi$ via $\bmPhi\coloneqq \bmpsi\circ \bmG$. Clearly, 
		$\bmG\in \nn[\big]{\max\{N_1,d\}+\max\{N_2,d\}+1}{\max\{L_1,L_2\}}{d+1}{2d+1}$ and  $\bmpsi\in \nn{2d+2}{1}{2d+1}{d+1}$
		imply
		\begin{equation*}
			\begin{split}
				\bmPhi=\bmpsi\circ\bmG
				&\in \nn[\bigg]{\max\Big\{\max\{N_1,d\}+\max\{N_2,d\}+1,\  2d+2\Big\}}{\max\{L_1,L_2\}+1}{d+1}{d+1}\\
				&\subseteq \nn[\Big]{N_1+N_2+2d}{\max\{L_1,L_2\}+1}{d+1}{d+1}.
			\end{split}
		\end{equation*}
		It remains to verify that
		\begin{equation*}
			\bmPhi^{\circ (r_1+r_2)}(\bmx,\,2r_1+1)=\Big(\bmg_2^{\circ r_2}\circ \bmg_1^{\circ r_1}(\bmx),\  -2r_2+1\Big)\quad \tn{for any $\bmx\in [-A,A]^d$.}
		\end{equation*}
		
		Fix $\bmx\in [-A,A]^d$
		and we can write
		\begin{equation*}
			\big(\bmxi_k,\, t_k\big)=\bmPhi^{\circ k}(\bmx,\,2r_1+1)\quad \tn{with $\bmxi_k\in \R^d$ \quad for  $k=0,1,\dotsc,r_1+r_2$},
		\end{equation*}
  where $\bmPhi^{\circ 0}$ means the identity map.   
  
		Observe that, for $k=1,2,\dotsc,r_1+r_2$,
		\begin{equation*}
			\begin{split}
				\big(\bmxi_k,\, t_k\big)
    &=
    \bmPhi^{\circ k}(\bmx,\,2r_1+1)=
    \bmPhi\circ\bmPhi^{\circ (k-1)}(\bmx,\,2r_1+1)\\
    &=\bmPhi(\bmxi_{k-1},t_{k-1})
				=\bmpsi\circ\bmG(\bmxi_{k-1},t_{k-1})
				=\bmpsi\Big(\bmg_1(\bmxi_{k-1}),\,\bmg_2(\bmxi_{k-1}),\,g_0(t_{k-1})\Big).
			\end{split}
		\end{equation*}		
 Then, by Equation~\eqref{eq:bmphi0:def}, it is easy to verify that
   $t_k=g_0(t_{k-1})=t_{k-1}-2$ and 
  $\bmg_1(\bmxi_{k-1}),\,\bmg_2(\bmxi_{k-1})\in [-M,M]^d$ 
   for $k=1,2,\cdots,r_1+r_2$,
 from which we deduce 
 \begin{equation*}
     t_k=t_0-2k=2r_1+1-2k=2(r_1-k)+1
 \end{equation*}
 and 
		\begin{equation*}
			\bmxi_{k}=\begin{cases}
				\bmg_1(\bmxi_{k-1}) &  \tn{if} \   t_k=g_0(t_{k-1})\ge 1\\ 
				\bmg_2(\bmxi_{k-1}) &  \tn{if} \   t_k=g_0(t_{k-1})\le -1\\ 
			\end{cases}
		\end{equation*}
		for $k=1,2,\dotsc,r_1+r_2$.

		Moreover, for $k=1,2,\dotsc,r_1$, we have $t_k=2(r_1-k)+1\ge 1$ and hence $\bmxi_k=\bmg_1(\bmxi_{k-1})$, implying
  		\begin{equation*}
			\begin{split}
				\bmxi_{r_1}=\bmg_1(\bmxi_{r_1-1})=\cdots=\bmg_1^{\circ r_1}(\bmxi_0)=\bmg_1^{\circ r_1}(\bmx).
			\end{split}
		\end{equation*}
		
		For $k=r_1+1,r_1+2,\dotsc,r_1+r_2$, we have $t_k=2(r_1-k)+1\le -1$ and hence $\bmxi_k=\bmg_2(\bmxi_{k-1})$, implying
		\begin{equation*}
			\bmxi_{r_1+r_2}=\bmg_2(\bmxi_{r_1+r_2-1})=\cdots=\bmg_2^{\circ r_2}(\bmxi_{r_1})
			=\bmg_2^{\circ r_2}\big(\bmg_1^{\circ r_1}(\bmx)\big)
			=\bmg_2^{\circ r_2}\circ \bmg_1^{\circ r_1}(\bmx).
		\end{equation*}
		Therefore, we have
		\begin{equation*}
			\Big[\bmPhi^{\circ (r_1+r_2)}(\bmx,\,2r_1+1)\Big]_{[1:d]}=\bmg_2^{\circ r_2}\circ \bmg_1^{\circ r_1}(\bmx)
		\end{equation*} 
	and
	\begin{equation*}
		\Big[\bmPhi^{\circ (r_1+r_2)}(\bmx,\,2r_1+1)\Big]_{[d+1]}=t_{r_1+r_2}=\big(r_1-(r_1+r_2)+1\big)=-2r_2+1,
	\end{equation*}
	from which we deduce
	\begin{equation*}
		\bmPhi^{\circ (r_1+r_2)}(\bmx,\,2r_1+1)=\Big(\bmg_2^{\circ r_2}\circ \bmg_1^{\circ r_1}(\bmx),\  -2r_2+1\Big).
	\end{equation*}
	Thus, we complete the proof of Lemma~\ref{lem:two:blocks}.
	\end{proof}

\end{document}
